\documentclass[11pt]{article}

\usepackage[margin=1in]{geometry}
\usepackage{amsmath,amssymb,amsthm,mathtools,bm}
\usepackage[T1]{fontenc}
\usepackage{lmodern}
\usepackage{microtype}
\usepackage{hyperref}
\usepackage{enumitem}
\usepackage{graphicx}

\hypersetup{
  colorlinks=true,
  linkcolor=blue,
  citecolor=blue,
  urlcolor=blue
}

\newtheorem{theorem}{Theorem}[section]
\newtheorem{proposition}[theorem]{Proposition}
\newtheorem{lemma}[theorem]{Lemma}
\newtheorem{corollary}[theorem]{Corollary}
\newtheorem{definition}[theorem]{Definition}
\theoremstyle{remark}
\newtheorem{remark}[theorem]{Remark}

\newcommand{\R}{\mathbb{R}}
\newcommand{\E}{\mathbb{E}}
\newcommand{\N}{\mathcal{N}}
\newcommand{\dd}{\,\mathrm{d}}
\newcommand{\grad}{\nabla}
\newcommand{\diver}{\nabla\!\cdot}
\newcommand{\lap}{\Delta}

\title{A Tutorial on Diffusion Theory: From Differential Equations to Diffusion Models}
\author{
Jiayi Fu, \qquad Yuxia Wang \\
INSAIT, Sofia University ``St.\ Kliment Ohridski'', Bulgaria\\
\quad
\texttt{jiayi.fu@insait.ai}, \qquad
\texttt{yuxia.wang@insait.ai}
}
\date{}

\begin{document}

\maketitle

\begin{abstract}
Diffusion models have emerged as a dominant framework for generative modeling, but their mathematical foundations are often presented separately through diffusion probabilistic models, score-based modeling, stochastic differential equations, and numerical sampling methods. We write this tutorial to provide a unified and self-contained account of these viewpoints from the perspective of differential equations. Starting from a conditional Gaussian noising process, we derive ordinary differential equation (ODE) and stochastic differential equation (SDE) representations, pass to the corresponding marginal forward dynamics, and then obtain the reverse-time SDE and probability-flow ODE that make generation possible. We show that the central unknown quantity in reverse sampling is the marginal score, explain how score matching becomes the standard denoising objective under a noise-prediction parameterization, and discuss practical reverse-time sampling and guidance. We further place DDPM, DDIM, flow matching, and score-based SDEs in a common framework, and conclude with diffusion language models in continuous embedding space together with a brief discussion of discrete masked-token diffusion. The tutorial is intended as a bridge between the analytical foundations of diffusion processes and the modern generative algorithms built upon them.
\end{abstract}

\tableofcontents

\section*{Introduction}
\addcontentsline{toc}{section}{Introduction}

Diffusion models now occupy a central position in modern generative modeling. Their contemporary development combines the diffusion-probabilistic line initiated by nonequilibrium thermodynamics and latent-variable learning \cite{sohl2015,ho2020ddpm,nichol2021improved,kingma2021vdm} with the score-based line built on score matching and denoising score matching \cite{hyvarinen2005,vincent2011,song2019,song2020improved,song2021sde}. These ideas have supported a broad range of influential generative systems, including methods for large-scale image synthesis, guidance, editing, and latent-space generation \cite{dhariwal2021guided,ho2022cfg,nichol2021glide,meng2021sdedit,rombach2022ldm,saharia2022imagen}.

Despite this success, the literature remains technically fragmented. Reverse-time diffusion theory is often discussed through stochastic calculus and Fokker--Planck equations \cite{anderson1982,oksendal2003,risken1996}; deterministic probability-flow viewpoints are frequently introduced through DDIM and modern ODE solvers \cite{song2021ddim,lu2022dpmsolver,lu2022dpmsolverpp,karras2022edm,salimans2022progressive}; and recent unifying perspectives connect diffusion to neural ODEs and flow matching \cite{chen2018neuralode,lipman2023flowmatching,holderrieth2025introductionflowmatchingdiffusion}. As a result, readers often encounter DDPM, score-based SDEs, reverse ODEs, flow matching, and noise-prediction losses as distinct constructions, even though they are closely related mathematically. This tutorial is written to make those relationships explicit within a single, coherent narrative.

The main body of the tutorial starts from the conditional Gaussian forward process and derives its ODE and SDE representations before passing to the corresponding marginal dynamics. We then show how reverse-time sampling arises from the reverse SDE and the probability-flow ODE, and why the marginal score $\grad \log p_t(x)$ is the central unknown quantity that must be learned. This viewpoint leads naturally to score matching, to the denoising objective used in practical diffusion models, and to a unified interpretation of DDPM and DDIM as discretizations of reverse-time continuous dynamics. We also discuss guided generation and fast sampling from the reverse equations.

A second goal is to position diffusion models within neighboring generative frameworks. Accordingly, we compare the reverse-time presentation used in the main text with the generative-time formulations of flow matching and score-based SDE modeling \cite{lipman2023flowmatching,song2021sde}. We then show how the same continuous-state formalism extends to diffusion language models in embedding space \cite{li2022diffusionlm,strudel2022sed,gong2023diffuseq}, while briefly situating discrete masked-token diffusion models \cite{austin2021structured} outside the main scope of the tutorial. The appendices collect the analytical tools used throughout, including differential operators, Brownian motion under time reversal, continuity and Fokker--Planck equations, Fisher's identity, and the orthogonality argument behind the denoising loss.

\section*{Setup}
\addcontentsline{toc}{section}{Setup}

Let $X_0 \in \R^d$ be a data-valued random variable with density $p_0(x_0)$, and let $t \in [0,1]$ denote forward time. Throughout the main development, we assume that the noising schedules $\alpha_t$ and $\sigma_t$ are differentiable functions of $t$ satisfying
\[
\alpha_0=1,
\quad
\alpha_1=0,
\quad
\sigma_0=0,
\quad
\sigma_1=1,
\qquad
\alpha_t\geq0,\ \sigma_t\geq0\ \text{for all }t\in[0,1],
\]
and that the induced diffusion coefficient is nonnegative:
\[
\frac{\dd}{\dd t}\sigma_t^2 - 2\frac{\dot{\alpha}_t}{\alpha_t}\sigma_t^2 \ge 0.
\]
Typically, $\alpha_t$ decreases while $\sigma_t$ increases, so that the forward process progressively corrupts the data and terminates at Gaussian noise. We then define
\begin{equation}
f_t := \frac{\dot{\alpha}_t}{\alpha_t},
\qquad
g_t^2 := \frac{\dd}{\dd t}\sigma_t^2 - 2\frac{\dot{\alpha}_t}{\alpha_t}\sigma_t^2.
\label{eq:def-f-g}
\end{equation}
The conditional Gaussian forward kernel is
\begin{equation}
p_t(x \mid x_0)
:=
\N(x;\alpha_t x_0,\sigma_t^2 I),
\label{eq:conditional-kernel}
\end{equation}
and the corresponding marginal forward density is
\begin{equation}
p_t(x)
:=
\int_{\R^d} p_t(x \mid x_0)p_0(x_0)\dd x_0.
\label{eq:marginal-forward-density}
\end{equation}

To describe reverse-time dynamics, we introduce the reverse-time variable
\begin{equation}
\tau := 1-t,
\label{eq:def-tau}
\end{equation}
and define the reverse process by
\begin{equation}
Y_\tau := X_{1-\tau}.
\label{eq:def-reverse-process}
\end{equation}
Its density is
\begin{equation}
q_\tau(x):=p_{1-\tau}(x).
\label{eq:def-reverse-density}
\end{equation}

Unless explicitly stated otherwise, all gradients, divergences, and Laplacians are taken with respect to the state variable. We write $W_t$ for a standard Brownian motion when an SDE is expressed in forward time $t$. When the reverse SDE is written in the reverse-time variable $\tau$, its driving Brownian motion is denoted by $W^{\mathrm{rev}}_\tau$. When the same reverse SDE is written using the original label $t:1\to 0$, we denote the corresponding reverse-time Brownian motion by $\bar W_t$, where
\[
\bar W_t:=W^{\mathrm{rev}}_{1-t}.
\]
Thus $W^{\mathrm{rev}}_\tau$ and $\bar W_t$ represent the same reverse-time noise under two different time parametrizations. Appendix~\ref{app:reverse-brownian} gives a detailed discussion.

\section{Conditional Forward Process}

\subsection{Conditional Gaussian forward path}

The conditional forward process is the family of random variables
\[
X_t \mid X_0=x_0,
\qquad
0\le t\le 1,
\]
whose density path is prescribed by the Gaussian kernel \eqref{eq:conditional-kernel}. Equivalently, if $\varepsilon \sim \N(0,I)$, then
\begin{equation}
X_t = \alpha_t X_0 + \sigma_t \varepsilon
\label{eq:forward-reparam}
\end{equation}
has conditional law
\[
X_t \mid X_0=x_0 \sim p_t(\cdot \mid x_0).
\]

\begin{figure}[ht]
\centering
\includegraphics[width=\textwidth]{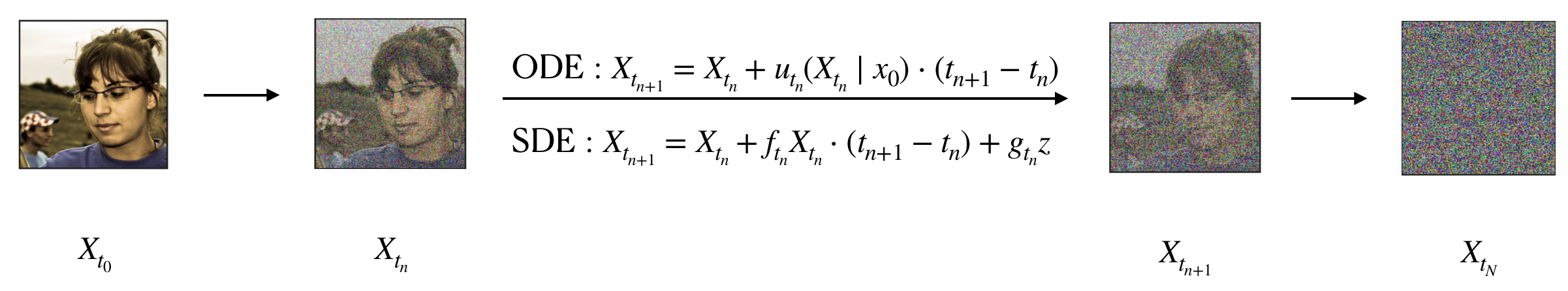}
\caption{Conditional forward process. Given a time grid $0=t_0<\cdots<t_n<t_{n+1}<\cdots<t_N=1$ and an initial clean sample $X_0$, the forward Gaussian path progressively corrupts the sample. The same path may be viewed either as repeated Gaussian perturbation on the grid or as the solution of the conditional forward ODE/SDE, where $z\sim\mathcal{N}(0,(t_{n+1}-t_n)I)$. The terminal state is approximately pure Gaussian noise.}
\label{fig:conditional-forward}
\end{figure}

\begin{proposition}[Conditional score]
The score of the conditional Gaussian path is
\begin{equation}
\grad \log p_t(x \mid x_0)
= -\frac{x-\alpha_t x_0}{\sigma_t^2}.
\label{eq:conditional-score}
\end{equation}
\end{proposition}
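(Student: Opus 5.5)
The plan is to compute the gradient of the logarithm of the explicit Gaussian density in \eqref{eq:conditional-kernel}. Throughout we fix $t$ with $\sigma_t>0$, that is, $t\in(0,1]$. At $t=0$ the kernel degenerates to a point mass at $x_0$ and has no Lebesgue density, so the score is undefined there. With $\sigma_t>0$ the isotropic Gaussian density is
\[
p_t(x\mid x_0)=(2\pi\sigma_t^2)^{-d/2}\exp\!\Bigl(-\frac{\|x-\alpha_t x_0\|^2}{2\sigma_t^2}\Bigr),
\]
which is strictly positive and smooth in $x$. Its logarithm is therefore well defined everywhere:
\[
\log p_t(x\mid x_0)=-\frac{d}{2}\log(2\pi\sigma_t^2)-\frac{\|x-\alpha_t x_0\|^2}{2\sigma_t^2}.
\]

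Next I take the gradient with respect to the state variable $x$, following the convention stated in the Setup. The normalizing term depends only on $t$, so its gradient vanishes. For the quadratic term I use $\grad_x\|x-c\|^2=2(x-c)$ with the constant vector $c=\alpha_t x_0$. Dividing by $2\sigma_t^2$ yields exactly $-(x-\alpha_t x_0)/\sigma_t^2$, which is \eqref{eq:conditional-score}.

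As a consistency check I would rewrite the result through the reparametrization \eqref{eq:forward-reparam}. Evaluated at $x=X_t=\alpha_t x_0+\sigma_t\varepsilon$, the score equals $-\varepsilon/\sigma_t$. This form is the one used later for the noise-prediction objective.

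There is no real obstacle in this argument. The only point needing care is the domain restriction $\sigma_t>0$, which I would state explicitly so that the division and the logarithm are legitimate. This excludes only the endpoint $t=0$.
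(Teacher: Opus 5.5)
Your proposal is correct and matches the paper's proof: the paper likewise writes $\log p_t(x\mid x_0)$ as an $x$-independent constant minus $\|x-\alpha_t x_0\|_2^2/(2\sigma_t^2)$ and differentiates in $x$. Making the requirement $\sigma_t>0$ explicit is a useful precision that the paper leaves implicit, though the Setup only assumes $\sigma_t\ge 0$, so equating $\sigma_t>0$ with $t\in(0,1]$ is a mild extra assumption on the schedule rather than a consequence of the Setup.
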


\begin{proof}
Since
\[
\log p_t(x \mid x_0)
= C_t - \frac{1}{2\sigma_t^2}\|x-\alpha_t x_0\|_2^2,
\]
where $C_t$ is independent of $x$, differentiating with respect to $x$ gives \eqref{eq:conditional-score}.
\end{proof}

\subsection{Conditional ODE that generates the conditional forward process}

\begin{theorem}[Conditional forward ODE]
For every fixed $x_0$, the conditional Gaussian path \eqref{eq:conditional-kernel} is generated by the ODE
\begin{equation}
\frac{\dd X_t}{\dd t}
= u_t(X_t \mid x_0),
\label{eq:conditional-forward-ode}
\end{equation}
with velocity field
\begin{equation}
u_t(x \mid x_0)
:=
\left(\dot{\alpha}_t-\frac{\dot{\sigma}_t}{\sigma_t}\alpha_t\right)x_0
+ \frac{\dot{\sigma}_t}{\sigma_t}x.
\label{eq:conditional-forward-velocity}
\end{equation}
Equivalently, the conditional density satisfies the conditional continuity equation
\begin{equation}
\partial_t p_t(x \mid x_0)
= -\diver\bigl(p_t(x \mid x_0)u_t(x \mid x_0)\bigr).
\label{eq:conditional-continuity}
\end{equation}
\end{theorem}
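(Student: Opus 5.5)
The plan is to work at the level of trajectories first, and then pass to densities. Fix $x_0$ and $\varepsilon\sim\N(0,I)$, and consider the reparametrized path $X_t=\alpha_t x_0+\sigma_t\varepsilon$ from \eqref{eq:forward-reparam}.

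\textbf{Step 1: the path solves the ODE.} Differentiating in $t$ gives $\dot X_t=\dot\alpha_t x_0+\dot\sigma_t\varepsilon$. For $t>0$ we have $\sigma_t>0$, so we can solve for the noise, $\varepsilon=(X_t-\alpha_t x_0)/\sigma_t$. Substituting,
\[
\dot X_t=\dot\alpha_t x_0+\frac{\dot\sigma_t}{\sigma_t}\bigl(X_t-\alpha_t x_0\bigr)=u_t(X_t\mid x_0),
\]
which is exactly \eqref{eq:conditional-forward-velocity}. So every sample path of the reparametrized process solves \eqref{eq:conditional-forward-ode}. Its law at time $t$ is $p_t(\cdot\mid x_0)$ by construction. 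Hence the flow of $u_t(\cdot\mid x_0)$ transports the Gaussian path, which is what ``generated by'' means.

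\textbf{Step 2: the continuity equation.} I will give two arguments.

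\emph{Pushforward argument.} For a smooth compactly supported test function $\varphi$,
\[
\frac{\dd}{\dd t}\E[\varphi(X_t)]=\E\bigl[\grad\varphi(X_t)\cdot u_t(X_t\mid x_0)\bigr].
\]
Integrating by parts against $p_t(\cdot\mid x_0)$ gives the weak form of \eqref{eq:conditional-continuity}. Both sides are smooth in $x$ for $t>0$, so the weak form upgrades to the pointwise identity.

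\emph{Direct check (cleaner for a tutorial).} Write $r=x-\alpha_t x_0$ and use \eqref{eq:conditional-score}.
\begin{itemize}
\item The time derivative of the log-density is
\[
\partial_t\log p_t=-d\,\frac{\dot\sigma_t}{\sigma_t}+\frac{r\cdot\dot\alpha_t x_0}{\sigma_t^2}+\frac{\dot\sigma_t}{\sigma_t^3}\|r\|^2 .
\]
\item Rewrite the velocity as $u_t=\dot\alpha_t x_0+\frac{\dot\sigma_t}{\sigma_t}r$. Then $\diver u_t=d\,\dot\sigma_t/\sigma_t$.
\item From the score, $u_t\cdot\grad\log p_t=-\frac{\dot\alpha_t x_0\cdot r}{\sigma_t^2}-\frac{\dot\sigma_t}{\sigma_t^3}\|r\|^2$.
\end{itemize}
Summing, $\partial_t\log p_t+\diver u_t+u_t\cdot\grad\log p_t=0$. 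Multiplying by $p_t$ gives \eqref{eq:conditional-continuity}, since $\diver(pu)=p\,\diver u+p\,u\cdot\grad\log p$.

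\textbf{Main obstacle: the boundary $t=0$.} There $\sigma_0=0$, so $u_t$ has a singular coefficient $\dot\sigma_t/\sigma_t$, and $p_0(\cdot\mid x_0)$ is a Dirac mass. I would state the identity for $t\in(0,1]$ and interpret the case $t=0$ in the sense of distributions.

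At the trajectory level the singularity is harmless. The coefficient multiplies $X_t-\alpha_t x_0=\sigma_t\varepsilon$, so $u_t(X_t\mid x_0)=\dot\alpha_t x_0+\dot\sigma_t\varepsilon$ stays bounded. The path therefore extends continuously to $X_0=x_0$. I would note this explicitly, together with the fact that uniqueness of ODE solutions holds on $(0,1]$, where $u_t$ is Lipschitz in $x$.
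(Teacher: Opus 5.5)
Your proposal is correct and follows the paper's proof. Step 1 is exactly the paper's differentiation of $X_t=\alpha_t x_0+\sigma_t\varepsilon$ with $\varepsilon$ eliminated, and your direct check is the paper's algebra divided through by $p_t(x\mid x_0)$ and written as $\partial_t\log p_t+\diver u_t+u_t\cdot\grad\log p_t=0$. Your weak-form argument (the proof of Appendix~\ref{app:continuity} applied to the reparametrized path) and your remark on the singular coefficient $\dot\sigma_t/\sigma_t$ at $t=0$, which the paper passes over, are sound additions rather than a different route; note only that $\sigma_t>0$ for $t>0$ does not follow from the Setup, which only requires $\sigma_t\ge 0$, and is an implicit assumption the paper also makes whenever it divides by $\sigma_t$.
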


\begin{proof}
Fix $x_0$ and define
\[
r_t(x):=x-\alpha_t x_0.
\]
From \eqref{eq:forward-reparam},
\[
X_t=\alpha_t x_0+\sigma_t\varepsilon.
\]
Differentiating with respect to $t$ gives
\[
\frac{\dd X_t}{\dd t}
= \dot{\alpha}_t x_0+\dot{\sigma}_t\varepsilon
= \dot{\alpha}_t x_0+\dot{\sigma}_t\frac{X_t-\alpha_t x_0}{\sigma_t},
\]
which simplifies to \eqref{eq:conditional-forward-velocity}. Thus the ODE \eqref{eq:conditional-forward-ode} indeed generates the conditional Gaussian path.

We now verify \eqref{eq:conditional-continuity} by exact algebra. Since
\[
p_t(x \mid x_0)
= \frac{1}{(2\pi \sigma_t^2)^{d/2}}
\exp\!\left(-\frac{\|r_t(x)\|_2^2}{2\sigma_t^2}\right),
\]
we have
\[
\log p_t(x \mid x_0)
= -\frac{d}{2}\log(2\pi \sigma_t^2)-\frac{\|r_t(x)\|_2^2}{2\sigma_t^2}.
\]
Because
\[
\partial_t r_t(x)=-\dot{\alpha}_t x_0,
\qquad
\partial_t \|r_t(x)\|_2^2=-2\dot{\alpha}_t r_t(x)^\top x_0,
\]
we obtain
\begin{align*}
\partial_t \log p_t(x \mid x_0)
&= -d\frac{\dot{\sigma}_t}{\sigma_t}
- \partial_t\!\left(\frac{\|r_t(x)\|_2^2}{2\sigma_t^2}\right)\\
&= -d\frac{\dot{\sigma}_t}{\sigma_t}
+ \frac{\dot{\alpha}_t}{\sigma_t^2}r_t(x)^\top x_0
+ \frac{\dot{\sigma}_t}{\sigma_t^3}\|r_t(x)\|_2^2.
\end{align*}
Therefore
\begin{equation}
\partial_t p_t(x \mid x_0)
= p_t(x \mid x_0)\left[
-d\frac{\dot{\sigma}_t}{\sigma_t}
+ \frac{\dot{\alpha}_t}{\sigma_t^2}r_t(x)^\top x_0
+ \frac{\dot{\sigma}_t}{\sigma_t^3}\|r_t(x)\|_2^2
\right].
\label{eq:conditional-continuity-lhs}
\end{equation}

Next,
\[
\grad p_t(x \mid x_0)
= p_t(x \mid x_0)\grad \log p_t(x \mid x_0)
= -p_t(x \mid x_0)\frac{r_t(x)}{\sigma_t^2}.
\]
Since
\[
u_t(x \mid x_0)
= \dot{\alpha}_t x_0+\frac{\dot{\sigma}_t}{\sigma_t}r_t(x),
\]
we get
\begin{align*}
u_t(x \mid x_0)^\top \grad p_t(x \mid x_0)
&= -p_t(x \mid x_0)\left[
\frac{\dot{\alpha}_t}{\sigma_t^2}r_t(x)^\top x_0
+ \frac{\dot{\sigma}_t}{\sigma_t^3}\|r_t(x)\|_2^2
\right].
\end{align*}
Also,
\[
\diver u_t(x \mid x_0)
= \frac{\dot{\sigma}_t}{\sigma_t}\diver r_t(x)
= d\frac{\dot{\sigma}_t}{\sigma_t},
\]
because $r_t(x)=x-\alpha_t x_0$ and $\diver x=d$. Hence
\begin{align*}
\diver\bigl(p_t(x \mid x_0)u_t(x \mid x_0)\bigr)
&= u_t(x \mid x_0)^\top \grad p_t(x \mid x_0)
+ p_t(x \mid x_0)\diver u_t(x \mid x_0)\\
&= p_t(x \mid x_0)\left[
-\frac{\dot{\alpha}_t}{\sigma_t^2}r_t(x)^\top x_0
- \frac{\dot{\sigma}_t}{\sigma_t^3}\|r_t(x)\|_2^2
+ d\frac{\dot{\sigma}_t}{\sigma_t}
\right].
\end{align*}
Therefore
\begin{equation}
-\diver\bigl(p_t(x \mid x_0)u_t(x \mid x_0)\bigr)
= p_t(x \mid x_0)\left[
-d\frac{\dot{\sigma}_t}{\sigma_t}
+ \frac{\dot{\alpha}_t}{\sigma_t^2}r_t(x)^\top x_0
+ \frac{\dot{\sigma}_t}{\sigma_t^3}\|r_t(x)\|_2^2
\right].
\label{eq:conditional-continuity-rhs}
\end{equation}
Comparing \eqref{eq:conditional-continuity-lhs} and \eqref{eq:conditional-continuity-rhs} proves \eqref{eq:conditional-continuity}.
\end{proof}

\begin{lemma}[Score form of the conditional forward velocity]
The velocity field \eqref{eq:conditional-forward-velocity} can be rewritten as
\begin{equation}
u_t(x \mid x_0)
= f_t x - \frac{1}{2}g_t^2 \grad\log p_t(x \mid x_0).
\label{eq:conditional-forward-velocity-score}
\end{equation}
\end{lemma}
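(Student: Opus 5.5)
The plan is to substitute the conditional score from \eqref{eq:conditional-score} into the right-hand side of \eqref{eq:conditional-forward-velocity-score}. Then I expand $f_t$ and $g_t^2$ using \eqref{eq:def-f-g} and match coefficients of $x$ and $x_0$ against \eqref{eq:conditional-forward-velocity}. Both sides are affine in $x$ with coefficients depending only on $t$ and $x_0$, so it suffices to compare the coefficient of $x$ and the coefficient of $x_0$ separately.

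Concretely, I write $\frac{\dd}{\dd t}\sigma_t^2 = 2\sigma_t\dot\sigma_t$. This gives
\[
\tfrac12 g_t^2 = \sigma_t\dot\sigma_t - \frac{\dot\alpha_t}{\alpha_t}\sigma_t^2,
\]
and hence
\[
-\tfrac12 g_t^2\grad\log p_t(x\mid x_0) = \left(\frac{\dot\sigma_t}{\sigma_t}-\frac{\dot\alpha_t}{\alpha_t}\right)(x-\alpha_t x_0).
\]
Adding $f_t x = \frac{\dot\alpha_t}{\alpha_t}x$ makes the $\frac{\dot\alpha_t}{\alpha_t}x$ terms cancel, leaving $\frac{\dot\sigma_t}{\sigma_t}x$ as the coefficient of $x$. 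The $x_0$ coefficient is $-\alpha_t\frac{\dot\sigma_t}{\sigma_t}+\dot\alpha_t$, which is exactly the coefficient appearing in \eqref{eq:conditional-forward-velocity}.

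The only real obstacle is bookkeeping at the endpoints. The quantities $f_t$ and $\grad\log p_t(x\mid x_0)$ involve division by $\alpha_t$ and $\sigma_t$, which vanish at $t=1$ and $t=0$ respectively. I will state the identity for $t\in(0,1)$, where $\alpha_t,\sigma_t>0$ and every expression is defined; this is the same range on which \eqref{eq:conditional-forward-velocity} itself makes sense. Beyond that, the argument is a direct algebraic verification relying only on \eqref{eq:conditional-score} and \eqref{eq:def-f-g}.
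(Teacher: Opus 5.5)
Your proposal is correct and follows essentially the same route as the paper: substitute the conditional score, use $g_t^2/(2\sigma_t^2)=\dot\sigma_t/\sigma_t-f_t$, and match the coefficients of $x$ and $x_0$. Restricting to times where $\alpha_t,\sigma_t>0$ is a harmless extra precision that the paper leaves implicit.
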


\begin{proof}
By \eqref{eq:conditional-score},
\[
-\frac{1}{2}g_t^2 \grad\log p_t(x \mid x_0)
= \frac{g_t^2}{2\sigma_t^2}(x-\alpha_t x_0).
\]
Using \eqref{eq:def-f-g},
\[
g_t^2
= \frac{\dd}{\dd t}\sigma_t^2 - 2f_t\sigma_t^2
= 2\sigma_t\dot{\sigma}_t - 2f_t\sigma_t^2,
\]
so
\[
\frac{g_t^2}{2\sigma_t^2}
= \frac{\dot{\sigma}_t}{\sigma_t}-f_t.
\]
Therefore
\begin{align*}
f_t x - \frac{1}{2}g_t^2 \grad\log p_t(x \mid x_0)
&= f_t x + \left(\frac{\dot{\sigma}_t}{\sigma_t}-f_t\right)(x-\alpha_t x_0)\\
&= \frac{\dot{\sigma}_t}{\sigma_t}x
+ \left(\dot{\alpha}_t-\frac{\dot{\sigma}_t}{\sigma_t}\alpha_t\right)x_0\\
&= u_t(x \mid x_0).
\qedhere
\end{align*}
\end{proof}

\subsection{Conditional SDE that generates the conditional forward process}

\begin{theorem}[Conditional forward SDE]
\label{thm:conditional-forward-sde}
For every fixed $x_0$, the SDE
\begin{equation}
\dd X_t = f_t X_t \dd t + g_t \dd W_t,
\qquad
X_0=x_0,
\label{eq:conditional-forward-sde}
\end{equation}
has conditional density path \eqref{eq:conditional-kernel}. Equivalently, the density \eqref{eq:conditional-kernel} satisfies the conditional Fokker--Planck equation
\begin{equation}
\partial_t p_t(x \mid x_0)
= -\diver\bigl(f_t x\,p_t(x \mid x_0)\bigr)
+ \frac{1}{2}g_t^2 \lap p_t(x \mid x_0).
\label{eq:conditional-fp}
\end{equation}
\end{theorem}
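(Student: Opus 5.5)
The plan has two parts. First, solve the linear SDE \eqref{eq:conditional-forward-sde} explicitly and check that its law is Gaussian with the right moments. Second, verify the Fokker--Planck equation \eqref{eq:conditional-fp} by reusing the continuity equation from the conditional forward ODE theorem.

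For the explicit solution, I will use the integrating factor $1/\alpha_t$, since $\frac{\dd}{\dd t}(1/\alpha_t) = -f_t/\alpha_t$. By It\^o's product rule (no cross-variation, because $1/\alpha_t$ is deterministic),
\[
\dd\!\left(\frac{X_t}{\alpha_t}\right) = \frac{g_t}{\alpha_t}\dd W_t,
\qquad\text{so}\qquad
X_t = \alpha_t x_0 + \alpha_t\int_0^t \frac{g_s}{\alpha_s}\dd W_s .
\]
The stochastic integral has a deterministic integrand, so $X_t$ is Gaussian with mean $\alpha_t x_0$. By the It\^o isometry its covariance is $\alpha_t^2\int_0^t g_s^2/\alpha_s^2\dd s\; I$. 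From \eqref{eq:def-f-g},
\[
\frac{\dd}{\dd s}\!\left(\frac{\sigma_s^2}{\alpha_s^2}\right) = \frac{1}{\alpha_s^2}\left(\frac{\dd}{\dd s}\sigma_s^2 - 2f_s\sigma_s^2\right) = \frac{g_s^2}{\alpha_s^2}.
\]
Together with $\sigma_0=0$, this gives covariance $\sigma_t^2 I$, which matches \eqref{eq:conditional-kernel}.

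The main obstacle is the endpoint $t=1$, where $\alpha_1=0$ and $f_t$ blows up. I will run the argument on $[0,1)$ and obtain $t=1$ as a limit (the law tends to $\N(0,I)$). The condition $g_t^2\ge0$ from the Setup is exactly what makes $g_t$ real.

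For the equivalent PDE form, I will not redo the exponential algebra. By the conditional forward ODE theorem, $\partial_t p_t(x\mid x_0) = -\diver(p_t u_t)$. By the score-form lemma \eqref{eq:conditional-forward-velocity-score},
\[
p_t u_t = f_t x\, p_t - \tfrac12 g_t^2\, p_t\grad\log p_t = f_t x\, p_t - \tfrac12 g_t^2 \grad p_t .
\]
Taking $-\diver$ and using $\diver\grad = \lap$ yields \eqref{eq:conditional-fp} directly. Finally, as a consistency check, I will recall that the generator of \eqref{eq:conditional-forward-sde} is $f_t x\cdot\grad + \tfrac12 g_t^2\lap$, and its adjoint is precisely the right-hand side of \eqref{eq:conditional-fp}. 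This confirms that the explicit solution and the PDE describe the same density path.
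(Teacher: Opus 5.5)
Your proposal is correct, but it takes a different route from the paper. The paper proves only the Fokker--Planck form, by direct algebra. It reuses the explicit expression for $\partial_t p_t(x\mid x_0)$ from the ODE proof, computes $\lap p_t(x\mid x_0)$ and $\diver\bigl(f_t x\,p_t(x\mid x_0)\bigr)$ for the Gaussian, and matches the two sides using $x=r_t(x)+\alpha_t x_0$, $f_t\alpha_t=\dot\alpha_t$ and $g_t^2/(2\sigma_t^2)=\dot\sigma_t/\sigma_t-f_t$. The claim that the SDE actually has this density path is then read off from the PDE, which tacitly uses Appendix~\ref{app:fokker-planck} together with uniqueness for the Fokker--Planck equation with Dirac initial datum.

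Your variation-of-constants solution proves that claim directly for the process and needs no uniqueness theorem. It also handles the degenerate start $\sigma_0=0$ for free. Your identity $\frac{\dd}{\dd s}(\sigma_s^2/\alpha_s^2)=g_s^2/\alpha_s^2$ shows that the definition of $g_t^2$ is exactly the variance-matching condition; the same integrating factor $X_t/\alpha_t$ reappears in the paper's DPM-Solver derivation.

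For the PDE, your one-line identity $p_t u_t=f_t x\,p_t-\tfrac12 g_t^2\grad p_t$ replaces the paper's computation. It makes visible why the conditional ODE and SDE share their marginals, and it is the same mechanism the paper later uses to pass from the reverse SDE to the probability-flow ODE. The cost is that it leans on the conditional ODE theorem and the score-form lemma, whereas the paper checks the identity term by term without going through the velocity field.

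Your care at $t=1$ goes beyond the paper. Note, though, that the limit there is genuinely only in law: for $s<t<1$ with $\alpha_t\le\alpha_s/2$ one has $\E\|X_t-X_s\|_2^2\ge d\,\sigma_t^2$, so $X_t$ has no $L^2$ limit. Also, you and the paper both implicitly need $\alpha_t>0$ on $[0,1)$, which the Setup's $\alpha_t\ge0$ does not guarantee.
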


\begin{proof}
We keep the notation $r_t(x)=x-\alpha_t x_0$. From the previous proof,
\begin{equation}
\partial_t p_t(x \mid x_0)
= p_t(x \mid x_0)\left[
-d\frac{\dot{\sigma}_t}{\sigma_t}
+ \frac{\dot{\alpha}_t}{\sigma_t^2}r_t(x)^\top x_0
+ \frac{\dot{\sigma}_t}{\sigma_t^3}\|r_t(x)\|_2^2
\right].
\label{eq:conditional-fp-lhs}
\end{equation}

We now compute the right-hand side of \eqref{eq:conditional-fp}. First,
\[
\grad p_t(x \mid x_0)
= -p_t(x \mid x_0)\frac{r_t(x)}{\sigma_t^2}.
\]
Hence
\begin{align*}
\lap p_t(x \mid x_0)
&= \diver\!\left(-p_t(x \mid x_0)\frac{r_t(x)}{\sigma_t^2}\right)\\
&= -\frac{1}{\sigma_t^2}\left(r_t(x)^\top \grad p_t(x \mid x_0) + p_t(x \mid x_0)\diver r_t(x)\right)\\
&= -\frac{1}{\sigma_t^2}\left(-p_t(x \mid x_0)\frac{\|r_t(x)\|_2^2}{\sigma_t^2} + d\,p_t(x \mid x_0)\right)\\
&= p_t(x \mid x_0)\left(\frac{\|r_t(x)\|_2^2}{\sigma_t^4}-\frac{d}{\sigma_t^2}\right).
\end{align*}
Next,
\begin{align*}
\diver\bigl(f_t x\,p_t(x \mid x_0)\bigr)
&= f_t \diver\bigl(x\,p_t(x \mid x_0)\bigr)\\
&= f_t\left(d\,p_t(x \mid x_0)+x^\top \grad p_t(x \mid x_0)\right)\\
&= f_t d\,p_t(x \mid x_0)-f_t p_t(x \mid x_0)\frac{x^\top r_t(x)}{\sigma_t^2}.
\end{align*}
Therefore
\begin{align*}
&-\diver\bigl(f_t x\,p_t(x \mid x_0)\bigr)
+ \frac{1}{2}g_t^2 \lap p_t(x \mid x_0)\\
&\qquad=
p_t(x \mid x_0)\left[
-f_t d
+ f_t\frac{x^\top r_t(x)}{\sigma_t^2}
+ \frac{g_t^2}{2}\left(\frac{\|r_t(x)\|_2^2}{\sigma_t^4}-\frac{d}{\sigma_t^2}\right)
\right].
\end{align*}
Because $x=r_t(x)+\alpha_t x_0$,
\[
x^\top r_t(x)=\|r_t(x)\|_2^2+\alpha_t x_0^\top r_t(x).
\]
Also $f_t\alpha_t=\dot{\alpha}_t$, and
\[
\frac{g_t^2}{2\sigma_t^2}
= \frac{\dot{\sigma}_t}{\sigma_t}-f_t,
\qquad
\frac{g_t^2}{2\sigma_t^4}
= \frac{\dot{\sigma}_t}{\sigma_t^3}-\frac{f_t}{\sigma_t^2}.
\]
Substituting these identities yields
\begin{align*}
&-\diver\bigl(f_t x\,p_t(x \mid x_0)\bigr)
+ \frac{1}{2}g_t^2 \lap p_t(x \mid x_0)\\
&\qquad=
p_t(x \mid x_0)\left[
-d\frac{\dot{\sigma}_t}{\sigma_t}
+ \frac{\dot{\alpha}_t}{\sigma_t^2}r_t(x)^\top x_0
+ \frac{\dot{\sigma}_t}{\sigma_t^3}\|r_t(x)\|_2^2
\right].
\end{align*}
This matches \eqref{eq:conditional-fp-lhs}, so \eqref{eq:conditional-fp} holds exactly.
\end{proof}

\newpage
\section{Marginalized Forward Process}

\subsection{Marginalized forward path}

The marginalized forward process is the unconditional family $(X_t)_{0\le t\le 1}$ with density path
\begin{equation}
\label{eq:marginal-path}
p_t(x)=\int p_t(x \mid x_0)p_0(x_0)\dd x_0.
\end{equation}
Because the conditional kernels are Gaussian, $p_t$ is a Gaussian mixture induced by the data distribution $p_0$. In general $p_t$ is not itself Gaussian unless $p_0$ is Gaussian.

\begin{figure}[ht]
\centering
\includegraphics[width=\textwidth]{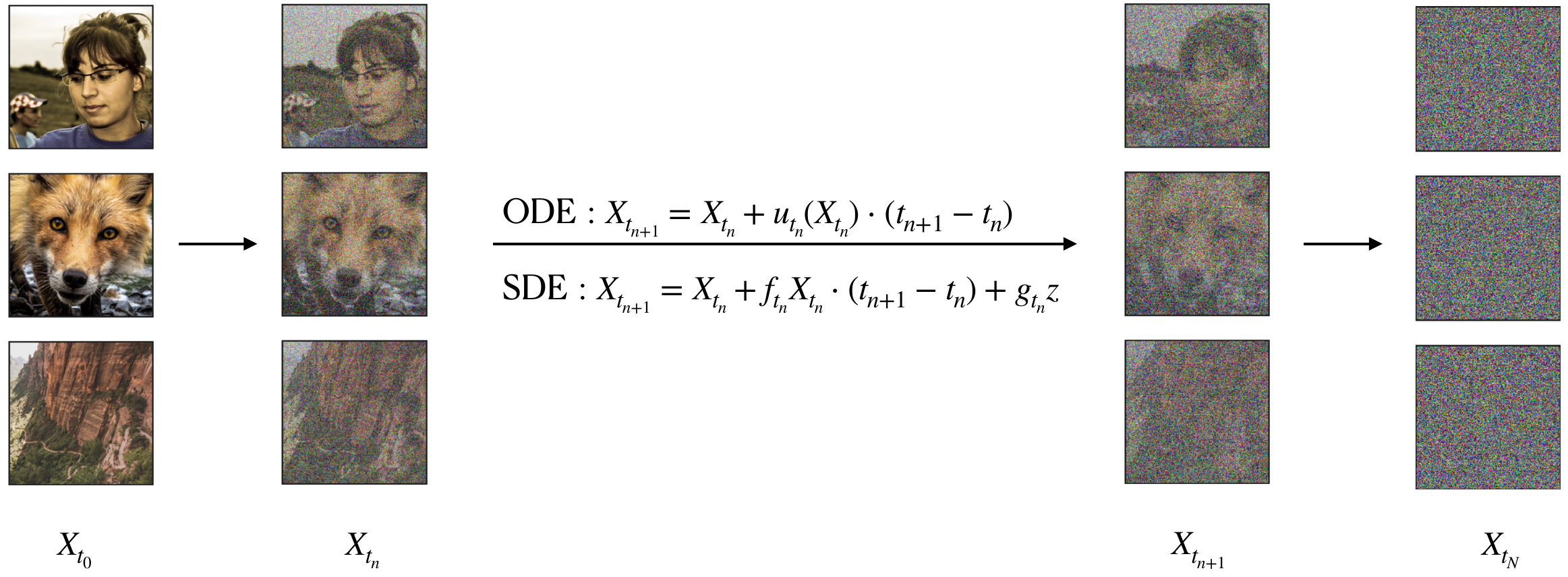}
\caption{Marginalized forward process: Create a time grid $0=t_0<\cdots<t_n<t_{n+1}<\cdots<t_N=1$. The initial clean image $X_0$ is drawn from the data distribution $p_0$, and the marginal forward ODE/SDE progressively transform the data distribution to Gaussian noise distribution. Where $z\sim\mathcal{N}(0,(t_{n+1}-t_n)I)$.}
\label{fig:marginal-forward}
\end{figure}

\subsection{Marginalized ODE that generates the marginalized forward process}

\begin{theorem}[Marginalized forward ODE]
The marginal path \ref{eq:marginal-path} is generated by the ODE 
\begin{equation}
\frac{\dd X_t}{\dd t}
= u_t(X_t),
\label{eq:marginal-forward-ode}
\end{equation}
with velocity field
\begin{equation}
u_t(x):=\E[u_t(x \mid X_0)\mid X_t=x]=f_t x - \frac{1}{2}g_t^2 \grad\log p_t(x).
\label{eq:marginal-forward-velocity-def}
\end{equation}
or equivalently, and the marginal density path satisfies marginal continuity equation
\begin{equation}
\partial_t p_t(x)
= -\diver\bigl(p_t(x)u_t(x)\bigr).
\label{eq:marginal-forward-continuity}
\end{equation}
\end{theorem}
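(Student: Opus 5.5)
The plan is to obtain the marginal continuity equation by integrating the conditional one against $p_0$, and to read off both expressions for $u_t$ from the resulting flux.

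\textbf{Step 1: integrate the conditional continuity equation.} Differentiating \eqref{eq:marginal-path} under the integral and using the conditional continuity equation \eqref{eq:conditional-continuity} from the conditional forward ODE theorem gives
\[
\partial_t p_t(x)=\int \partial_t p_t(x\mid x_0)\,p_0(x_0)\dd x_0
=-\diver\int p_t(x\mid x_0)\,u_t(x\mid x_0)\,p_0(x_0)\dd x_0 .
\]
Here I interchange $\partial_t$ and $\diver$ with the $x_0$-integral.

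\textbf{Step 2: identify the flux as $p_t u_t$.} By Bayes' rule the posterior is $p(x_0\mid X_t=x)=p_t(x\mid x_0)p_0(x_0)/p_t(x)$. Hence the flux equals
\[
p_t(x)\,\E[u_t(x\mid X_0)\mid X_t=x]=p_t(x)\,u_t(x).
\]
This proves \eqref{eq:marginal-forward-continuity} with the conditional-expectation definition of $u_t$.

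\textbf{Step 3: the score form of $u_t$.} Insert the score form \eqref{eq:conditional-forward-velocity-score} of the conditional velocity, which is affine in the conditional score. This gives
\[
u_t(x)=f_t x-\tfrac12 g_t^2\,\E[\grad\log p_t(x\mid X_0)\mid X_t=x].
\]
It remains to prove Fisher's identity, $\E[\grad\log p_t(x\mid X_0)\mid X_t=x]=\grad\log p_t(x)$. To do so, write $\grad\log p_t(x\mid x_0)\,p_t(x\mid x_0)=\grad p_t(x\mid x_0)$. Then
\[
\int \grad p_t(x\mid x_0)\,p_0(x_0)\dd x_0=\grad p_t(x),
\]
and dividing by $p_t(x)$ gives the identity.

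\textbf{Step 4: the ODE generates the path.} Any velocity field satisfying the continuity equation transports densities. Therefore, if $X_0\sim p_0$ and $X_t$ solves \eqref{eq:marginal-forward-ode}, the law of $X_t$ solves the same linear transport equation with the same initial datum. By uniqueness for the continuity equation with a sufficiently regular (locally Lipschitz) field, that law equals $p_t$. In the tutorial spirit I would state this regularity as an assumption. The same assumption covers Step 1.

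\textbf{Main obstacle.} The main difficulty is justifying the interchange of differentiation and integration in Steps 1 and 3, plus the uniqueness claim in Step 4. The algebra itself is routine.

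The interchange is fine for $t\in(0,1)$: there $\sigma_t>0$, the Gaussian kernels and their $t$- and $x$-derivatives are dominated uniformly on compact sets, and $p_0$ has finite first moments. The endpoints $t=0$ ($\sigma_0=0$) and $t=1$ ($\alpha_1=0$, where $f_t$ blows up) are degenerate. I would handle them by proving everything on $(0,1)$ and interpreting the endpoints as limits.
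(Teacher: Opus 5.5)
Your proposal is correct and follows the paper's proof: average the conditional continuity equation, identify the flux as $p_t u_t$ via Bayes' rule, and get the score form from the conditional score form plus Fisher's identity. You only reverse the order of the two parts and prove Fisher's identity inline rather than citing the appendix. Your Step~4 uniqueness argument and your restriction to $t\in(0,1)$ add rigor the paper leaves implicit, since the paper treats the continuity equation as equivalent to the ODE generating the path.
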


\begin{proof}
Starting from the conditional score form \eqref{eq:conditional-forward-velocity-score},
\[
u_t(x \mid X_0)
= f_t x - \frac{1}{2}g_t^2 \grad\log p_t(x \mid X_0).
\]
Conditioning on $X_t=x$ yields
\begin{align*}
u_t(x)
&= \E[u_t(x \mid X_0)\mid X_t=x]\\
&= f_t x - \frac{1}{2}g_t^2 \E[\grad\log p_t(x \mid X_0)\mid X_t=x].
\end{align*}
By Fisher's identity, proved in Appendix~\ref{app:fisher},
\[
\E[\grad\log p_t(x \mid X_0)\mid X_t=x]
= \grad\log p_t(x),
\]
which gives \eqref{eq:marginal-forward-velocity-def}.

We now verify the continuity equation directly by averaging the conditional continuity equations:
\begin{align*}
\partial_t p_t(x)
&= \int \partial_t p_t(x \mid x_0)p_0(x_0)\dd x_0\\
&= -\int \diver\bigl(p_t(x \mid x_0)u_t(x \mid x_0)\bigr)p_0(x_0)\dd x_0\\
&= -\diver\left(\int p_t(x \mid x_0)u_t(x \mid x_0)p_0(x_0)\dd x_0\right).
\end{align*}
By the definition \eqref{eq:marginal-forward-velocity-def},
\[
\int p_t(x \mid x_0)u_t(x \mid x_0)p_0(x_0)\dd x_0
= p_t(x)u_t(x).
\]
Therefore \eqref{eq:marginal-forward-continuity} holds.
\end{proof}

\subsection{Marginalized SDE that generates the marginalized forward process}

\begin{theorem}[Marginalized forward SDE]
The marginal forward process is also generated by the SDE
\begin{equation}
\dd X_t = f_t X_t \dd t + g_t \dd W_t,
\qquad
X_0 \sim p_0.
\label{eq:marginal-forward-sde}
\end{equation}
Equivalently, the marginal density path satisfies the marginal Fokker--Planck equation
\begin{equation}
\partial_t p_t(x)
= -\diver\bigl(f_t x\,p_t(x)\bigr)
+ \frac{1}{2}g_t^2 \lap p_t(x).
\label{eq:marginal-forward-fp}
\end{equation}
\end{theorem}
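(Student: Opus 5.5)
The plan is to mirror the proof of the marginalized forward ODE: average the conditional Fokker--Planck equation \eqref{eq:conditional-fp} of Theorem~\ref{thm:conditional-forward-sde} against $p_0(x_0)$. The key observation is that the drift $f_t x$ and the diffusion coefficient $g_t$ in \eqref{eq:conditional-forward-sde} do not depend on $x_0$. The SDE is linear, so its operator commutes with mixing over the initial condition.

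First I would establish the marginal Fokker--Planck equation \eqref{eq:marginal-forward-fp}. Differentiating \eqref{eq:marginal-path} under the integral sign and substituting \eqref{eq:conditional-fp} gives
\[
\partial_t p_t(x)=\int\Bigl[-\diver\bigl(f_t x\,p_t(x\mid x_0)\bigr)+\tfrac12 g_t^2\lap p_t(x\mid x_0)\Bigr]p_0(x_0)\dd x_0 .
\]
Both $\diver$ and $\lap$ act only in $x$, and the coefficients $f_t x$ and $g_t^2$ are free of $x_0$. So I would pull the operators outside the $x_0$-integral. What remains inside is $\int p_t(x\mid x_0)p_0(x_0)\dd x_0=p_t(x)$, which yields \eqref{eq:marginal-forward-fp}.

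Second, I would connect this to the SDE \eqref{eq:marginal-forward-sde} with random initial condition $X_0\sim p_0$. The argument is to condition on $X_0=x_0$, using that the Brownian motion $W$ is independent of $X_0$. Conditionally, $X_t$ solves \eqref{eq:conditional-forward-sde}, so Theorem~\ref{thm:conditional-forward-sde} gives the law $p_t(\cdot\mid x_0)$. The explicit solution $X_t=\alpha_t x_0+\alpha_t\int_0^t \alpha_s^{-1}g_s\dd W_s$ makes this concrete. Integrating the conditional laws over $p_0$ then shows that the law of $X_t$ is exactly \eqref{eq:marginal-path}. So the SDE generates the marginal path, and its density obeys the Fokker--Planck equation already derived.

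The main obstacle is justifying the interchange of $\partial_t$, $\diver$ and $\lap$ with the $x_0$-integral, since $p_0$ is arbitrary. For $t\in(0,1)$ with $\sigma_t>0$ and $\alpha_t>0$, I would dominate the derivatives of the Gaussian kernel. These are polynomials in $x-\alpha_t x_0$ times the Gaussian, so locally uniformly in $(t,x)$ they are bounded by an integrable function of $x_0$ whenever $p_0$ has finite second moment, or even without that assumption, since the Gaussian factor controls the polynomial growth. In the same spirit as the proof of the marginal ODE, I would state this regularity as a standing assumption rather than belabor it. The endpoints $t=0,1$, where $\sigma_0=0$ or $\alpha_1=0$ make $f_t$ singular, would be treated only in the limiting sense.
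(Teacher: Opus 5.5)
Your proposal is correct and its core is the paper's own proof: fix $x_0$, integrate the conditional Fokker--Planck equation of Theorem~\ref{thm:conditional-forward-sde} against $p_0(x_0)\dd x_0$, and pull $\diver$ and $\lap$ outside the integral, which is allowed because $f_t x$ and $g_t^2$ do not depend on $x_0$. The paper stops at that point and treats ``satisfies the Fokker--Planck equation'' as meaning ``is generated by the SDE''. Your conditioning-on-$X_0$ step, using the explicit solution $X_t=\alpha_t x_0+\alpha_t\int_0^t\alpha_s^{-1}g_s\dd W_s$ with variance $\alpha_t^2\int_0^t \frac{\dd}{\dd s}(\sigma_s^2/\alpha_s^2)\dd s=\sigma_t^2$, identifies the law of $X_t$ directly without relying on uniqueness of Fokker--Planck solutions, and your domination remarks make explicit the interchanges the paper performs silently.
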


\begin{proof}
For every fixed $x_0$, Theorem~\ref{thm:conditional-forward-sde} gives
\[
\partial_t p_t(x \mid x_0)
= -\diver\bigl(f_t x\,p_t(x \mid x_0)\bigr)
+ \frac{1}{2}g_t^2 \lap p_t(x \mid x_0).
\]
Integrating both sides against $p_0(x_0)\dd x_0$ gives
\begin{align*}
\partial_t p_t(x)
&= -\int \diver\bigl(f_t x\,p_t(x \mid x_0)\bigr)p_0(x_0)\dd x_0
+ \frac{1}{2}g_t^2\int \lap p_t(x \mid x_0)p_0(x_0)\dd x_0\\
&= -\diver\left(f_t x\int p_t(x \mid x_0)p_0(x_0)\dd x_0\right)
+ \frac{1}{2}g_t^2 \lap\left(\int p_t(x \mid x_0)p_0(x_0)\dd x_0\right)\\
&= -\diver\bigl(f_t x\,p_t(x)\bigr)
+ \frac{1}{2}g_t^2 \lap p_t(x).
\end{align*}
This is exactly \eqref{eq:marginal-forward-fp}, the Fokker--Planck equation of \eqref{eq:marginal-forward-sde}.
\end{proof}

\begin{remark}[From here on]
After Section~2 we work only with the marginal process $X_t$, the marginal density path $p_t$, and their reverse-time counterparts. This is the level at which practical diffusion models are trained and sampled.
\end{remark}

\section{Reverse Process and Reverse Dynamics}

\subsection{Definition of the reverse process}

\begin{definition}[Reverse process]
Given the forward marginal process $(X_t)_{0\le t\le 1}$, the reverse process is
\begin{equation}
Y_\tau := X_{1-\tau},
\qquad
0\le \tau \le 1.
\label{eq:reverse-process-definition}
\end{equation}
Its density is
\begin{equation}
q_\tau(x):=p_{1-\tau}(x).
\label{eq:reverse-process-density}
\end{equation}
\end{definition}

\begin{figure}[ht]
\centering
\includegraphics[width=\textwidth]{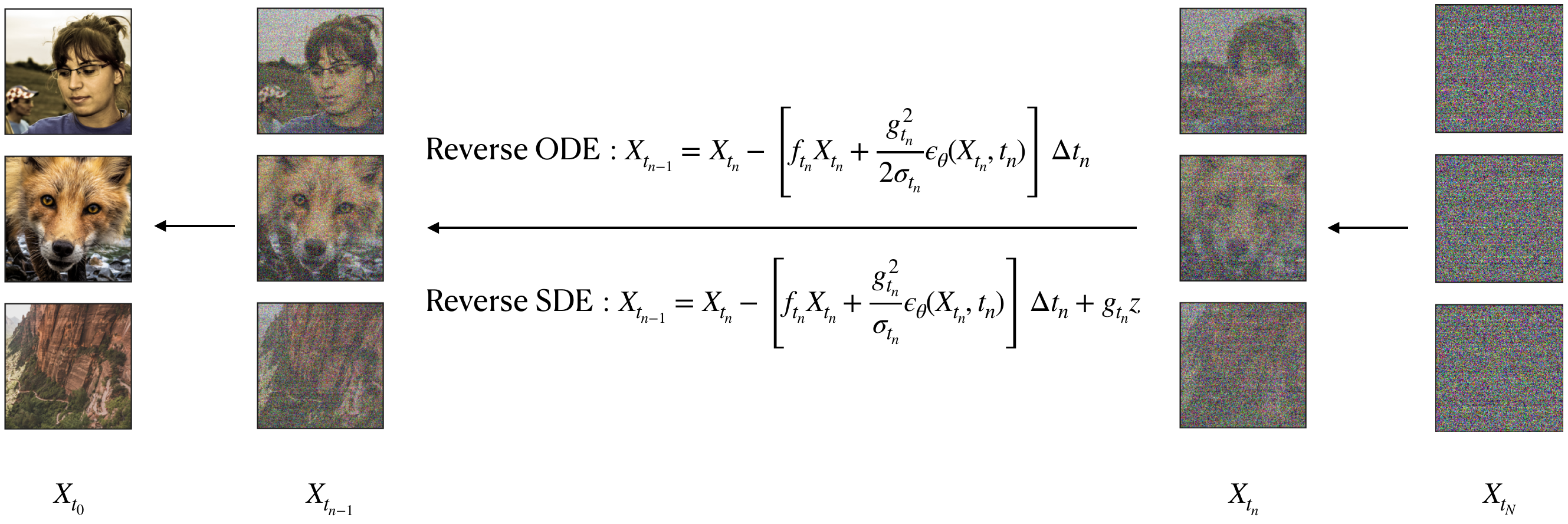}
\caption{(Marginalized) Reverse process: Create a time grid $1=t_N>\cdots>t_n>t_{n-1}>\cdots>t_0=0$. The initial noise $X_1$ is drawn from the Gaussian distribution $p_1$, and the marginal reverse ODE/SDE progressively transform the Gaussian distribution to data distribution. Where $z\sim\mathcal{N}(0,(t_{n}-t_{n-1})I)$.}
\label{fig:reverse-process}
\end{figure}

\subsection{Reverse density equation}

\begin{proposition}[PDE for the reverse density path]
The reverse density $q_\tau$ satisfies
\begin{equation}
\partial_\tau q_\tau(x)
= \diver\bigl(f_{1-\tau}x\,q_\tau(x)\bigr)
- \frac{1}{2}g_{1-\tau}^2 \lap q_\tau(x).
\label{eq:reverse-density-pde}
\end{equation}
\end{proposition}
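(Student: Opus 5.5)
The plan is to obtain \eqref{eq:reverse-density-pde} directly from the marginal Fokker--Planck equation \eqref{eq:marginal-forward-fp} by the chain rule in time. No new analysis of the density is needed: the reverse density is just the forward density evaluated at a reflected time, so its PDE is the forward PDE with the sign of the time derivative flipped.

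First, fix $x$ and set $t=1-\tau$. Since $q_\tau(x)=p_{1-\tau}(x)$ by \eqref{eq:reverse-process-density}, and $\tau\mapsto 1-\tau$ is smooth with derivative $-1$, the chain rule gives
\[
\partial_\tau q_\tau(x) = -\,(\partial_t p_t)(x)\big|_{t=1-\tau}.
\]
This uses only the differentiability in $t$ of $p_t(x)$, which is already implicit in \eqref{eq:marginal-forward-fp} (itself derived from the explicit Gaussian conditional densities).

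Second, substitute \eqref{eq:marginal-forward-fp} evaluated at $t=1-\tau$:
\[
\partial_\tau q_\tau(x) = \diver\bigl(f_{1-\tau}x\,p_{1-\tau}(x)\bigr) - \frac{1}{2}g_{1-\tau}^2\lap p_{1-\tau}(x).
\]
Third, observe that the spatial operators $\diver$ and $\lap$ act only on $x$, so they commute with relabeling time. We may therefore replace $p_{1-\tau}$ by $q_\tau$ inside them, which yields \eqref{eq:reverse-density-pde}.

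There is essentially no obstacle; the only point requiring care is bookkeeping. One must check that the coefficients are evaluated at the forward time $1-\tau$, and that the minus sign from $\dd t/\dd\tau=-1$ flips both terms of the right-hand side. In particular, the diffusion term acquires a negative sign, so \eqref{eq:reverse-density-pde} is a backward-heat-type equation. This is precisely why the next step will rewrite $-\frac12 g^2\lap q$ using $\lap q=\diver(q\grad\log q)$ to obtain a well-posed forward Fokker--Planck equation in $\tau$. That rewriting, however, is not needed for the present statement.
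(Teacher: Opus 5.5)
Your proposal is correct and is the same argument as the paper's: the chain rule gives $\partial_\tau q_\tau(x) = -\partial_t p_t(x)\big|_{t=1-\tau}$, and substituting the marginal Fokker--Planck equation \eqref{eq:marginal-forward-fp} at $t=1-\tau$ flips the sign of both terms. Your remark that the resulting backward-heat-type sign is fixed in the next step, via $\lap q_\tau=\diver(q_\tau\grad\log q_\tau)$, also matches how the paper proceeds.
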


\begin{proof}
Since $q_\tau(x)=p_{1-\tau}(x)$,
\[
\partial_\tau q_\tau(x)
= -\partial_t p_t(x)\big|_{t=1-\tau}.
\]
Using the forward marginal Fokker--Planck equation \eqref{eq:marginal-forward-fp},
\[
\partial_t p_t(x)
= -\diver\bigl(f_t x\,p_t(x)\bigr) + \frac{1}{2}g_t^2 \lap p_t(x),
\]
we obtain
\[
\partial_\tau q_\tau(x)
= \diver\bigl(f_{1-\tau}x\,q_\tau(x)\bigr)
- \frac{1}{2}g_{1-\tau}^2 \lap q_\tau(x).
\qedhere
\]
\end{proof}

\subsection{Reverse SDE}

The reverse-time diffusion viewpoint used below is classical in stochastic analysis \cite{anderson1982} and underlies the modern reverse-SDE formulation of score-based generative modeling \cite{song2021sde}.

\begin{theorem}[Reverse SDE in $\tau$]
Define
\begin{equation}
b_\tau^{\mathrm{rev}}(x)
:=
-f_{1-\tau}x + g_{1-\tau}^2 \grad\log q_\tau(x).
\label{eq:reverse-sde-drift-tau}
\end{equation}
Then the SDE
\begin{equation}
\dd Y_\tau
= b_\tau^{\mathrm{rev}}(Y_\tau)\dd \tau + g_{1-\tau}\dd W^{\mathrm{rev}}_\tau,
\qquad
Y_0 \sim q_0=p_1,
\label{eq:reverse-sde-tau}
\end{equation}
has density path $q_\tau$.
\end{theorem}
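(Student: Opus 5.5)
The approach is to work at the level of densities, in the same spirit as the paper's earlier proofs. The claim "the SDE \eqref{eq:reverse-sde-tau} has density path $q_\tau$" will be read as: $q_\tau$ solves the Fokker--Planck equation associated with \eqref{eq:reverse-sde-tau}, and it has the correct initial condition $q_0=p_1$. For an SDE $\dd Y_\tau=b_\tau(Y_\tau)\dd\tau+\gamma_\tau\dd W^{\mathrm{rev}}_\tau$ the Fokker--Planck equation reads
\[
\partial_\tau \rho_\tau=-\diver(\rho_\tau b_\tau)+\tfrac12\gamma_\tau^2\lap\rho_\tau ,
\]
with the same convention as in \eqref{eq:conditional-fp} and \eqref{eq:marginal-forward-fp} (Appendix on Fokker--Planck). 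So the task is to show that $q_\tau$ satisfies this equation with $b_\tau=b^{\mathrm{rev}}_\tau$ and $\gamma_\tau=g_{1-\tau}$.

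\textbf{Step 1: rewrite the Laplacian term.} Use $\grad q_\tau=q_\tau\grad\log q_\tau$ to write
\[
\lap q_\tau=\diver(\grad q_\tau)=\diver\bigl(q_\tau\grad\log q_\tau\bigr).
\]
This identity converts the diffusion term into a divergence of a drift-like flux. It is the key trick that lets us flip the sign of the diffusion term.

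\textbf{Step 2: substitute into the known reverse PDE.} Start from the PDE for the reverse density, \eqref{eq:reverse-density-pde}. Split its term $-\tfrac12 g_{1-\tau}^2\lap q_\tau$ as
\[
-g_{1-\tau}^2\lap q_\tau+\tfrac12 g_{1-\tau}^2\lap q_\tau .
\]
In the first piece, apply Step 1 to write $-g_{1-\tau}^2\lap q_\tau=-\diver\bigl(g_{1-\tau}^2 q_\tau\grad\log q_\tau\bigr)$. Collecting terms gives
\[
\partial_\tau q_\tau=-\diver\Bigl(q_\tau\bigl[-f_{1-\tau}x+g_{1-\tau}^2\grad\log q_\tau\bigr]\Bigr)+\tfrac12 g_{1-\tau}^2\lap q_\tau .
\]
The bracket is exactly $b^{\mathrm{rev}}_\tau$ from \eqref{eq:reverse-sde-drift-tau}. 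Hence $q_\tau$ satisfies the Fokker--Planck equation of \eqref{eq:reverse-sde-tau}. The initial condition $Y_0\sim p_1=q_0$ matches by definition.

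\textbf{Step 3: conclude the density path.} If $\rho_\tau$ denotes the actual density path of the SDE, then $\rho_\tau$ and $q_\tau$ solve the same linear Fokker--Planck equation with the same initial datum. By uniqueness they coincide.

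The main obstacle is making Step 3 rigorous. Uniqueness for the Fokker--Planck equation requires regularity and growth conditions on $b^{\mathrm{rev}}$, for example local Lipschitz continuity of $\grad\log q_\tau$ and positivity of $q_\tau$. This is where the Gaussian-mixture structure of $p_t$ helps: for $t>0$ it gives smoothness and strict positivity. Near $\tau=1$, i.e.\ $t=0$ with $\sigma_t\to0$, the score may blow up. For $\tau=0$ there is the additional issue that $\alpha_1=0$, so $f_1=\dot\alpha_1/\alpha_1$ is singular. I would handle both endpoints by proving the result on $[\epsilon,1-\epsilon]$ and passing to the limit. In keeping with the tutorial's level, I would state uniqueness under standard assumptions, as in \cite{anderson1982}, rather than prove it. The computational heart of the argument is the one-line score identity of Step 1.
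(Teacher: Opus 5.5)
Your proposal is correct and matches the paper's proof. The paper substitutes $b^{\mathrm{rev}}_\tau$ into the Fokker--Planck right-hand side and uses $\diver(q_\tau\grad\log q_\tau)=\lap q_\tau$ to recover \eqref{eq:reverse-density-pde}, which is your Steps 1--2 run in the opposite direction. Your Step 3 adds something the paper leaves out: it invokes uniqueness, and it flags the singularity of $f_{1-\tau}$ at $\tau=0$ (where $\alpha_1=0$) and the possible blow-up of the score at $\tau=1$, whereas the paper's proof stops once $q_\tau$ is shown to satisfy the Fokker--Planck equation.
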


\begin{proof}
We claim that $q_\tau$ satisfies the Fokker--Planck equation
\[
\partial_\tau q_\tau(x)
= -\diver\bigl(b_\tau^{\mathrm{rev}}(x) q_\tau(x)\bigr)
+ \frac{1}{2}g_{1-\tau}^2 \lap q_\tau(x).
\]

Substitute \eqref{eq:reverse-sde-drift-tau}:
\begin{align*}
&-\diver\bigl(b_\tau^{\mathrm{rev}}(x)q_\tau(x)\bigr)
+ \frac{1}{2}g_{1-\tau}^2 \lap q_\tau(x)\\
&\qquad=
-\diver\left(\left[-f_{1-\tau}x + g_{1-\tau}^2 \grad\log q_\tau(x)\right]q_\tau(x)\right)
+ \frac{1}{2}g_{1-\tau}^2 \lap q_\tau(x)\\
&\qquad=
\diver\bigl(f_{1-\tau}x\,q_\tau(x)\bigr)
- g_{1-\tau}^2 \diver\bigl(q_\tau(x)\grad\log q_\tau(x)\bigr)
+ \frac{1}{2}g_{1-\tau}^2 \lap q_\tau(x).
\end{align*}
Since $q_\tau \grad\log q_\tau=\grad q_\tau$, we have
\[
\diver\bigl(q_\tau(x)\grad\log q_\tau(x)\bigr)=\lap q_\tau(x).
\]
Hence
\[
-\diver\bigl(b_\tau^{\mathrm{rev}}(x) q_\tau(x)\bigr)
+ \frac{1}{2}g_{1-\tau}^2 \lap q_\tau(x)
= \diver\bigl(f_{1-\tau}x\,q_\tau(x)\bigr)
- \frac{1}{2}g_{1-\tau}^2 \lap q_\tau(x),
\]
which is exactly \eqref{eq:reverse-density-pde}. Therefore $q_\tau$ satisfies the Fokker--Planck equation
\end{proof}

\begin{corollary}[Reverse SDE written in the original time label]
Rewriting \eqref{eq:reverse-sde-tau} in the original time label $t$ gives
\begin{equation}
\dd X_t
= \Bigl(f_t X_t - g_t^2 \grad\log p_t(X_t)\Bigr)\dd t
+ g_t \dd \bar W_t,
\qquad
t:1\to 0,\qquad X_1\sim p_1
\label{eq:reverse-sde-t}
\end{equation}
\end{corollary}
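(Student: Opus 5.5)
The plan is a direct change of variables $t=1-\tau$ in the reverse SDE \eqref{eq:reverse-sde-tau}. Write $Y_\tau=X_{1-\tau}$ and $\bar W_t:=W^{\mathrm{rev}}_{1-t}$, as in the Setup. Since $\tau=1-t$, we have $\dd\tau=-\dd t$. The drift term therefore transforms as
\[
b^{\mathrm{rev}}_\tau(Y_\tau)\,\dd\tau=\bigl(-f_tX_t+g_t^2\grad\log p_t(X_t)\bigr)(-\dd t)=\bigl(f_tX_t-g_t^2\grad\log p_t(X_t)\bigr)\dd t.
\]
Here we use $q_\tau=p_{1-\tau}=p_t$, so $\grad\log q_\tau=\grad\log p_t$, together with $f_{1-\tau}=f_t$ and $g_{1-\tau}=g_t$. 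The initial condition $Y_0\sim q_0=p_1$ becomes $X_1\sim p_1$, and $\tau:0\to1$ corresponds to $t:1\to0$.

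The only step that needs care is the noise term, and it is the main obstacle. We need $g_{1-\tau}\,\dd W^{\mathrm{rev}}_\tau=g_t\,\dd\bar W_t$, with the increment read along the decreasing direction of $t$. I will take this as a convention: $\dd\bar W_t$ stands for the reverse-time Itô increment $\bar W_{t-\dd t}-\bar W_t=W^{\mathrm{rev}}_{\tau+\dd\tau}-W^{\mathrm{rev}}_\tau$, with $\dd t>0$ understood as a step backward. Under this convention the increment is a $\N(0,\dd t\,I)$ vector, independent of the past of the reverse process, and no sign flip appears. The drift term, by contrast, is a Lebesgue integral, so it genuinely changes sign under $\dd\tau=-\dd t$. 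I would cite Appendix~\ref{app:reverse-brownian} for this convention. To check consistency, I would note that an Euler step of \eqref{eq:reverse-sde-t} from $t_n$ to $t_{n-1}$ is
\[
X_{t_{n-1}}=X_{t_n}-\bigl(f_{t_n}X_{t_n}-g_{t_n}^2\grad\log p_{t_n}(X_{t_n})\bigr)(t_n-t_{n-1})+g_{t_n}z,
\]
with $z\sim\N(0,(t_n-t_{n-1})I)$. This coincides exactly with an Euler step of \eqref{eq:reverse-sde-tau} on the $\tau$-grid. The two equations therefore describe the same process, and the marginals $p_t$ follow from the previous theorem.
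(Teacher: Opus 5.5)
Your proposal is correct and follows the same route as the paper. You substitute $\tau=1-t$, use $q_\tau=p_t$, $f_{1-\tau}=f_t$, $g_{1-\tau}=g_t$ so that the drift $b^{\mathrm{rev}}_\tau$ flips sign under $\dd\tau=-\dd t$, and read $g_{1-\tau}\dd W^{\mathrm{rev}}_\tau$ as $g_t\dd\bar W_t$ via $\bar W_t=W^{\mathrm{rev}}_{1-t}$. You are more explicit than the paper's one-line proof about why the noise term does not change sign, and your Euler check matches the convention the paper uses in Section~5 ($\dd t=-\Delta t_n$ in the drift, $\Delta\bar W_n\sim\N(0,\Delta t_n I)$ for the noise). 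One small fix: keep the sign of $\dd t$ consistent, since you take $\dd t<0$ in the drift computation but $\dd t>0$ when describing $\dd\bar W_t$.
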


\begin{proof}
Set $\tau=1-t$ and $Y_\tau=X_t$. Then $q_\tau=p_t$, and the drift \eqref{eq:reverse-sde-drift-tau} becomes
\[
b_\tau^{\mathrm{rev}}(x)=-f_t x+g_t^2\grad\log p_t(x).
\]
Writing the same diffusion in backward $t$ notation gives \eqref{eq:reverse-sde-t}.
\end{proof}
\subsection{Reverse probability-flow ODE}

\begin{theorem}[Reverse ODE in $\tau$]
The ODE
\begin{equation}
\frac{\dd Y_\tau}{\dd \tau}
= -f_{1-\tau}Y_\tau + \frac{1}{2}g_{1-\tau}^2 \grad\log q_\tau(Y_\tau),
\qquad
Y_0 \sim q_0=p_1,
\label{eq:reverse-ode-tau}
\end{equation}
has the same density path $q_\tau$ as the reverse SDE.
\end{theorem}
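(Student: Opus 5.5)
The plan is to repeat the strategy used for the reverse SDE. For a deterministic ODE, ``has density path $q_\tau$'' means that $q_\tau$ satisfies the continuity equation for the ODE's velocity field. So I would define the reverse probability-flow velocity
\[
v_\tau(x) := -f_{1-\tau}x + \tfrac{1}{2}g_{1-\tau}^2 \grad\log q_\tau(x)
\]
and show that $q_\tau$ satisfies
\[
\partial_\tau q_\tau(x) = -\diver\bigl(q_\tau(x) v_\tau(x)\bigr).
\]
By Proposition~\eqref{eq:reverse-density-pde}, it suffices to check that the right-hand side equals $\diver\bigl(f_{1-\tau}x\,q_\tau\bigr) - \tfrac12 g_{1-\tau}^2\lap q_\tau$.

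The key computation expands the divergence:
\[
-\diver(q_\tau v_\tau) = \diver\bigl(f_{1-\tau}x\,q_\tau\bigr) - \tfrac12 g_{1-\tau}^2 \diver\bigl(q_\tau\grad\log q_\tau\bigr).
\]
Using $q_\tau\grad\log q_\tau = \grad q_\tau$, as in the proof of the reverse SDE, the last divergence becomes $\lap q_\tau$. This reproduces \eqref{eq:reverse-density-pde} exactly.

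An alternative route gives an independent check. The marginal forward ODE theorem already shows that $p_t$ satisfies the continuity equation with velocity $u_t(x) = f_t x - \tfrac12 g_t^2\grad\log p_t(x)$. Substituting $t=1-\tau$ flips the sign of $\partial_t$, so $q_\tau$ satisfies the continuity equation with velocity $-u_{1-\tau}$, and this is precisely $v_\tau$. I would mention this to emphasize that the probability-flow ODE is just the forward marginal ODE run backward.

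Finally, since the reverse SDE theorem showed that the SDE's density path is also $q_\tau$, both processes share the same initial law $q_0=p_1$ and the same density evolution. The step needing care is uniqueness: the argument shows that $q_\tau$ \emph{solves} the continuity equation. Concluding that the ODE's law \emph{is} $q_\tau$ requires uniqueness of solutions of the continuity equation given the initial condition, which holds when $v_\tau$ is locally Lipschitz with suitable growth. Here $q_\tau$ is a Gaussian-smoothed density for $\tau<1$, so $\grad\log q_\tau$ is smooth, and the delicate endpoint is $\tau\to1$ (that is, $\sigma_t\to0$). I would state this regularity assumption at the same level of rigor as the preceding theorems rather than prove it.
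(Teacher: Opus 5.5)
Your proposal is correct and is essentially the paper's argument. The paper starts from the reverse-SDE Fokker--Planck equation and uses $\lap q_\tau=\diver(q_\tau\grad\log q_\tau)$ to move half of the score term out of the drift, reading off the velocity $-f_{1-\tau}x+\frac12 g_{1-\tau}^2\grad\log q_\tau$; you use the same identity to match $-\diver(q_\tau v_\tau)$ with \eqref{eq:reverse-density-pde}, and your time-reversal check via the forward marginal ODE and your uniqueness caveat are valid additions the paper does not make.
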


\begin{proof}
From the reverse SDE proof,
\[
\partial_\tau q_\tau(x)
= -\diver\bigl(b_\tau^{\mathrm{rev}}(x)q_\tau(x)\bigr)
+ \frac{1}{2}g_{1-\tau}^2 \lap q_\tau(x),
\]
where $b_\tau^{\mathrm{rev}}(x)=-f_{1-\tau}x+g_{1-\tau}^2\grad\log q_\tau(x)$. Since
\[
\lap q_\tau(x)=\diver\bigl(q_\tau(x)\grad\log q_\tau(x)\bigr),
\]
we can rewrite the right-hand side as
\begin{align*}
\partial_\tau q_\tau(x)
&= -\diver\bigl(b_\tau^{\mathrm{rev}}(x)q_\tau(x)\bigr)
+ \frac{1}{2}g_{1-\tau}^2 \diver\bigl(q_\tau(x)\grad\log q_\tau(x)\bigr)\\
&= -\diver\left(
q_\tau(x)\left[
b_\tau^{\mathrm{rev}}(x)-\frac{1}{2}g_{1-\tau}^2\grad\log q_\tau(x)
\right]
\right).
\end{align*}
Substituting the expression for $b_\tau^{\mathrm{rev}}$ yields
\[
\partial_\tau q_\tau(x)
= -\diver\left(
q_\tau(x)\left[
-f_{1-\tau}x+\frac{1}{2}g_{1-\tau}^2\grad\log q_\tau(x)
\right]
\right),
\]
which is precisely the continuity equation of \eqref{eq:reverse-ode-tau}.
\end{proof}

\begin{corollary}[Reverse ODE written in the original time label]
Rewriting \eqref{eq:reverse-ode-tau} in backward $t$ notation gives
\begin{equation}
\frac{\dd X_t}{\dd t}
= f_t X_t - \frac{1}{2}g_t^2 \grad\log p_t(X_t),
\qquad
t:1\to 0,\qquad X_1\sim p_1
\label{eq:reverse-ode-t}
\end{equation}
\end{corollary}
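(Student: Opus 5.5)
The plan is a direct change of variables in the reverse ODE \eqref{eq:reverse-ode-tau}, mirroring the argument for the corollary on the reverse SDE but simpler, since no Brownian motion has to be reparametrized. I will set $\tau=1-t$ and define $X_t:=Y_{1-t}$, so that $t$ runs from $1$ down to $0$ as $\tau$ runs from $0$ up to $1$. By \eqref{eq:reverse-process-density}, $q_\tau=p_{1-\tau}=p_t$. Hence $\grad\log q_\tau(Y_\tau)=\grad\log p_t(X_t)$, $f_{1-\tau}=f_t$ and $g_{1-\tau}=g_t$. The initial condition $Y_0\sim q_0=p_1$ becomes $X_1\sim p_1$.

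The key step is the chain rule. Since $X_t=Y_{1-t}$, we have
\[
\frac{\dd X_t}{\dd t}=-\frac{\dd Y_\tau}{\dd\tau}\Big|_{\tau=1-t}.
\]
Substituting the right-hand side of \eqref{eq:reverse-ode-tau} gives
\[
\frac{\dd X_t}{\dd t}=-\Bigl(-f_tX_t+\tfrac12 g_t^2\grad\log p_t(X_t)\Bigr),
\]
which is exactly \eqref{eq:reverse-ode-t}. I will also note that, read as an ODE integrated backward from $t=1$, \eqref{eq:reverse-ode-t} has the same vector field as the forward marginal ODE \eqref{eq:marginal-forward-ode}, namely $u_t(x)=f_tx-\tfrac12 g_t^2\grad\log p_t(x)$ from \eqref{eq:marginal-forward-velocity-def}. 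So its density path is $p_t$, consistent with the marginal continuity equation \eqref{eq:marginal-forward-continuity}.

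There is no real obstacle. The only point needing care is the sign flip from $\dd\tau=-\dd t$, which negates the whole drift. The contrast with the SDE corollary is also worth stating: there, the score term keeps the coefficient $g_t^2$ and the backward noise $\bar W_t$ must be introduced, whereas here the dynamics are deterministic, so the relabeling is a pure reparametrization of time and the $\tfrac12 g_t^2$ coefficient carries over unchanged.
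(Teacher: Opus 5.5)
Your proposal is correct and matches the paper's proof essentially verbatim. Like the paper, you set $\tau=1-t$, $Y_\tau=X_t$, $q_\tau=p_t$, use $\dd Y_\tau/\dd\tau=-\dd X_t/\dd t$, and negate the right-hand side of \eqref{eq:reverse-ode-tau}; your extra remark that the resulting vector field coincides with the forward marginal velocity \eqref{eq:marginal-forward-velocity-def} is a correct consistency check, though the corollary does not need it.
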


\begin{proof}
As before, use $\tau=1-t$, $Y_\tau=X_t$, and $q_\tau=p_t$. Since
\[
\frac{\dd Y_\tau}{\dd \tau}=-\frac{\dd X_t}{\dd t},
\]
equation \eqref{eq:reverse-ode-tau} becomes
\[
-\frac{\dd X_t}{\dd t}
= -f_t X_t + \frac{1}{2}g_t^2 \grad\log p_t(X_t),
\]
which is equivalent to \eqref{eq:reverse-ode-t}.
\end{proof}

\begin{remark}[Important distinction]
The reverse ODE shares the same one-time density path as the reverse process, but it is not the same stochastic process law as the reverse SDE unless the diffusion coefficient vanishes. This deterministic ODE is therefore best understood as a probability-flow ODE for the reverse density path.
\end{remark}

\section{Learning the Score for Reverse Sampling}

Sections~3 derived the reverse SDE and reverse ODE associated with the forward diffusion. To use these reverse dynamics for generation, we must identify the unknown term in the reverse equations and learn it from data. This section presents that story in a narrative order: we first isolate the unknown quantity in the reverse dynamics; next we introduce a score model and a score-matching objective; then we connect the score to the posterior mean noise in the forward process; after that we reparameterize the score model as a noise predictor and derive the standard denoising loss; finally we write the learned reverse SDE and reverse ODE in both score-model and noise-prediction form.

\subsection{The unknown part of the reverse ODE and reverse SDE}

Recall that the reverse SDE and reverse ODE are
\begin{align}
\dd X_t
&=
\Bigl(f_tX_t-g_t^2\grad\log p_t(X_t)\Bigr)\dd t
+g_t\,\dd\bar W_t,
\qquad
t:1\to 0, \label{eq:section4-reverse-sde-recall}\\
\frac{\dd X_t}{\dd t}
&=
f_tX_t-\frac{1}{2}g_t^2\grad\log p_t(X_t),
\qquad
t:1\to 0.
\label{eq:section4-reverse-ode-recall}
\end{align}
The coefficients $f_t$ and $g_t$ are determined by the chosen forward process, so they are known once the noise schedule has been fixed. The only unknown term in both reverse dynamics is therefore the time-dependent score
\[
s_t^*(x)
:=
\grad\log p_t(x).
\]
Thus the central problem in reverse-time sampling is to estimate the score function along the forward density path.

\subsection{A neural score model and score matching}

A natural strategy is to approximate the score by a neural network
\[
s_\theta(x,t)\approx s_t^*(x)=\grad\log p_t(x).
\]
This leads to the score-matching objective
\begin{equation}
\mathcal L_{\mathrm{SM}}(\theta)
:=
\frac{1}{2}\int_0^1 \lambda(t)\,
\E_{X_t\sim p_t}\left[
\left\|s_\theta(X_t,t)-\grad\log p_t(X_t)\right\|_2^2
\right]\dd t,
\label{eq:raw-score-matching}
\end{equation}
where $\lambda(t)\ge 0$ is a user-chosen weighting function. In principle, minimizing \eqref{eq:raw-score-matching} would directly learn the unknown part of the reverse ODE and reverse SDE. In practice, however, the target $\grad\log p_t(x)$ is not available in closed form, so we need a tractable reformulation of the same objective.

\subsection{The score as a conditional expectation of the forward noise}

The key observation comes from the forward reparameterization
\[
X_t=\alpha_t X_0+\sigma_t\varepsilon,
\qquad
\varepsilon \sim \N(0,I).
\]
It implies that the score can be expressed in terms of a conditional expectation of the Gaussian noise.

\begin{proposition}[Posterior mean noise identity]
If
\[
X_t=\alpha_t X_0+\sigma_t\varepsilon,
\qquad
\varepsilon \sim \N(0,I),
\]
then
\begin{equation}
\E[\varepsilon\mid X_t=x]
= -\sigma_t \grad\log p_t(x).
\label{eq:posterior-mean-noise}
\end{equation}
\end{proposition}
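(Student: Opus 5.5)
The plan is to express $\varepsilon$ through $X_t$ and $X_0$, and then apply Fisher's identity (Appendix~\ref{app:fisher}), which the paper already used to derive the marginal forward velocity \eqref{eq:marginal-forward-velocity-def}.

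Fix $t\in(0,1]$, so that $\sigma_t>0$. From the reparameterization \eqref{eq:forward-reparam} we have, almost surely,
\[
\varepsilon=\frac{X_t-\alpha_t X_0}{\sigma_t}.
\]
Taking the conditional expectation given $X_t=x$ and using that $x$ is fixed under the conditioning gives
\[
\E[\varepsilon\mid X_t=x]=\frac{x-\alpha_t\E[X_0\mid X_t=x]}{\sigma_t}.
\]

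Next I compare this with the conditional score \eqref{eq:conditional-score}. It says $-\sigma_t\grad\log p_t(x\mid x_0)=(x-\alpha_t x_0)/\sigma_t$, so pointwise
\[
\frac{x-\alpha_t X_0}{\sigma_t}=-\sigma_t\grad\log p_t(x\mid X_0).
\]
Hence
\[
\E[\varepsilon\mid X_t=x]=-\sigma_t\,\E[\grad\log p_t(x\mid X_0)\mid X_t=x].
\]

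Fisher's identity then replaces the conditional expectation of the conditional score by the marginal score $\grad\log p_t(x)$, which yields \eqref{eq:posterior-mean-noise}.

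If I want a self-contained check instead of citing the appendix, I would write the posterior as $p(x_0\mid x)=p_t(x\mid x_0)p_0(x_0)/p_t(x)$. I would then compute
\[
\grad p_t(x)=\int \grad p_t(x\mid x_0)\,p_0(x_0)\dd x_0=\int p_t(x\mid x_0)\grad\log p_t(x\mid x_0)\,p_0(x_0)\dd x_0,
\]
and divide by $p_t(x)$.

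The only delicate step is justifying differentiation under the integral sign. For $\sigma_t>0$ the Gaussian kernel and its gradient are dominated by integrable envelopes locally uniformly in $x$, provided $p_0$ has a finite first moment. I expect the appendix to handle this, so I would just assume it. I would also remark that the identity is meant for $t>0$, since at $t=0$ we have $\sigma_0=0$ and both sides degenerate.
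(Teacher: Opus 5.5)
Your proposal is correct and follows essentially the same route as the paper: it writes $\varepsilon=(X_t-\alpha_t X_0)/\sigma_t$ as $-\sigma_t\grad\log p_t(X_t\mid X_0)$ using the conditional score, conditions on $X_t=x$, and applies Fisher's identity. Your added restriction to $t>0$ and your care about differentiating under the integral sign are sound refinements. The first-moment assumption is not even needed for the domination step, since for $\sigma_t>0$ the kernel gradient $\grad p_t(x\mid x_0)$ is bounded uniformly in $(x,x_0)$.
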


\begin{proof}
From the forward reparameterization,
\[
\varepsilon=\frac{X_t-\alpha_t X_0}{\sigma_t}.
\]
By \eqref{eq:conditional-score}, for every fixed $x_0$,
\[
\grad_x\log p_t(x \mid x_0)
= -\frac{x-\alpha_t x_0}{\sigma_t^2},
\]
so
\[
-\sigma_t \grad_x\log p_t(x \mid x_0)
= \frac{x-\alpha_t x_0}{\sigma_t}.
\]
Substituting $x=X_t$ and $x_0=X_0$ yields
\[
-\sigma_t \grad\log p_t(X_t \mid X_0)=\varepsilon.
\]
Now condition on the event $X_t=x$. After conditioning, the remaining randomness is through the posterior variable $X_0\mid X_t=x$ (equivalently, through $\varepsilon\mid X_t=x$). Therefore
\[
\E[\varepsilon\mid X_t=x]
=
-\sigma_t \E[\grad_x\log p_t(x \mid X_0)\mid X_t=x].
\]
Since the quantity inside the conditional expectation depends on the remaining randomness only through $X_0$, we may write
\[
\E[\varepsilon\mid X_t=x]
=
-\sigma_t \E_{X_0\mid X_t=x}[\grad_x\log p_t(x \mid X_0)].
\]
Fisher's identity gives
\[
\E_{X_0\mid X_t=x}[\grad_x\log p_t(x \mid X_0)]
= \grad\log p_t(x),
\]
which proves \eqref{eq:posterior-mean-noise}.
\end{proof}

\subsection{Reparameterizing the score model as a noise predictor}

The previous proposition suggests introducing the \emph{ideal noise predictor}
\begin{equation}
\epsilon^*(x,t)
:=
-\sigma_t \grad\log p_t(x)
= \E[\varepsilon\mid X_t=x].
\label{eq:ideal-epsilon}
\end{equation}
We now reparameterize the score model by
\[
\epsilon_\theta(x,t)
:=
-\sigma_t s_\theta(x,t).
\]
Then
\[
s_\theta(x,t)-\grad\log p_t(x)
=
-\frac{1}{\sigma_t}\bigl(\epsilon_\theta(x,t)-\epsilon^*(x,t)\bigr),
\]
so
\[
\left\|s_\theta(x,t)-\grad\log p_t(x)\right\|_2^2
=
\frac{1}{\sigma_t^2}
\left\|\epsilon_\theta(x,t)-\epsilon^*(x,t)\right\|_2^2.
\]
Therefore the factor $\sigma_t^{-2}$ can be absorbed into the time weighting. Renaming the resulting weight by $\omega(t)$, the score-matching objective \eqref{eq:raw-score-matching} becomes
\begin{equation}
\mathcal L(\theta)
:=
\frac{1}{2}\int_0^1 \omega(t)\,
\E_{X_t\sim p_t}\left[
\left\|\epsilon_\theta(X_t,t)+\sigma_t\grad\log p_t(X_t)\right\|_2^2
\right]\dd t.
\label{eq:score-loss}
\end{equation}
This is still score matching; it is simply written in the equivalent noise-prediction parameterization.

\begin{theorem}[Score loss and noise-prediction loss]
Let $\omega(t)\ge 0$ be a weighting function. Then minimizing \eqref{eq:score-loss} is equivalent, up to a constant independent of $\theta$, to minimizing
\begin{equation}
\mathcal L(\theta)
=
\frac{1}{2}\int_0^1 \omega(t)\,
\E_{X_0,\varepsilon}\left[
\left\|\epsilon_\theta(\alpha_t X_0+\sigma_t\varepsilon,t)-\varepsilon\right\|_2^2
\right]\dd t + C.
\label{eq:noise-loss}
\end{equation}
\end{theorem}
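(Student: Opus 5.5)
The plan is to prove the equivalence pointwise in $t$ with a bias--variance (orthogonality) decomposition; the claim then follows by integrating against $\omega(t)\ge 0$. Fix $t\in(0,1]$ and write $X_t=\alpha_t X_0+\sigma_t\varepsilon$ with $X_0\sim p_0$ and $\varepsilon\sim\N(0,I)$ independent, so that $X_t\sim p_t$. Let $\epsilon^*(x,t)=\E[\varepsilon\mid X_t=x]$ be the ideal noise predictor \eqref{eq:ideal-epsilon}. By the posterior mean noise identity \eqref{eq:posterior-mean-noise} this equals $-\sigma_t\grad\log p_t(x)$. Hence the integrand of \eqref{eq:score-loss} is $\E\|\epsilon_\theta(X_t,t)-\epsilon^*(X_t,t)\|_2^2$.

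The key step is to expand the noise-prediction integrand by inserting $\epsilon^*$:
\[
\E\|\epsilon_\theta(X_t,t)-\varepsilon\|_2^2
=\E\|\epsilon_\theta(X_t,t)-\epsilon^*(X_t,t)\|_2^2
+2\,\E\bigl[(\epsilon_\theta(X_t,t)-\epsilon^*(X_t,t))^\top(\epsilon^*(X_t,t)-\varepsilon)\bigr]
+\E\|\epsilon^*(X_t,t)-\varepsilon\|_2^2 .
\]
To kill the cross term, I will use the tower property and condition on $X_t$. The factor $\epsilon_\theta(X_t,t)-\epsilon^*(X_t,t)$ is $X_t$-measurable, so it can be pulled out of the inner expectation. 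The remaining factor satisfies $\E[\epsilon^*(X_t,t)-\varepsilon\mid X_t]=0$ by the definition of $\epsilon^*$. This orthogonality argument is the one the paper defers to its appendix.

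The last term $C_t:=\E\|\varepsilon-\E[\varepsilon\mid X_t]\|_2^2$ is the posterior variance of the noise. It depends only on $p_0$ and the schedule, not on $\theta$. Multiplying by $\tfrac12\omega(t)$ and integrating over $t$ gives
\[
\text{\eqref{eq:noise-loss} without }C \;=\; \text{\eqref{eq:score-loss}} + \tfrac12\int_0^1\omega(t)C_t\,\dd t .
\]
This is the claim with $C=-\tfrac12\int_0^1\omega(t)C_t\,\dd t$. The two objectives therefore differ by a $\theta$-independent constant and have the same minimizers.

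The main obstacle is integrability rather than the algebra. The cross-term manipulation requires $\epsilon_\theta(X_t,t)$ to be square-integrable, and it requires $\int_0^1\omega(t)C_t\,\dd t<\infty$ so that the constant is finite. I will assume the former. For the latter I will bound $C_t\le\E\|\varepsilon\|_2^2=d$, so finiteness follows whenever $\omega$ is integrable.

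A second delicate point is the endpoint $t=0$, where $\sigma_0=0$ makes $\epsilon^*=-\sigma_t\grad\log p_t$ ill-defined. I will handle it by working with $\epsilon^*:=\E[\varepsilon\mid X_t]$ directly. That conditional expectation is always defined, and a single time point has measure zero in the $\dd t$ integral, so the endpoint does not affect either objective.
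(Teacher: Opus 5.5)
Your proposal is correct and follows the paper's proof. Both arguments rewrite the score loss via the posterior mean noise identity as $\E\|\epsilon_\theta(X_t,t)-\E[\varepsilon\mid X_t]\|_2^2$ and then apply the orthogonality (bias--variance) decomposition; the only difference is that you prove the vanishing of the cross term inline by the tower property rather than citing the appendix. Your added remarks on finiteness of the constant (via $C_t\le d$ for integrable $\omega$) and on the measure-zero endpoint $t=0$ are sound refinements that the paper leaves implicit.
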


\begin{proof}
By \eqref{eq:ideal-epsilon}, the objective \eqref{eq:score-loss} can be rewritten as
\[
\mathcal L(\theta)
=
\frac{1}{2}\int_0^1 \omega(t)\,
\E_{X_t}\left[
\left\|\epsilon_\theta(X_t,t)-\E[\varepsilon\mid X_t]\right\|_2^2
\right]\dd t.
\]
Apply the orthogonality identity from Appendix~\ref{app:orthogonality} with $Z=X_t$ and $a(Z)=\epsilon_\theta(X_t,t)$:
\[
\E_{Z,\varepsilon}\|a(Z)-\varepsilon\|_2^2
= \E_Z\|a(Z)-\E[\varepsilon\mid Z]\|_2^2
+ \E_{Z,\varepsilon}\|\varepsilon-\E[\varepsilon\mid Z]\|_2^2.
\]
The second term is independent of $\theta$. Hence minimizing \eqref{eq:score-loss} is equivalent to minimizing
\[
\frac{1}{2}\int_0^1 \omega(t)\,
\E_{X_0,\varepsilon}\left[
\left\|\epsilon_\theta(X_t,t)-\varepsilon\right\|_2^2
\right]\dd t + C.
\]
Finally, substitute $X_t=\alpha_t X_0+\sigma_t\varepsilon$ to obtain \eqref{eq:noise-loss}.
\end{proof}

\subsection{The learned reverse SDE and reverse ODE}

Once the score model has been learned, replacing the marginal score by the learned score model,
\[
\grad\log p_t(x)\approx s_\theta(x,t),
\]
The reverse SDE and reverse ODE can be written directly in score-model form as
\[
\dd X_t
=
\Bigl(f_tX_t-g_t^2 s_\theta(X_t,t)\Bigr)\dd t
+g_t\,\dd\bar W_t,
\qquad
t:1\to 0,
\]
and
\[
\frac{\dd X_t}{\dd t}
=
f_tX_t-\frac{1}{2}g_t^2 s_\theta(X_t,t),
\qquad
t:1\to 0.
\]
Using the equivalent parameterization
\[
s_\theta(x,t)=-\frac{1}{\sigma_t}\epsilon_\theta(x,t),
\]

turns the reverse SDE \eqref{eq:reverse-sde-t} into
\begin{equation}
\dd X_t
= \left(
f_t X_t + \frac{g_t^2}{\sigma_t}\epsilon_\theta(X_t,t)
\right)\dd t
+ g_t \dd \bar W_t,
\qquad
t:1\to 0,
\label{eq:learned-reverse-sde}
\end{equation}
and turns the reverse ODE \eqref{eq:reverse-ode-t} into
\begin{equation}
\frac{\dd X_t}{\dd t}
= f_t X_t + \frac{g_t^2}{2\sigma_t}\epsilon_\theta(X_t,t),
\qquad
t:1\to 0.
\label{eq:learned-reverse-ode}
\end{equation}

\section{Sampling from the Reverse ODE and SDE}

\subsection{Terminal distribution}

Diffusion models are designed so that the terminal marginal is approximately Gaussian:
\begin{equation}
p_1(x)\approx \N(0,\tilde{\sigma}^2 I).
\label{eq:terminal-gaussian}
\end{equation}
Sampling therefore begins from
\[
X_1 \sim \N(0,\tilde{\sigma}^2 I).
\]

\subsection{Basic reverse SDE sampler}

To sample from the learned reverse SDE \eqref{eq:learned-reverse-sde}, choose a decreasing time grid
\[
1=t_N>t_{N-1}>\cdots>t_1>t_0=0,
\qquad
\Delta t_n:=t_n-t_{n-1}>0.
\]
The quantity $\Delta t_n$ is the positive numerical step size. Although the reverse SDE is written with the convention $t:1\to 0$, the differential increment along one numerical step satisfies
\[
\dd t = t_{n-1}-t_n = -\Delta t_n<0.
\]
Consequently, every explicit backward update acquires a minus sign in front of the drift evaluated at the right endpoint $t_n$.

Initialize
\[
X_{t_N}\sim \N(0,\tilde{\sigma}^2 I).
\]
To connect the discrete update with the continuous reverse SDE, write \eqref{eq:learned-reverse-sde} as
\[
\dd X_t = b_\theta(X_t,t)\,\dd t + g_t\,\dd \bar W_t,
\qquad
b_\theta(x,t):=f_t x+\frac{g_t^2}{\sigma_t}\epsilon_\theta(x,t).
\]
Integrating from $t_n$ down to $t_{n-1}$ gives
\[
X_{t_{n-1}}-X_{t_n}
=
\int_{t_n}^{t_{n-1}} b_\theta(X_s,s)\,\dd s
+ \int_{t_n}^{t_{n-1}} g_s\,\dd \bar W_s.
\]
Approximating the drift by its value at the right endpoint yields
\[
\int_{t_n}^{t_{n-1}} b_\theta(X_s,s)\,\dd s
=
-b_\theta(X_{t_n},t_n)\Delta t_n + O(\Delta t_n^2).
\]
For the stochastic term, define the reverse-time Brownian increment
\[
\Delta \bar W_n
:=
\int_{t_n}^{t_{n-1}}\dd \bar W_s.
\]
Since $\bar W_t=W^{\mathrm{rev}}_{1-t}$ and $W^{\mathrm{rev}}_\tau$ is a standard Brownian motion in the increasing variable $\tau=1-t$, we have
\[
\Delta \bar W_n
=
W^{\mathrm{rev}}_{1-t_{n-1}}-W^{\mathrm{rev}}_{1-t_n}
\sim \N(0,\Delta t_n I).
\]
Therefore the stochastic integral is implemented by sampling
\[
\Delta \bar W_n=\sqrt{\Delta t_n}\,Z_n,
\qquad
Z_n\sim\N(0,I),
\]
which is exactly the meaning of the symbol $\dd \bar W_t$ in the numerical scheme.

Thus, for $n=N,N-1,\dots,1$, we sample $Z_n\sim \N(0,I)$ independently and update
\begin{equation}
X_{t_{n-1}}
= X_{t_n}
- \left[
f_{t_n}X_{t_n}
+ \frac{g_{t_n}^2}{\sigma_{t_n}}\epsilon_\theta(X_{t_n},t_n)
\right]\Delta t_n
+ g_{t_n}\sqrt{\Delta t_n}\,Z_n.
\label{eq:reverse-sde-euler}
\end{equation}
Equation \eqref{eq:reverse-sde-euler} is therefore the first-order Euler--Maruyama discretization of the reverse SDE \eqref{eq:learned-reverse-sde}.

\subsection{Basic reverse ODE sampler}

The learned reverse ODE \eqref{eq:learned-reverse-ode} can be solved with any numerical ODE solver. The simplest explicit backward-in-time Euler update is
\[
\frac{\dd X_t}{\dd t}=h_\theta(X_t,t),
\qquad
h_\theta(x,t):=f_t x+\frac{g_t^2}{2\sigma_t}\epsilon_\theta(x,t).
\]
Integrating from $t_n$ down to $t_{n-1}$ gives
\[
X_{t_{n-1}}-X_{t_n}
=
\int_{t_n}^{t_{n-1}} h_\theta(X_s,s)\,\dd s
=
-h_\theta(X_{t_n},t_n)\Delta t_n+O(\Delta t_n^2),
\]
which produces the explicit first-order backward update
\begin{equation}
X_{t_{n-1}}
= X_{t_n}
- \left[
f_{t_n}X_{t_n}
+ \frac{g_{t_n}^2}{2\sigma_{t_n}}\epsilon_\theta(X_{t_n},t_n)
\right]\Delta t_n.
\label{eq:reverse-ode-euler}
\end{equation}

A higher-order option is Heun's method:
\begin{align}
k_1 &= f_{t_n}X_{t_n} + \frac{g_{t_n}^2}{2\sigma_{t_n}}\epsilon_\theta(X_{t_n},t_n), \label{eq:heun-k1}\\
\widetilde X &= X_{t_n}-\Delta t_n\,k_1, \label{eq:heun-pred}\\
k_2 &= f_{t_{n-1}}\widetilde X + \frac{g_{t_{n-1}}^2}{2\sigma_{t_{n-1}}}\epsilon_\theta(\widetilde X,t_{n-1}), \label{eq:heun-k2}
\end{align}
followed by
\begin{equation}
X_{t_{n-1}}
= X_{t_n}-\frac{\Delta t_n}{2}(k_1+k_2).
\label{eq:heun-update}
\end{equation}
Because the ODE is deterministic, adaptive high-order solvers such as RK45 are standard choices.

\subsection{Fast sampling in the style of DPM-Solver}

Fast ODE-based samplers of this type were developed systematically in DPM-Solver and DPM-Solver++ \cite{lu2022dpmsolver,lu2022dpmsolverpp}; related acceleration strategies include progressive distillation \cite{salimans2022progressive}.

Define the log-SNR variable
\begin{equation}
\lambda_t:=\log\frac{\alpha_t}{\sigma_t}.
\label{eq:def-lambda}
\end{equation}
Assume in this subsection that $\lambda_t$ is monotone in $t$, so that $\lambda$ can be used as an alternative time coordinate.

\begin{proposition}[Exact integral form of the learned reverse ODE]
Let
\[
\frac{\dd X_t}{\dd t}
= f_t X_t + \frac{g_t^2}{2\sigma_t}\epsilon_\theta(X_t,t).
\]
Then for any two times $s$ and $t$,
\begin{equation}
\frac{X_t}{\alpha_t}
= \frac{X_s}{\alpha_s}
- \int_{\lambda_s}^{\lambda_t} e^{-\zeta}\epsilon_\theta(X_{\vartheta(\zeta)},\vartheta(\zeta))\dd \zeta,
\label{eq:exact-variation-lambda}
\end{equation}
or equivalently
\begin{equation}
X_t
= \frac{\alpha_t}{\alpha_s}X_s
- \alpha_t\int_{\lambda_s}^{\lambda_t} e^{-\zeta}\epsilon_\theta(X_{\vartheta(\zeta)},\vartheta(\zeta))\dd \zeta.
\label{eq:exact-dpm-form}
\end{equation}
Here $\vartheta(\cdot)$ denotes the inverse of the monotone map $t\mapsto \lambda_t$.
\end{proposition}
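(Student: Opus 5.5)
The plan is to treat the learned reverse ODE as a linear ODE with an inhomogeneous forcing term and apply variation of constants using the integrating factor $1/\alpha_t$. Since $f_t=\dot\alpha_t/\alpha_t$, the homogeneous part $\dd X_t/\dd t = f_t X_t$ has fundamental solution proportional to $\alpha_t$. I will therefore differentiate the rescaled quantity $X_t/\alpha_t$:
\[
\frac{\dd}{\dd t}\!\left(\frac{X_t}{\alpha_t}\right)
=\frac{1}{\alpha_t}\frac{\dd X_t}{\dd t}-\frac{\dot\alpha_t}{\alpha_t^2}X_t
=\frac{g_t^2}{2\alpha_t\sigma_t}\,\epsilon_\theta(X_t,t).
\]
The linear drift cancels exactly, leaving only the network term.

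Next I will express the coefficient $g_t^2/(2\alpha_t\sigma_t)$ through the log-SNR. From the proof of the score form of the conditional forward velocity (equation \eqref{eq:conditional-forward-velocity-score}) we already have
\[
\frac{g_t^2}{2\sigma_t^2}=\frac{\dot\sigma_t}{\sigma_t}-\frac{\dot\alpha_t}{\alpha_t}.
\]
By the definition \eqref{eq:def-lambda}, the right-hand side equals $-\dot\lambda_t$. Hence
\[
\frac{g_t^2}{2\alpha_t\sigma_t}=\frac{\sigma_t}{\alpha_t}\cdot\frac{g_t^2}{2\sigma_t^2}=-e^{-\lambda_t}\dot\lambda_t .
\]
Substituting, we get
\[
\frac{\dd}{\dd t}\!\left(\frac{X_t}{\alpha_t}\right)=-e^{-\lambda_t}\dot\lambda_t\,\epsilon_\theta(X_t,t).
\]

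Integrating from $s$ to $t$ gives
\[
\frac{X_t}{\alpha_t}-\frac{X_s}{\alpha_s}=-\int_s^t e^{-\lambda_r}\dot\lambda_r\,\epsilon_\theta(X_r,r)\,\dd r .
\]
Because $\lambda$ is assumed monotone with inverse $\vartheta$, I then substitute $\zeta=\lambda_r$, $r=\vartheta(\zeta)$, $\dd\zeta=\dot\lambda_r\,\dd r$. The limits become $\lambda_s$ and $\lambda_t$, and this yields \eqref{eq:exact-variation-lambda}. The orientation of the integral is handled automatically by the substitution, so no case split on whether $s<t$ or whether $\lambda$ is increasing is needed. Multiplying by $\alpha_t$ gives \eqref{eq:exact-dpm-form}.

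The main obstacle is regularity at the endpoints rather than the algebra. Division by $\alpha_t$ and $\sigma_t$ requires both to be positive on the closed interval between $s$ and $t$, since $\alpha_1=0$ and $\sigma_0=0$. Applying the fundamental theorem of calculus and the change of variables also requires $\lambda$ to be $C^1$ and $t\mapsto\epsilon_\theta(X_t,t)$ to be continuous (or at least integrable). I will state these as standing assumptions, restricting to $s,t\in(0,1)$, where all quantities are finite. The identity is then exact, with no discretization involved.
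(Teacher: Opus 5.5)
Your proposal is correct and follows the paper's proof step for step. Both set $R_t=X_t/\alpha_t$ so the linear drift cancels, use $g_t^2=-2\sigma_t^2\dot\lambda_t$ to rewrite the forcing as $-e^{-\lambda_t}\dot\lambda_t\,\epsilon_\theta(X_t,t)$, and integrate after substituting $\zeta=\lambda_r$. Your regularity caveats (positivity of $\alpha_t,\sigma_t$ between $s$ and $t$, and enough continuity for the fundamental theorem of calculus and the change of variables) are a sensible addition that the paper leaves implicit.
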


\begin{proof}
Define
\[
R_t:=\frac{X_t}{\alpha_t}.
\]
Since $\dot{\alpha}_t=f_t\alpha_t$,
\begin{align*}
\frac{\dd R_t}{\dd t}
&= \frac{1}{\alpha_t}\left(\frac{\dd X_t}{\dd t}-f_tX_t\right)\\
&= \frac{1}{\alpha_t}\cdot \frac{g_t^2}{2\sigma_t}\epsilon_\theta(X_t,t).
\end{align*}
Also,
\[
\dot{\lambda}_t
= \frac{\dot{\alpha}_t}{\alpha_t}-\frac{\dot{\sigma}_t}{\sigma_t}
= f_t-\frac{\dot{\sigma}_t}{\sigma_t}.
\]
Using \eqref{eq:def-f-g},
\[
g_t^2
= 2\sigma_t\dot{\sigma}_t-2f_t\sigma_t^2
= -2\sigma_t^2\dot{\lambda}_t.
\]
Therefore
\[
\frac{\dd R_t}{\dd t}
= -\frac{\sigma_t}{\alpha_t}\dot{\lambda}_t\,\epsilon_\theta(X_t,t)
= -e^{-\lambda_t}\dot{\lambda}_t\,\epsilon_\theta(X_t,t).
\]
Since $\dd \lambda_t=\dot{\lambda}_t\dd t$, this is
\[
\dd R_t = -e^{-\lambda_t}\epsilon_\theta(X_t,t)\dd \lambda_t.
\]
Integrating from $s$ to $t$ gives
\[
R_t-R_s
= -\int_{\lambda_s}^{\lambda_t} e^{-\zeta}\epsilon_\theta(X_{\vartheta(\zeta)},\vartheta(\zeta))\dd \zeta,
\]
which is \eqref{eq:exact-variation-lambda}. Multiplying by $\alpha_t$ yields \eqref{eq:exact-dpm-form}.
\end{proof}

\paragraph{First-order DPM-Solver update.}
Suppose $s>t$ in physical time, so that we move backward from $s$ to $t$. Let
\[
h:=\lambda_t-\lambda_s.
\]
Approximate the integrand
\[
\epsilon_\theta(X_{\vartheta(\zeta)},\vartheta(\zeta))
\]
in \eqref{eq:exact-dpm-form} by $\epsilon_\theta(X_s,s)$. Then
\begin{align*}
X_t
&\approx \frac{\alpha_t}{\alpha_s}X_s
- \alpha_t \epsilon_\theta(X_s,s)\int_{\lambda_s}^{\lambda_t}e^{-\zeta}\dd \zeta\\
&= \frac{\alpha_t}{\alpha_s}X_s
- \alpha_t \epsilon_\theta(X_s,s)\bigl(e^{-\lambda_s}-e^{-\lambda_t}\bigr).
\end{align*}
For standard VP-type schedules, $\lambda_t$ decreases as $t$ increases, so $s>t$ implies $h>0$. Also,
\[
\alpha_t e^{-\lambda_t}
=
\alpha_t\frac{\sigma_t}{\alpha_t}
=
\sigma_t,
\]
and
\[
\alpha_t e^{-\lambda_s}
=
\alpha_t e^{-\lambda_t}e^{\lambda_t-\lambda_s}
=
\sigma_t e^h.
\]
Therefore
\begin{equation}
X_t
\approx \frac{\alpha_t}{\alpha_s}X_s
- \sigma_t(e^h-1)\epsilon_\theta(X_s,s).
\label{eq:dpm-solver-1}
\end{equation}
Higher-order DPM-Solver methods replace the constant approximation of the integrand by linear or quadratic interpolation in $\lambda$ \cite{lu2022dpmsolver,lu2022dpmsolverpp}.

\section{Guided Diffusion}

Guided diffusion modifies the reverse score so that the generated sample is steered toward a prescribed condition $c$. This framework includes classifier guidance, classifier-free guidance, and a number of influential text-to-image and image-editing systems \cite{dhariwal2021guided,ho2022cfg,nichol2021glide,meng2021sdedit,rombach2022ldm,saharia2022imagen}. In text-to-image generation, $c$ is typically a text prompt or its embedding.

\subsection{Classifier guidance}

Classifier guidance was popularized in diffusion-based image synthesis by Dhariwal and Nichol \cite{dhariwal2021guided}.

Classifier guidance requires two trained components:
\begin{enumerate}[label=\arabic*.]
\item an unconditional diffusion model, trained with the denoising objective from Section~4,
\item a time-dependent classifier $p_\phi(c\mid x,t)$, trained on noised data.
\end{enumerate}

Let $(X_0,C)\sim p_0(x,c)$, let $\varepsilon\sim\N(0,I)$, and define
\[
X_t=\alpha_t X_0+\sigma_t\varepsilon.
\]
With $t$ sampled from a chosen distribution on $[0,1]$ (typically the uniform distribution), the standard classifier-training objective is
\begin{equation}
\mathcal L_{\mathrm{clf}}(\phi)
:=
\E_{X_0,C,t,\varepsilon}
\bigl[-\log p_\phi(C\mid X_t,t)\bigr].
\label{eq:classifier-guidance-loss}
\end{equation}
This is simply the cross-entropy loss of the noisy classifier. Indeed, conditioning on $(X_t,t)=(x,t)$ yields
\[
\E_{C\mid X_t=x,t}\bigl[-\log p_\phi(C\mid X_t,t)\bigr]
=
\sum_c p_t(c\mid x)\bigl(-\log p_\phi(c\mid x,t)\bigr),
\]
which is the cross-entropy between the true noisy conditional label distribution $p_t(c\mid x)$ and the classifier prediction $p_\phi(c\mid x,t)$. Hence, under sufficient model capacity, the pointwise minimizer satisfies
\[
p_\phi(c\mid x,t)=p_t(c\mid x),
\]
so that
\[
\grad\log p_\phi(c\mid x,t)\approx \grad\log p_t(c\mid x).
\]
This is precisely the quantity required in the guidance formulas below.

Suppose we want to sample from the conditional distribution $p_t(x \mid c)$. By Bayes' rule,
\[
\log p_t(x \mid c)=\log p_t(x)+\log p_t(c \mid x)-\log p_t(c),
\]
so differentiating with respect to $x$ gives
\begin{equation}
\grad\log p_t(x \mid c)
= \grad\log p_t(x)+\grad\log p_t(c \mid x).
\label{eq:classifier-guidance-score}
\end{equation}
In practice one often uses a time-dependent classifier $p_\phi(c \mid x,t)$ and replaces $\grad\log p_t(c \mid x)$ by $\grad\log p_\phi(c \mid x,t)$.

With a guidance scale $\gamma \ge 0$, the guided reverse SDE becomes
\begin{equation}
\dd X_t
= \left[
f_t X_t
- g_t^2\left(
\grad\log p_t(X_t)+\gamma \grad\log p_\phi(c \mid X_t,t)
\right)
\right]\dd t
+ g_t \dd \bar W_t,
\qquad
t:1\to 0.
\label{eq:classifier-guided-sde}
\end{equation}
The corresponding guided reverse ODE is
\begin{equation}
\frac{\dd X_t}{\dd t}
= f_t X_t
- \frac{1}{2}g_t^2\left(
\grad\log p_t(X_t)+\gamma \grad\log p_\phi(c \mid X_t,t)
\right),
\qquad
t:1\to 0.
\label{eq:classifier-guided-ode}
\end{equation}

Using the noise predictor $\epsilon_\theta(x,t)\approx -\sigma_t \grad\log p_t(x)$, these become
\begin{equation}
\dd X_t
= \left[
f_t X_t
+ \frac{g_t^2}{\sigma_t}\epsilon_\theta(X_t,t)
- \gamma g_t^2 \grad\log p_\phi(c \mid X_t,t)
\right]\dd t
+ g_t \dd \bar W_t,
\label{eq:classifier-guided-sde-epsilon}
\end{equation}
and
\begin{equation}
\frac{\dd X_t}{\dd t}
= f_t X_t
+ \frac{g_t^2}{2\sigma_t}\epsilon_\theta(X_t,t)
- \frac{\gamma}{2}g_t^2 \grad\log p_\phi(c \mid X_t,t).
\label{eq:classifier-guided-ode-epsilon}
\end{equation}

\subsection{Classifier-free guidance}

Classifier-free guidance was introduced by Ho and Salimans \cite{ho2022cfg}.

Classifier-free guidance avoids a separate classifier. Instead, one trains a single network $\epsilon_\theta(x,t,c)$ with random condition dropout, so that the same network can produce
\[
\epsilon_\theta(x,t,c)
\qquad \text{and} \qquad
\epsilon_\theta(x,t,\varnothing),
\]
where $\varnothing$ denotes the null condition.

Let $P_{\mathrm{drop}}\in[0,1]$ be the dropout probability, and define a random dropped condition
\[
\widetilde C
:=
\begin{cases}
C, & \text{with probability }1-P_{\mathrm{drop}},\\
\varnothing, & \text{with probability }P_{\mathrm{drop}}.
\end{cases}
\]
Classifier-free guidance trains a single denoiser with the mixed objective
\begin{equation}
\mathcal L_{\mathrm{cfg}}(\theta)
:=
\frac12\int_0^1 \omega(t)\,
\E_{X_0,C,\varepsilon,\widetilde C}
\left[
\left\|
\epsilon_\theta(X_t,t,\widetilde C)-\varepsilon
\right\|_2^2
\right]\dd t,
\label{eq:cfg-loss}
\end{equation}
where, as before,
\[
X_t=\alpha_t X_0+\sigma_t\varepsilon.
\]
This is the same denoising loss as in Section~4, but applied to the augmented conditioning variable $\widetilde C$. By the orthogonality identity with
\[
Z=(X_t,\widetilde C),
\]
the pointwise minimizer is
\begin{equation}
\epsilon_\theta^*(x,t,\widetilde c)
=
\E_{\varepsilon\mid X_t=x,\widetilde C=\widetilde c}[\varepsilon].
\label{eq:cfg-optimal-predictor}
\end{equation}
When $\widetilde c=c$ is a genuine condition, this conditional expectation equals the scaled conditional score,
\[
\epsilon_\theta^*(x,t,c)
=
-\sigma_t\grad\log p_t(x\mid c),
\]
whereas for the null condition it reduces to the unconditional predictor,
\[
\epsilon_\theta^*(x,t,\varnothing)
=
-\sigma_t\grad\log p_t(x).
\]
Thus a single network learns both the conditional and unconditional denoisers needed for classifier-free guidance.

The classifier-free guided predictor is
\begin{equation}
\epsilon_\theta^{\mathrm{cfg}}(x,t,c;s)
:=
\epsilon_\theta(x,t,\varnothing)
+ s\Bigl(\epsilon_\theta(x,t,c)-\epsilon_\theta(x,t,\varnothing)\Bigr),
\label{eq:cfg-predictor}
\end{equation}
where $s\ge 1$ is the guidance scale. When $s=1$ this reduces to the ordinary conditional predictor; when $s>1$ it extrapolates toward the conditional direction and typically improves condition fidelity at the cost of some diversity.

The classifier-free guided reverse SDE is obtained by replacing $\epsilon_\theta$ in \eqref{eq:learned-reverse-sde} by $\epsilon_\theta^{\mathrm{cfg}}$:
\begin{equation}
\dd X_t
= \left[
f_t X_t
+ \frac{g_t^2}{\sigma_t}\epsilon_\theta^{\mathrm{cfg}}(X_t,t,c;s)
\right]\dd t
+ g_t \dd \bar W_t.
\label{eq:cfg-sde}
\end{equation}
Similarly, the classifier-free guided reverse ODE is
\begin{equation}
\frac{\dd X_t}{\dd t}
= f_t X_t
+ \frac{g_t^2}{2\sigma_t}\epsilon_\theta^{\mathrm{cfg}}(X_t,t,c;s).
\label{eq:cfg-ode}
\end{equation}

\begin{remark}[Text-to-image generation]
In text-to-image diffusion models, the condition $c$ is a text prompt encoded into a sequence of text features. Classifier-free guidance is especially widely used because it avoids training a separate image-text classifier while still allowing strong conditional control through the scale $s$; representative examples include GLIDE, latent diffusion models, and Imagen \cite{nichol2021glide,rombach2022ldm,saharia2022imagen}.
\end{remark}
\begin{remark}[Two different conditional scores in this tutorial]
Two distinct conditional scores appear in this tutorial, and they play different roles.

First, the score
\[
\grad \log p_t(x\mid x_0)
\]
is the \emph{pathwise conditional score}. Here the conditioning variable is a specific clean sample $x_0$. This score is used in the conditional forward and reverse ODE/SDE analysis, where one studies the noising and denoising trajectory associated with a single data point. In this setting, $p_t(x\mid x_0)$ is the conditional Gaussian transition density along the forward diffusion path.

Second, the score
\[
\grad \log p_t(x\mid c)
\]
is the \emph{guidance conditional score}. Here the conditioning variable is an external condition $c$, such as a class label or a text prompt. Unlike $p_t(x\mid x_0)$, the density $p_t(x\mid c)$ is not a single-sample transition kernel. Rather, it is the conditional \emph{marginal} distribution obtained by averaging over all clean data $x_0$ compatible with the condition $c$:
\[
p_t(x\mid c)
=
\int p_t(x\mid x_0)\,p_0(x_0\mid c)\,\dd x_0.
\]
Accordingly, its score
\[
\grad \log p_t(x\mid c)
\]
describes how to guide generation toward the conditional data distribution associated with $c$, rather than how to reverse the noising trajectory of one fixed clean sample.

Thus the two conditionals have different meanings:
\begin{itemize}
    \item $p_t(x\mid x_0)$ is a pathwise conditional distribution for a fixed clean sample $x_0$;
    \item $p_t(x\mid c)$ is a conditional marginal distribution obtained after averaging over clean data $x_0$ under the condition $c$.
\end{itemize}
They should therefore be interpreted separately, even though both lead to conditional score functions.
\end{remark}

\section{Unifying DDPM, DDIM in the Reverse ODE/SDE Framework}

This section makes the correspondence between the continuous reverse ODE/SDE framework and the discrete DDPM/DDIM formulations explicit \cite{ho2020ddpm,song2021ddim,song2021sde}. The logic is organized in four steps:
\begin{enumerate}[label=\arabic*.]
\item start from the discrete DDPM forward process, continuize it, and construct the corresponding forward SDE;
\item connect the DDPM training loss with the reverse-SDE training loss;
\item connect DDPM inference with reverse-SDE inference;
\item connect DDIM inference with reverse-ODE inference.
\end{enumerate}
To avoid conflicts with the continuous-time notation used in the earlier sections, we use a separate notation for the discrete chain:
\[
\tilde x_0,\tilde x_1,\dots,\tilde x_N
\]
for the discrete states, while
\[
X_t,\qquad 0\le t\le 1
\]
continues to denote the continuous-time process from the previous sections.

\begin{remark}[Notation map]
The original DDPM paper writes the discrete schedule as $(\beta_k,\alpha_k,\bar\alpha_k)$ and the discrete states as $(x_k)_{k=0}^N$. In this section we rename them as
\[
b_k,\qquad a_k:=1-b_k,\qquad \bar a_k:=\prod_{i=1}^k a_i,
\qquad
\tilde x_k,
\]
so that they do not clash with the continuous-time objects $\alpha_t,\sigma_t,\beta(t),X_t$ already used in this tutorial.
\end{remark}

\subsection{Continuizing the DDPM forward process and constructing its SDE}

Let
\[
0=t_0<t_1<\cdots<t_N=1
\]
be a time grid. The discrete forward Gaussian chain is
\begin{equation}
q(\tilde x_k \mid \tilde x_{k-1})
= \N\!\left(\tilde x_k;\sqrt{a_k}\,\tilde x_{k-1},(1-a_k)I\right),
\qquad k=1,\dots,N,
\label{eq:ddpm-forward-step}
\end{equation}
where
\[
b_k\in(0,1),
\qquad
a_k:=1-b_k,
\qquad
\bar a_k:=\prod_{i=1}^k a_i,
\qquad
\bar a_0:=1.
\]
\begin{proposition}[Closed form of the DDPM forward marginal]
For every $k\in\{1,\dots,N\}$,
\begin{equation}
q(\tilde x_k \mid \tilde x_0)
= \N\!\left(\tilde x_k;\sqrt{\bar a_k}\,\tilde x_0,(1-\bar a_k)I\right).
\label{eq:ddpm-forward-marginal}
\end{equation}
Equivalently,
\begin{equation}
\tilde x_k
= \sqrt{\bar a_k}\,\tilde x_0+\sqrt{1-\bar a_k}\,\varepsilon,
\qquad
\varepsilon\sim \N(0,I).
\label{eq:ddpm-reparam}
\end{equation}
\end{proposition}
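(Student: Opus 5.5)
I will prove the closed form by induction on $k$, working with the reparameterized (pathwise) description of the chain rather than with densities. By \eqref{eq:ddpm-forward-step}, each step can be written as
\[
\tilde x_k=\sqrt{a_k}\,\tilde x_{k-1}+\sqrt{1-a_k}\,z_k,
\qquad z_k\sim\N(0,I),
\]
where $z_1,\dots,z_N$ are independent of each other and of $\tilde x_0$. This is the Markov structure of the chain. The claim \eqref{eq:ddpm-reparam} is then the statement that, conditionally on $\tilde x_0$, the variable $\tilde x_k$ equals $\sqrt{\bar a_k}\,\tilde x_0$ plus a centered Gaussian vector with covariance $(1-\bar a_k)I$. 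Because the Gaussian law is determined by its mean and covariance, this is equivalent to \eqref{eq:ddpm-forward-marginal}.

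The base case $k=1$ is immediate from \eqref{eq:ddpm-forward-step}, since $\bar a_1=a_1$. (One may also start at $k=0$ with $\bar a_0=1$, which gives the degenerate law $\delta_{\tilde x_0}$.)

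For the inductive step, assume $\tilde x_{k-1}=\sqrt{\bar a_{k-1}}\,\tilde x_0+\sqrt{1-\bar a_{k-1}}\,\varepsilon_{k-1}$, with $\varepsilon_{k-1}\sim\N(0,I)$ independent of $z_k$ given $\tilde x_0$. Substituting into the one-step relation gives
\[
\tilde x_k=\sqrt{a_k\bar a_{k-1}}\,\tilde x_0+\sqrt{a_k(1-\bar a_{k-1})}\,\varepsilon_{k-1}+\sqrt{1-a_k}\,z_k .
\]
The first coefficient equals $\sqrt{\bar a_k}$. The noise term is a sum of two independent centered Gaussians, so it is Gaussian with covariance
\[
\bigl(a_k-a_k\bar a_{k-1}+1-a_k\bigr)I=(1-\bar a_k)I .
\]
It can therefore be written as $\sqrt{1-\bar a_k}\,\varepsilon_k$ with $\varepsilon_k\sim\N(0,I)$. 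This closes the induction and yields both \eqref{eq:ddpm-reparam} and \eqref{eq:ddpm-forward-marginal}.

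The only point requiring care is the independence bookkeeping: $z_k$ must be independent of $(\tilde x_0,\varepsilon_{k-1})$. This holds because $\varepsilon_{k-1}$ is a function of $z_1,\dots,z_{k-1}$ only. An equivalent density-level route is to compute the Chapman--Kolmogorov integral
\[
q(\tilde x_k\mid\tilde x_0)=\int q(\tilde x_k\mid\tilde x_{k-1})\,q(\tilde x_{k-1}\mid\tilde x_0)\,\dd\tilde x_{k-1}
\]
as a Gaussian convolution. The reparameterization argument avoids completing the square, so I would present that one.
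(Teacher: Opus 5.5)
Your proof is correct and follows essentially the same route as the paper: induction on $k$ via the reparameterized one-step relation, with the two independent centered Gaussian noise terms merged into a single $\sqrt{1-\bar a_k}\,\varepsilon_k$ of covariance $(1-\bar a_k)I$. Your remark that $\varepsilon_{k-1}$ is a linear function of $z_1,\dots,z_{k-1}$ alone makes the required independence from $(\tilde x_0, z_k)$ slightly more explicit than the paper does.
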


\begin{proof}
We prove \eqref{eq:ddpm-forward-marginal} by induction on $k$.

For $k=1$, the statement is exactly the one-step kernel \eqref{eq:ddpm-forward-step}, because $\bar a_1=a_1$.

Now assume that for some $k-1\ge 1$ we have
\[
\tilde X_{k-1}
=
\sqrt{\bar a_{k-1}}\,\tilde X_0+\sqrt{1-\bar a_{k-1}}\,\varepsilon_{k-1}',
\qquad
\varepsilon_{k-1}'\sim \N(0,I),
\]
with $\varepsilon_{k-1}'$ independent of $\tilde X_0$. The one-step DDPM forward process gives
\[
\tilde X_k
=
\sqrt{a_k}\,\tilde X_{k-1}+\sqrt{1-a_k}\,\varepsilon_k,
\qquad
\varepsilon_k\sim \N(0,I),
\]
where $\varepsilon_k$ is independent of $(\tilde X_0,\varepsilon_{k-1}')$. Substituting the inductive form of $\tilde X_{k-1}$ yields
\begin{align*}
\tilde X_k
&=
\sqrt{a_k\bar a_{k-1}}\,\tilde X_0
+\sqrt{a_k(1-\bar a_{k-1})}\,\varepsilon_{k-1}'
+\sqrt{1-a_k}\,\varepsilon_k\\
&=
\sqrt{\bar a_k}\,\tilde X_0
+\sqrt{a_k(1-\bar a_{k-1})}\,\varepsilon_{k-1}'
+\sqrt{1-a_k}\,\varepsilon_k,
\end{align*}
because $\bar a_k=a_k\bar a_{k-1}$.

The last two terms are independent centered Gaussians. Their sum is therefore Gaussian with covariance
\begin{align*}
&a_k(1-\bar a_{k-1})I+(1-a_k)I\\
&\qquad=
\bigl(a_k-a_k\bar a_{k-1}+1-a_k\bigr)I\\
&\qquad=
\bigl(1-\bar a_k\bigr)I.
\end{align*}
Hence there exists $\varepsilon_k'\sim \N(0,I)$ such that
\[
\sqrt{a_k(1-\bar a_{k-1})}\,\varepsilon_{k-1}'
+\sqrt{1-a_k}\,\varepsilon_k
=
\sqrt{1-\bar a_k}\,\varepsilon_k'.
\]
Therefore
\[
\tilde X_k
=
\sqrt{\bar a_k}\,\tilde X_0+\sqrt{1-\bar a_k}\,\varepsilon_k',
\]
which is exactly \eqref{eq:ddpm-forward-marginal} and \eqref{eq:ddpm-reparam}. The induction is complete.
\end{proof}

Compare this with the continuous Gaussian path from Section~2:
\[
X_t=\alpha_t X_0+\sigma_t\varepsilon.
\]
The exact correspondence at the grid point $t_k$ is
\begin{equation}
X_{t_k}\equiv \tilde X_k,
\qquad
p_{t_k}(x)=q_k(x),
\qquad
\alpha_{t_k}=\sqrt{\bar a_k},
\qquad
\sigma_{t_k}=\sqrt{1-\bar a_k}.
\label{eq:discrete-continuous-identification}
\end{equation}

At every grid point we therefore have
\[
\alpha_{t_k}^2+\sigma_{t_k}^2
=
\bar a_k+(1-\bar a_k)
=
1.
\]
This suggests the following continuization of the discrete DDPM forward process.

\begin{definition}[Continuized DDPM forward path]
Choose differentiable functions $\alpha_t,\sigma_t$ on $[0,1]$ such that
\[
\alpha_{t_k}=\sqrt{\bar a_k},
\qquad
\sigma_{t_k}=\sqrt{1-\bar a_k},
\qquad
\alpha_t^2+\sigma_t^2=1
\quad\text{for all }t\in[0,1].
\]
The associated conditional Gaussian path is
\[
p_t(x\mid x_0)
=
\N(x;\alpha_t x_0,\sigma_t^2I).
\]
We call this path the \emph{continuized DDPM forward process}.
\end{definition}

\begin{theorem}[The continuized DDPM forward process is generated by a VP-SDE]
Let $p_t(x\mid x_0)=\N(x;\alpha_t x_0,\sigma_t^2I)$ be the continuized DDPM forward path. Then it is generated by the variance-preserving SDE
\begin{equation}
\dd X_t
=
-\frac12\beta(t)X_t\,\dd t+\sqrt{\beta(t)}\,\dd W_t,
\label{eq:ddpm-vp-forward-sde}
\end{equation}
where
\begin{equation}
\beta(t):=-2\frac{\dot\alpha_t}{\alpha_t}=g_t^2\ge 0.
\label{eq:def-beta-vp-ddpm}
\end{equation}
\end{theorem}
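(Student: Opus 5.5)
The plan is to reduce the statement to Theorem~\ref{thm:conditional-forward-sde}. That theorem already shows that, for any admissible schedule $(\alpha_t,\sigma_t)$, the conditional Gaussian path \eqref{eq:conditional-kernel} is generated by $\dd X_t=f_tX_t\dd t+g_t\dd W_t$, with $f_t,g_t$ as in \eqref{eq:def-f-g}. So it suffices to compute $f_t$ and $g_t^2$ under the extra constraint $\alpha_t^2+\sigma_t^2=1$, and to check that they reduce to $-\tfrac12\beta(t)$ and $\beta(t)$ respectively.

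First, by the definition $\beta(t):=-2\dot\alpha_t/\alpha_t$, we get immediately $f_t=\dot\alpha_t/\alpha_t=-\tfrac12\beta(t)$.

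Second, differentiate the variance-preserving constraint. Since $\sigma_t^2=1-\alpha_t^2$, we have $\frac{\dd}{\dd t}\sigma_t^2=-2\alpha_t\dot\alpha_t$. Substituting into \eqref{eq:def-f-g} gives
\[
g_t^2=-2\alpha_t\dot\alpha_t-2\frac{\dot\alpha_t}{\alpha_t}(1-\alpha_t^2)
=-2\frac{\dot\alpha_t}{\alpha_t}\bigl(\alpha_t^2+1-\alpha_t^2\bigr)
=-2\frac{\dot\alpha_t}{\alpha_t}=\beta(t).
\]
This establishes the identity $\beta(t)=g_t^2$ claimed in \eqref{eq:def-beta-vp-ddpm}. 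Nonnegativity $\beta(t)\ge0$ then follows from the standing Setup assumption that $g_t^2\ge0$. Equivalently, since $\alpha_t>0$, it holds when $\alpha_t$ is nonincreasing, as happens for $\bar a_k$ decreasing. Hence $g_t=\sqrt{\beta(t)}$, and plugging into \eqref{eq:conditional-forward-sde} yields \eqref{eq:ddpm-vp-forward-sde}. The conditional Fokker--Planck equation \eqref{eq:conditional-fp} holds with these coefficients. Averaging over $p_0$, as in the marginalized forward SDE theorem, gives the marginal statement as well.

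The main obstacle is the endpoint behaviour rather than the algebra. The quantity $\dot\alpha_t/\alpha_t$ requires $\alpha_t>0$, but the Setup imposes $\alpha_1=0$, whereas the DDPM grid has $\bar a_N>0$. I would handle this the same way the earlier sections implicitly do: work on $t\in[0,1)$, or on the interval where $\alpha_t>0$, with $\sigma_t>0$ for $t>0$. The identity $g_t^2=\beta(t)$ is pointwise, so it is unaffected by this restriction. I would also remark that the interpolant between grid points can be chosen monotone, for example $\alpha_t=\sqrt{\bar a(t)}$ with $\bar a$ a decreasing interpolation of the $\bar a_k$. This choice guarantees $\beta\ge0$ explicitly.
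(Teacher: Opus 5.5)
Your proposal is correct and follows essentially the same route as the paper: invoke Theorem~\ref{thm:conditional-forward-sde}, differentiate $\alpha_t^2+\sigma_t^2=1$ to get $\frac{\dd}{\dd t}\sigma_t^2=-2\alpha_t\dot\alpha_t$, substitute into the formula for $g_t^2$ to obtain $g_t^2=-2\dot\alpha_t/\alpha_t=\beta(t)$ and $f_t=-\frac12\beta(t)$, and read off the VP-SDE. Your explicit justification of $\beta(t)\ge 0$ (via the Setup assumption, or via a nonincreasing interpolation of the $\bar a_k$) and your remark on the mismatch between the Setup's $\alpha_1=0$ and $\bar a_N>0$ are points the paper leaves implicit, but they do not change the argument.
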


\begin{proof}
Section~2 proved that any Gaussian path
\[
p_t(x\mid x_0)=\N(x;\alpha_t x_0,\sigma_t^2I)
\]
is generated by the forward SDE
\[
\dd X_t=f_tX_t\,\dd t+g_t\,\dd W_t,
\]
provided
\[
f_t=\frac{\dot\alpha_t}{\alpha_t},
\qquad
g_t^2=\frac{\dd}{\dd t}\sigma_t^2-2\frac{\dot\alpha_t}{\alpha_t}\sigma_t^2.
\]
We now simplify these coefficients under the variance-preserving constraint
\[
\alpha_t^2+\sigma_t^2=1.
\]

Differentiate this identity with respect to $t$:
\[
\frac{\dd}{\dd t}(\alpha_t^2+\sigma_t^2)=0.
\]
Hence
\[
2\alpha_t\dot\alpha_t+\frac{\dd}{\dd t}\sigma_t^2=0,
\]
so
\[
\frac{\dd}{\dd t}\sigma_t^2=-2\alpha_t\dot\alpha_t.
\]
Substituting this into the general formula for $g_t^2$ gives
\begin{align*}
g_t^2
&=
-2\alpha_t\dot\alpha_t-2\frac{\dot\alpha_t}{\alpha_t}\sigma_t^2\\
&=
-2\frac{\dot\alpha_t}{\alpha_t}\bigl(\alpha_t^2+\sigma_t^2\bigr)\\
&=
-2\frac{\dot\alpha_t}{\alpha_t},
\end{align*}
because $\alpha_t^2+\sigma_t^2=1$. Therefore
\[
f_t=\frac{\dot\alpha_t}{\alpha_t}=-\frac12 g_t^2.
\]
Define
\[
\beta(t):=g_t^2.
\]
Then
\[
f_t=-\frac12\beta(t),
\qquad
g_t=\sqrt{\beta(t)},
\]
and the general Gaussian-path SDE becomes exactly
\[
\dd X_t
=
-\frac12\beta(t)X_t\,\dd t+\sqrt{\beta(t)}\,\dd W_t.
\]

Finally, Section~2 already verified that the Gaussian conditional density satisfies the conditional Fokker--Planck equation for the general coefficients $(f_t,g_t)$. Since in the present variance-preserving case those coefficients reduce to
\[
f_t=-\frac12\beta(t),
\qquad
g_t^2=\beta(t),
\]
the same verification yields
\[
\partial_t p_t(x\mid x_0)
=
-\diver\left(-\frac12\beta(t)\,x\,p_t(x\mid x_0)\right)
+\frac12\beta(t)\lap p_t(x\mid x_0),
\]
which is exactly the conditional Fokker--Planck equation of \eqref{eq:ddpm-vp-forward-sde}. Therefore the VP-SDE \eqref{eq:ddpm-vp-forward-sde} generates the continuized DDPM forward process.
\end{proof}

For the discrete step $[t_{k-1},t_k]$, define the integrated noise level
\begin{equation}
h_k:=\int_{t_{k-1}}^{t_k}\beta(s)\,\dd s.
\label{eq:integrated-step-size}
\end{equation}
Since
\[
\frac{\dot\alpha_t}{\alpha_t}
=
-\frac12\beta(t),
\]
integration over $[t_{k-1},t_k]$ gives
\[
\log\alpha_{t_k}-\log\alpha_{t_{k-1}}
=
-\frac12\int_{t_{k-1}}^{t_k}\beta(s)\,\dd s
=
-\frac{h_k}{2},
\]
hence
\[
\frac{\alpha_{t_k}}{\alpha_{t_{k-1}}}=e^{-h_k/2}.
\]
Now compare this continuous one-step signal factor with the DDPM one-step kernel
\[
q(\tilde x_k\mid \tilde x_{k-1})
=
\N\!\left(\tilde x_k;\sqrt{a_k}\,\tilde x_{k-1},(1-a_k)I\right).
\]
The one-step mean coefficient must match, so
\[
\sqrt{a_k}:=\frac{\alpha_{t_k}}{\alpha_{t_{k-1}}}=e^{-h_k/2}.
\]
Squaring both sides yields
\begin{equation}
a_k=e^{-h_k},
\qquad
b_k=1-a_k=1-e^{-h_k}.
\label{eq:ak-bk-from-hk}
\end{equation}
This is the precise bridge between the continuous VP-SDE coefficients and the discrete DDPM schedule.

\begin{remark}[Why the map goes through $h_k$ rather than pointwise $f_t,g_t$]
The discrete DDPM coefficients $a_k$ and $b_k$ do not correspond to the instantaneous values of $f_t$ and $g_t$ at a single time. They correspond to the effect of the continuous SDE accumulated over the whole interval $[t_{k-1},t_k]$. That is why the correct bridge is
\[
h_k=\int_{t_{k-1}}^{t_k}\beta(s)\,\dd s,
\qquad
a_k=e^{-h_k},
\qquad
b_k=1-e^{-h_k},
\]
rather than a direct pointwise identification such as $a_k=f_{t_k}$ or $b_k=g_{t_k}^2$.
\end{remark}

\begin{center}
\begin{tabular}{ll}
\hline
Continuous-time notation in this tutorial & Discrete DDPM/DDIM notation \\
\hline
$X_{t_k}$ & $\tilde x_k$ \\
$\alpha_{t_k}$ & $\sqrt{\bar a_k}$ \\
$\sigma_{t_k}$ & $\sqrt{1-\bar a_k}$ \\
$f_t=-\beta(t)/2$ & one-step attenuation $\sqrt{a_k}=e^{-h_k/2}$ \\
$g_t^2=\beta(t)$ & one-step noise increment $b_k=1-e^{-h_k}$ \\
$h_k=\int_{t_{k-1}}^{t_k}\beta(s)\,\dd s$ & $a_k=e^{-h_k}$ \\
$\epsilon_\theta(X_{t_k},t_k)$ & $\epsilon_\theta(\tilde x_k,k)$ \\
\hline
\end{tabular}
\end{center}

Throughout this section, we write
\[
\epsilon_\theta(\tilde x_k,k):=\epsilon_\theta(\tilde x_k,t_k)
\]
for the restriction of the continuous-time noise predictor to the discrete grid.
When expectations or conditional laws are involved, we denote by $\tilde X_k$ the discrete random variable at step $k$ and by $\tilde x_k$ one of its realized values.

\subsection{DDPM training and its relation to the reverse SDE loss}

DDPM training \cite{ho2020ddpm} samples
\[
\tilde X_0\sim p_0,
\qquad
K\sim \mathrm{Unif}\{1,\dots,N\},
\qquad
\varepsilon\sim\N(0,I),
\]
constructs
\[
\tilde X_K=\sqrt{\bar a_K}\,\tilde X_0+\sqrt{1-\bar a_K}\,\varepsilon,
\]
and minimizes
\begin{equation}
\mathcal L_{\mathrm{DDPM}}(\theta)
:=
\E_{\tilde X_0,K,\varepsilon}
\left[
\left\|\epsilon_\theta(\tilde X_K,K)-\varepsilon\right\|_2^2
\right].
\label{eq:ddpm-simple-loss}
\end{equation}

\begin{theorem}[DDPM training is the reverse-SDE training objective on the time grid]
Let
\[
q_k(x):=q(\tilde X_k=x)
\]
be the marginal density of the discrete forward chain at step $k$. Then
\begin{equation}
\mathcal L_{\mathrm{DDPM}}(\theta)
=
\E_{\tilde X_0,K,\varepsilon}
\left[
\left\|\epsilon_\theta(\tilde X_K,K)+\sqrt{1-\bar a_K}\,\grad\log q_K(\tilde X_K)\right\|_2^2
\right]
+ C,
\label{eq:ddpm-discrete-score-loss}
\end{equation}
where $C$ is independent of $\theta$. Under the identification \eqref{eq:discrete-continuous-identification}, this is exactly the grid-restricted reverse-SDE training objective
\[
\E_{K}\E_{X_{t_K}\sim p_{t_K}}\left[\left\|\epsilon_\theta(X_{t_K},t_K)+\sigma_{t_K}\grad\log p_{t_K}(X_{t_K})\right\|_2^2\right].
\]
\end{theorem}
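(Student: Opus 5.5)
The plan is to reduce the statement to the continuous-time argument already carried out in Section~4, applied separately at each fixed grid index $k$. We condition on $K=k$ and treat the DDPM loss as an average over $k$ of per-step losses
\[
\ell_k(\theta):=\E_{\tilde X_0,\varepsilon}\bigl\|\epsilon_\theta(\tilde X_k,k)-\varepsilon\bigr\|_2^2,
\]
where $\tilde X_k=\sqrt{\bar a_k}\tilde X_0+\sqrt{1-\bar a_k}\,\varepsilon$ by the closed-form marginal \eqref{eq:ddpm-reparam}. For each fixed $k$, this is exactly the setting of the posterior mean noise identity \eqref{eq:posterior-mean-noise}, with $\alpha=\sqrt{\bar a_k}$ and $\sigma=\sqrt{1-\bar a_k}$. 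That proof used only the Gaussian reparameterization at a single time together with Fisher's identity, so it carries over verbatim and gives
\[
\E[\varepsilon\mid \tilde X_k=x]=-\sqrt{1-\bar a_k}\,\grad\log q_k(x).
\]

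Next we apply the orthogonality identity of Appendix~\ref{app:orthogonality} with $Z=\tilde X_k$ and $a(Z)=\epsilon_\theta(\tilde X_k,k)$. This splits $\ell_k(\theta)$ into
\[
\E\bigl\|\epsilon_\theta(\tilde X_k,k)-\E[\varepsilon\mid\tilde X_k]\bigr\|_2^2+\E\bigl\|\varepsilon-\E[\varepsilon\mid \tilde X_k]\bigr\|_2^2 .
\]
The second term, call it $C_k$, does not depend on $\theta$. Substituting the identity from the first step into the first term gives $\E\|\epsilon_\theta(\tilde X_k,k)+\sqrt{1-\bar a_k}\grad\log q_k(\tilde X_k)\|_2^2$. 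Averaging over $K$ uniform on $\{1,\dots,N\}$, using that $K$ is independent of $(\tilde X_0,\varepsilon)$, yields \eqref{eq:ddpm-discrete-score-loss} with $C=\frac1N\sum_k C_k$.

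The integrand depends on $(\tilde X_0,\varepsilon)$ only through $\tilde X_K$, so the outer expectation may be rewritten as $\E_K\E_{\tilde X_K\sim q_K}$. Under \eqref{eq:discrete-continuous-identification} we have $q_k=p_{t_k}$, $\sqrt{1-\bar a_k}=\sigma_{t_k}$ and $\epsilon_\theta(\cdot,k)=\epsilon_\theta(\cdot,t_k)$. Substituting these turns the expression into the displayed grid-restricted version of \eqref{eq:score-loss}, which is the uniform quadrature of that objective on the grid with $\omega\equiv 1$.

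The main obstacle is not computational. It is making sure that the posterior-noise identity and the orthogonality split are legitimately applied with conditioning on $\tilde X_K$ alone, per fixed $k$, rather than on $(\tilde X_K,K)$ jointly. Conditioning on $K$ first avoids this issue. One also needs the mild regularity (finite second moments, differentiability of $q_k$) that Fisher's identity already assumed in Appendix~\ref{app:fisher}; I would state this as an inherited assumption rather than reprove it.
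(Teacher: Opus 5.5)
Your proposal is correct and follows essentially the same route as the paper: for each fixed $k$, you write the posterior mean noise as $-\sqrt{1-\bar a_k}\,\grad\log q_k$ via Fisher's identity, apply the orthogonality identity with $Z=\tilde X_k$, and then substitute $q_k=p_{t_k}$ and $\sqrt{1-\bar a_k}=\sigma_{t_k}$. Your explicit conditioning on $K$ and averaging of the constants $C_k$ is slightly tidier bookkeeping than the paper's; the aside calling the result a uniform quadrature of \eqref{eq:score-loss} with $\omega\equiv 1$ is only heuristic, since it drops the factor $\tfrac12$ and gives a Riemann sum only on a uniform grid, but the statement does not need it.
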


\begin{proof}
From \eqref{eq:ddpm-reparam},
\[
\varepsilon=\frac{\tilde x_k-\sqrt{\bar a_k}\,\tilde x_0}{\sqrt{1-\bar a_k}}.
\]
The conditional forward density is
\[
q(\tilde x_k\mid \tilde x_0)
=\N\!\left(\tilde x_k;\sqrt{\bar a_k}\,\tilde x_0,(1-\bar a_k)I\right),
\]
so its score is
\[
\grad\log q(\tilde x_k\mid \tilde x_0)
=-\frac{\tilde x_k-\sqrt{\bar a_k}\,\tilde x_0}{1-\bar a_k}.
\]
Hence
\begin{equation}
\varepsilon
=-\sqrt{1-\bar a_k}\,\grad\log q(\tilde x_k\mid \tilde x_0).
\label{eq:discrete-noise-conditional-score}
\end{equation}
Taking conditional expectation given $\tilde x_k$ yields
\[
\E_{\varepsilon\mid \tilde X_k=\tilde x_k}[\varepsilon]
=-\sqrt{1-\bar a_k}\,\E_{\tilde X_0\mid \tilde X_k=\tilde x_k}[\grad\log q(\tilde x_k\mid \tilde x_0)].
\]
By Fisher's identity,
\[
\E_{\tilde X_0\mid \tilde X_k=\tilde x_k}[\grad\log q(\tilde x_k\mid \tilde x_0)]
=\grad\log q_k(\tilde x_k),
\]
and therefore
\begin{equation}
\E_{\varepsilon\mid \tilde X_k=\tilde x_k}[\varepsilon]
=-\sqrt{1-\bar a_k}\,\grad\log q_k(\tilde x_k).
\label{eq:discrete-posterior-mean-noise}
\end{equation}

Now apply the orthogonality identity with
\[
Z=\tilde x_k,
\qquad
a(Z)=\epsilon_\theta(\tilde x_k,k).
\]
Then
\[
\E_{\tilde X_k,\varepsilon}\|\epsilon_\theta(\tilde X_k,k)-\varepsilon\|_2^2
=
\E_{\tilde X_k}\left\|\epsilon_\theta(\tilde X_k,k)-\E_{\varepsilon\mid \tilde X_k}[\varepsilon]\right\|_2^2
+ \E_{\tilde X_k,\varepsilon}\left\|\varepsilon-\E_{\varepsilon\mid \tilde X_k}[\varepsilon]\right\|_2^2.
\]
The second term is independent of $\theta$. Substituting \eqref{eq:discrete-posterior-mean-noise} gives
\[
\E_{\tilde X_k,\varepsilon}\|\epsilon_\theta(\tilde X_k,k)-\varepsilon\|_2^2
=
\E_{\tilde X_k}\left\|
\epsilon_\theta(\tilde X_k,k)+\sqrt{1-\bar a_k}\,\grad\log q_k(\tilde X_k)
\right\|_2^2
+ C,
\]
which is exactly \eqref{eq:ddpm-discrete-score-loss}.

Finally, by \eqref{eq:discrete-continuous-identification},
\[
q_k(x)=p_{t_k}(x),
\qquad
\sqrt{1-\bar a_k}=\sigma_{t_k},
\]
so the discrete target is precisely
\[
-\sigma_{t_k}\grad\log p_{t_k}(x).
\]
Thus DDPM training learns the same scaled score as the reverse-SDE framework, but only at the discrete times $t_k$. Since the reverse ODE is obtained from the same scaled score, the same conclusion also applies to the reverse ODE.
\end{proof}

\subsection{DDPM inference and its relation to the reverse SDE}

The exact reverse Gaussian posterior of the discrete forward chain, central to DDPM sampling \cite{ho2020ddpm,nichol2021improved}, is
\begin{equation}
q(\tilde x_{k-1}\mid \tilde x_k,\tilde x_0)
=
\N\!\left(\tilde x_{k-1};\widetilde\mu_k(\tilde x_k,\tilde x_0),\widetilde b_k I\right),
\label{eq:exact-discrete-reverse-posterior}
\end{equation}
where
\begin{equation}
\widetilde\mu_k(\tilde x_k,\tilde x_0)
:=
\frac{\sqrt{\bar a_{k-1}}\,b_k}{1-\bar a_k}\tilde x_0
+
\frac{\sqrt{a_k}(1-\bar a_{k-1})}{1-\bar a_k}\tilde x_k,
\label{eq:exact-discrete-reverse-posterior-mean}
\end{equation}
and
\begin{equation}
\widetilde b_k
:=
\frac{1-\bar a_{k-1}}{1-\bar a_k}\,b_k.
\label{eq:ddpm-posterior-variance}
\end{equation}
To see this directly, fix $\tilde x_k$ and $\tilde x_0$ and regard the posterior as a function of $\tilde x_{k-1}$. By Bayes' rule,
\[
q(\tilde x_{k-1}\mid \tilde x_k,\tilde x_0)
\propto
q(\tilde x_k\mid \tilde x_{k-1})\,q(\tilde x_{k-1}\mid \tilde x_0).
\]
Using the forward kernels,
\[
q(\tilde x_k\mid \tilde x_{k-1})
\propto
\exp\!\left(
-\frac{\|\tilde x_k-\sqrt{a_k}\,\tilde x_{k-1}\|_2^2}{2b_k}
\right),
\]
and
\[
q(\tilde x_{k-1}\mid \tilde x_0)
\propto
\exp\!\left(
-\frac{\|\tilde x_{k-1}-\sqrt{\bar a_{k-1}}\,\tilde x_0\|_2^2}{2(1-\bar a_{k-1})}
\right).
\]
Therefore
\begin{align*}
\log q(\tilde x_{k-1}\mid \tilde x_k,\tilde x_0)
&=
C
-\frac{\|\tilde x_k-\sqrt{a_k}\,\tilde x_{k-1}\|_2^2}{2b_k}
-\frac{\|\tilde x_{k-1}-\sqrt{\bar a_{k-1}}\,\tilde x_0\|_2^2}{2(1-\bar a_{k-1})}\\
&=
C'
-\frac12
\left(
\frac{a_k}{b_k}+\frac{1}{1-\bar a_{k-1}}
\right)\|\tilde x_{k-1}\|_2^2\\
&\qquad
+\left(
\frac{\sqrt{a_k}}{b_k}\tilde x_k
+ \frac{\sqrt{\bar a_{k-1}}}{1-\bar a_{k-1}}\tilde x_0
\right)^\top \tilde x_{k-1},
\end{align*}
where $C,C'$ are constants independent of $\tilde x_{k-1}$. Since
\[
\frac{a_k}{b_k}+\frac{1}{1-\bar a_{k-1}}
=
\frac{a_k(1-\bar a_{k-1})+b_k}{b_k(1-\bar a_{k-1})}
=
\frac{1-\bar a_k}{b_k(1-\bar a_{k-1})},
\]
the quadratic coefficient equals $\widetilde b_k^{-1}$ with
\[
\widetilde b_k=\frac{1-\bar a_{k-1}}{1-\bar a_k}\,b_k.
\]
Completing the square therefore gives a Gaussian posterior with covariance $\widetilde b_k I$ and mean
\[
\widetilde\mu_k(\tilde x_k,\tilde x_0)
=
\widetilde b_k
\left(
\frac{\sqrt{a_k}}{b_k}\tilde x_k
+ \frac{\sqrt{\bar a_{k-1}}}{1-\bar a_{k-1}}\tilde x_0
\right).
\]
Now substitute
\[
\widetilde b_k=\frac{1-\bar a_{k-1}}{1-\bar a_k}\,b_k.
\]
Then
\begin{align*}
\widetilde\mu_k(\tilde x_k,\tilde x_0)
&=
\frac{1-\bar a_{k-1}}{1-\bar a_k}\,b_k
\left(
\frac{\sqrt{a_k}}{b_k}\tilde x_k
+ \frac{\sqrt{\bar a_{k-1}}}{1-\bar a_{k-1}}\tilde x_0
\right)\\
&=
\frac{\sqrt{a_k}(1-\bar a_{k-1})}{1-\bar a_k}\tilde x_k
+ \frac{b_k\sqrt{\bar a_{k-1}}}{1-\bar a_k}\tilde x_0,
\end{align*}
which is exactly \eqref{eq:exact-discrete-reverse-posterior-mean}.
Using \eqref{eq:ddpm-reparam},
\[
\tilde x_0=\frac{\tilde x_k-\sqrt{1-\bar a_k}\,\varepsilon}{\sqrt{\bar a_k}},
\]
so substituting into \eqref{eq:exact-discrete-reverse-posterior-mean} gives
\begin{align*}
\widetilde\mu_k(\tilde x_k,\tilde x_0)
&=
\frac{\sqrt{a_k}(1-\bar a_{k-1})}{1-\bar a_k}\tilde x_k
+ \frac{b_k\sqrt{\bar a_{k-1}}}{1-\bar a_k}
\cdot
\frac{\tilde x_k-\sqrt{1-\bar a_k}\,\varepsilon}{\sqrt{\bar a_k}}\\
&=
\frac{\sqrt{a_k}(1-\bar a_{k-1})}{1-\bar a_k}\tilde x_k
+ \frac{b_k}{\sqrt{a_k}(1-\bar a_k)}\tilde x_k
- \frac{b_k}{\sqrt{a_k}\sqrt{1-\bar a_k}}\varepsilon.
\end{align*}
Here we used $\bar a_k=a_k\bar a_{k-1}$, hence
\[
\frac{\sqrt{\bar a_{k-1}}}{\sqrt{\bar a_k}}=\frac{1}{\sqrt{a_k}}.
\]
Now combine the two coefficients of $\tilde x_k$:
\begin{align*}
\frac{\sqrt{a_k}(1-\bar a_{k-1})}{1-\bar a_k}
+ \frac{b_k}{\sqrt{a_k}(1-\bar a_k)}
&=
\frac{a_k(1-\bar a_{k-1})+b_k}{\sqrt{a_k}(1-\bar a_k)}\\
&=
\frac{1-\bar a_k}{\sqrt{a_k}(1-\bar a_k)}
=
\frac{1}{\sqrt{a_k}},
\end{align*}
because
\[
a_k(1-\bar a_{k-1})+b_k
=
a_k-a_k\bar a_{k-1}+b_k
=
a_k+b_k-\bar a_k
=
1-\bar a_k.
\]
Therefore
\begin{equation}
\widetilde\mu_k(\tilde x_k,\tilde x_0)
=
\frac{1}{\sqrt{a_k}}
\left(
\tilde x_k-\frac{b_k}{\sqrt{1-\bar a_k}}\varepsilon
\right).
\label{eq:exact-discrete-reverse-posterior-mean-epsilon}
\end{equation}

DDPM replaces the true noise $\varepsilon$ in \eqref{eq:exact-discrete-reverse-posterior-mean-epsilon} by the learned predictor $\epsilon_\theta(\tilde x_k,k)$ and defines the learned reverse kernel
\begin{equation}
p_\theta(\tilde x_{k-1}\mid \tilde x_k)
=
\N\!\left(\tilde x_{k-1};\widetilde\mu_\theta(\tilde x_k,k),\widetilde b_k I\right),
\label{eq:ddpm-reverse-kernel}
\end{equation}
with
\begin{equation}
\widetilde\mu_\theta(\tilde x_k,k)
:=
\frac{1}{\sqrt{a_k}}
\left(
\tilde x_k-\frac{b_k}{\sqrt{1-\bar a_k}}\epsilon_\theta(\tilde x_k,k)
\right).
\label{eq:ddpm-reverse-mean}
\end{equation}

\begin{theorem}[DDPM ancestral sampling and the reverse SDE]
Consider the learned reverse SDE
\[
\dd X_t
=
\left(
f_t X_t+\frac{g_t^2}{\sigma_t}\epsilon_\theta(X_t,t)
\right)\dd t
+ g_t\,\dd\bar W_t.
\]
DDPM sampling admits two complementary interpretations.
\begin{enumerate}[label=\arabic*.]
\item It is exactly the learned reverse Gaussian chain associated with the discrete forward diffusion \eqref{eq:ddpm-forward-step}.
\item Under the VP correspondence \eqref{eq:discrete-continuous-identification} and \eqref{eq:ak-bk-from-hk}, its conditional mean matches the conditional mean of the first-order backward Euler--Maruyama discretization of the learned reverse SDE up to $O(h_k^2)$. Its stochastic term does not coincide exactly with the Euler--Maruyama stochastic term at finite step size; instead DDPM uses the exact Gaussian posterior variance of the discrete forward diffusion. For interior steps with $1-\bar a_{k-1}>0$ fixed, the two variances agree to first order:
\[
\widetilde b_k = h_k + O(h_k^2).
\]
\end{enumerate}
\end{theorem}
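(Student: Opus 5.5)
Part 1 is essentially definitional, so I will dispose of it quickly. By construction, DDPM ancestral sampling draws $\tilde x_N\sim\N(0,I)$ and then, for $k=N,\dots,1$, samples from the kernel \eqref{eq:ddpm-reverse-kernel}. That kernel is the exact posterior \eqref{eq:exact-discrete-reverse-posterior} of the forward chain \eqref{eq:ddpm-forward-step}, with mean rewritten in the noise form \eqref{eq:exact-discrete-reverse-posterior-mean-epsilon}. The only change is that the true $\varepsilon$ is replaced by $\epsilon_\theta(\tilde x_k,k)$. Thus the sampler is, verbatim, the learned reverse Gaussian chain, and nothing further needs to be proved beyond pointing to the derivation preceding the theorem.

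For Part 2, I will specialize the Euler--Maruyama update \eqref{eq:reverse-sde-euler} to the VP case. There $f_t=-\tfrac12\beta(t)$ and $g_t^2=\beta(t)$, and the step runs from $t_k$ to $t_{k-1}$. This gives
\[
X_{t_{k-1}}=\Bigl(1+\tfrac12\beta(t_k)\Delta t_k\Bigr)X_{t_k}-\frac{\beta(t_k)\Delta t_k}{\sigma_{t_k}}\epsilon_\theta(X_{t_k},t_k)+\sqrt{\beta(t_k)\Delta t_k}\,Z_k.
\]
The first step is to replace $\beta(t_k)\Delta t_k$ by $h_k$. By \eqref{eq:integrated-step-size} and a one-point quadrature, $h_k=\beta(t_k)\Delta t_k+O(\Delta t_k^2)$. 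Assuming $\beta$ is bounded away from zero on the interval, this discrepancy is $O(h_k^2)$. The EM mean is therefore $(1+\tfrac{h_k}{2})x-\frac{h_k}{\sigma_{t_k}}\epsilon_\theta+O(h_k^2)$.

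Next I will expand the DDPM mean \eqref{eq:ddpm-reverse-mean} using \eqref{eq:ak-bk-from-hk} and \eqref{eq:discrete-continuous-identification}. Since $a_k=e^{-h_k}$, we get $a_k^{-1/2}=e^{h_k/2}=1+\tfrac{h_k}{2}+O(h_k^2)$ and $b_k=1-e^{-h_k}=h_k+O(h_k^2)$. Together with $\sqrt{1-\bar a_k}=\sigma_{t_k}$, this gives
\[
\widetilde\mu_\theta=\Bigl(1+\tfrac{h_k}{2}\Bigr)\tilde x_k-\frac{h_k}{\sigma_{t_k}}\epsilon_\theta(\tilde x_k,k)+O(h_k^2),
\]
which matches the EM mean term by term.

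For the variance, \eqref{eq:ddpm-posterior-variance} gives $\widetilde b_k=b_k\,\frac{1-\bar a_{k-1}}{1-\bar a_k}$. Writing $1-\bar a_k=1-\bar a_{k-1}e^{-h_k}=(1-\bar a_{k-1})+\bar a_{k-1}h_k+O(h_k^2)$, the ratio equals $1+O(h_k)$ whenever $1-\bar a_{k-1}>0$ is held fixed. Hence $\widetilde b_k=h_k+O(h_k^2)$, while the EM variance is $\beta(t_k)\Delta t_k=h_k+O(h_k^2)$. At finite $h_k$ the two variances plainly differ, which accounts for the non-coincidence clause of the theorem.

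The main obstacle is making the $O(\cdot)$ statements honest. The remainder terms multiply $\tilde x_k$ and $\epsilon_\theta$, so they are $O(h_k^2)$ only pointwise or locally uniformly in the state, and uniformity in $k$ requires $\sigma_{t_k}$ and $1-\bar a_{k-1}$ bounded below. This is exactly why the theorem restricts to interior steps; near $k=1$, $\sigma$ vanishes and the ratio $\frac{1-\bar a_{k-1}}{1-\bar a_k}$ degenerates. I will state these assumptions explicitly: $\beta$ is continuous and positive, and the state and predictor are evaluated on bounded sets.
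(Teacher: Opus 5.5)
Your proof is correct. Part~1 and the two expansions on the DDPM side (the mean via $e^{h_k/2}$ and $1-e^{-h_k}$, and the variance ratio $\frac{1-\bar a_{k-1}}{1-\bar a_k}=1+O(h_k)$) are the same as in the paper. On the Euler--Maruyama side, however, you take a different route.

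The paper does not reuse the pointwise sampler \eqref{eq:reverse-sde-euler}. It integrates the VP reverse SDE over $[t_{k-1},t_k]$ and freezes only $X_s$, $\epsilon_\theta$ and $\sigma_s$ at $t_k$. The drift coefficients then come out directly as $\tfrac{h_k}{2}$ and $h_k/\sigma_{t_k}$. It evaluates the noise $\int\sqrt{\beta}\,\dd\bar W$ by the deterministic It\^o isometry (Appendix~\ref{app:ito-isometry}), so its EM increment is exactly $\N(0,h_kI)$. Consequently the displayed relation $\widetilde b_k=h_k+O(h_k^2)$ is literally a comparison of the two variances. Non-coincidence is shown by an explicit witness: at $k=1$, $\bar a_0=1$ forces $\widetilde b_1=0<h_1$.

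Your route avoids stochastic integration entirely and compares DDPM with the sampler already written down in Section~5. The cost is an extra quadrature step, $h_k=\beta(t_k)\Delta t_k+O(\Delta t_k^2)$, and an EM variance that equals $h_k$ only up to $O(h_k^2)$. Your assumption that $\beta$ is bounded below is not an extra burden. The paper tacitly needs it too, when it turns $\int\beta(s)\,O(|s-t_k|)\,\dd s$ into $O(h_k^2)$.

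Two small repairs are needed:
\begin{itemize}
\item The quadrature bound needs $\beta$ Lipschitz (for instance $C^1$), not merely continuous. Continuity alone gives only an $o(\Delta t_k)$ error, which is not enough for $O(h_k^2)$.
\item The clause ``at finite $h_k$ the two variances plainly differ'' needs an explicit witness. The same $k=1$ case works for you: $\widetilde b_1=0<\beta(t_1)\Delta t_1$.
\end{itemize}

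Relatedly, the degeneracy at $k=1$ comes from $1-\bar a_0=0$, not from a vanishing $\sigma$, since $\sigma_{t_1}=\sqrt{b_1}>0$. Your requirement that $\sigma_{t_k}$ be bounded below is still a fair sharpening of the mean claim. When $\sigma_{t_k}$ is of order $\sqrt{h_k}$, the remainder $O(h_k^2)/\sigma_{t_k}$ is no longer $O(h_k^2)$. The paper glosses over this by treating $\sigma_{t_k}$ as fixed.
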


\begin{proof}
For the first claim, the exact reverse posterior of the discrete forward chain was derived above:
\[
q(\tilde x_{k-1}\mid \tilde x_k,\tilde x_0)
=
\N\!\left(\tilde x_{k-1};\widetilde\mu_k(\tilde x_k,\tilde x_0),\widetilde b_k I\right),
\]
with
\[
\widetilde\mu_k(\tilde x_k,\tilde x_0)
=
\frac{1}{\sqrt{a_k}}
\left(
\tilde x_k-\frac{b_k}{\sqrt{1-\bar a_k}}\varepsilon
\right).
\]
DDPM keeps the same Gaussian form and the same variance $\widetilde b_k I$, but replaces the inaccessible target $\varepsilon$ by the learned predictor $\epsilon_\theta(\tilde x_k,k)$. Therefore the learned DDPM kernel \eqref{eq:ddpm-reverse-kernel} is exactly the learned reverse Gaussian chain associated with the discrete forward diffusion.

We now compare with the reverse SDE from Section~5. In the variance-preserving case,
\[
f_t=-\frac12\beta(t),
\qquad
g_t^2=\beta(t),
\qquad
\sigma_{t_k}=\sqrt{1-\bar a_k}.
\]
Let
\[
h_k:=\int_{t_{k-1}}^{t_k}\beta(s)\,\dd s.
\]
Then by \eqref{eq:ak-bk-from-hk},
\[
a_k=e^{-h_k},
\qquad
b_k=1-e^{-h_k}.
\]
\paragraph{Step 1: derive the backward Euler--Maruyama step from the reverse SDE.}
In the VP case, the learned reverse SDE is
\[
\dd X_t
=
\left(
-\frac12\beta(t)X_t+\frac{\beta(t)}{\sigma_t}\epsilon_\theta(X_t,t)
\right)\dd t
+ \sqrt{\beta(t)}\,\dd \bar W_t,
\qquad
t:1\to 0.
\]
Integrating from $t_k$ down to $t_{k-1}$ gives
\begin{align*}
X_{t_{k-1}}-X_{t_k}
&=
\int_{t_k}^{t_{k-1}}
\left(
-\frac12\beta(s)X_s+\frac{\beta(s)}{\sigma_s}\epsilon_\theta(X_s,s)
\right)\dd s
+ \int_{t_k}^{t_{k-1}}\sqrt{\beta(s)}\,\dd\bar W_s\\
&=
\frac12\int_{t_{k-1}}^{t_k}\beta(s)X_s\,\dd s
- \int_{t_{k-1}}^{t_k}\frac{\beta(s)}{\sigma_s}\epsilon_\theta(X_s,s)\,\dd s
+ \int_{t_k}^{t_{k-1}}\sqrt{\beta(s)}\,\dd\bar W_s.
\end{align*}
For a first-order backward Euler--Maruyama step, freeze the drift at the right endpoint:
\[
X_s=\tilde x_k+O(|s-t_k|),
\qquad
\epsilon_\theta(X_s,s)=\epsilon_\theta(\tilde x_k,k)+O(|s-t_k|),
\qquad
\sigma_s=\sigma_{t_k}+O(|s-t_k|).
\]
Then
\[
\frac12\int_{t_{k-1}}^{t_k}\beta(s)X_s\,\dd s
=
\frac12 \tilde x_k\int_{t_{k-1}}^{t_k}\beta(s)\,\dd s
+ O(h_k^2)
=
\frac{h_k}{2}\tilde x_k+O(h_k^2),
\]
and
\[
\int_{t_{k-1}}^{t_k}\frac{\beta(s)}{\sigma_s}\epsilon_\theta(X_s,s)\,\dd s
=
\frac{\epsilon_\theta(\tilde x_k,k)}{\sigma_{t_k}}
\int_{t_{k-1}}^{t_k}\beta(s)\,\dd s
+ O(h_k^2)
=
\frac{h_k}{\sigma_{t_k}}\epsilon_\theta(\tilde x_k,k)+O(h_k^2).
\]
Since $\sigma_{t_k}=\sqrt{1-\bar a_k}$, this term is
\[
\frac{h_k}{\sqrt{1-\bar a_k}}\epsilon_\theta(\tilde x_k,k)+O(h_k^2).
\]
For the stochastic term, define
\[
\eta_k:=\int_{t_k}^{t_{k-1}}\sqrt{\beta(s)}\,\dd\bar W_s.
\]
Recall from Appendix~\ref{app:reverse-brownian} that $\bar W_t=W^{\mathrm{rev}}_{1-t}$ is the backward-$t$ representation of the standard Brownian motion $W_\tau^{\mathrm{rev}}$ in the increasing reverse-time variable $\tau=1-t$. Therefore
\[
\eta_k
=
\int_{1-t_k}^{1-t_{k-1}}\sqrt{\beta(1-\tau)}\,\dd W_\tau^{\mathrm{rev}}.
\]
Since the integrand is deterministic, $\eta_k$ is a centered Gaussian random vector. Appendix~\ref{app:ito-isometry} proves this deterministic-integrand case in detail from the definition of the It\^o integral. More explicitly,
\[
\E_{W^{\mathrm{rev}}}[\eta_k]=0.
\]
Moreover, It\^o isometry gives
\[
\E_{W^{\mathrm{rev}}}[\eta_k\eta_k^\top]
=
\left(\int_{1-t_k}^{1-t_{k-1}}\beta(1-\tau)\,\dd\tau\right)I
=
\left(\int_{t_{k-1}}^{t_k}\beta(s)\,\dd s\right)I
=
h_k I.
\]
Hence
\[
\eta_k\sim \N(0,h_k I).
\]
Therefore there exists $z_k\sim\N(0,I)$ such that
\[
\eta_k=\sqrt{h_k}\,z_k,
\qquad
z_k\sim\N(0,I).
\]
Therefore
\[
X_{t_{k-1}}
=
X_{t_k}
+ \frac{h_k}{2}X_{t_k}
- \frac{h_k}{\sigma_{t_k}}\epsilon_\theta(X_{t_k},t_k)
+ \sqrt{h_k}\,z_k
+ O(h_k^2).
\]
Replacing $X_{t_k}$ by $\tilde x_k$ and $\sigma_{t_k}$ by $\sqrt{1-\bar a_k}$ yields
\begin{equation}
\tilde x_{k-1}^{\mathrm{SDE}}
=
\tilde x_k
+ \frac{h_k}{2}\tilde x_k
- \frac{h_k}{\sqrt{1-\bar a_k}}\epsilon_\theta(\tilde x_k,k)
+ \sqrt{h_k}\,z_k
+ O(h_k^2),
\label{eq:reverse-sde-discrete-step}
\end{equation}
where $z_k\sim\N(0,I)$.

Equation \eqref{eq:reverse-sde-discrete-step} is the first-order backward Euler--Maruyama step associated with the learned reverse SDE.

\paragraph{Step 2: expand the DDPM ancestral mean.}
On the other hand, the DDPM ancestral mean \eqref{eq:ddpm-reverse-mean} satisfies
\begin{align*}
\widetilde\mu_\theta(\tilde x_k,k)
&=
\frac{1}{\sqrt{a_k}}
\left(
\tilde x_k-\frac{b_k}{\sqrt{1-\bar a_k}}\epsilon_\theta(\tilde x_k,k)
\right)\\
&=
\frac{1}{\sqrt{a_k}}\tilde x_k
- \frac{1}{\sqrt{a_k}}\frac{b_k}{\sqrt{1-\bar a_k}}\epsilon_\theta(\tilde x_k,k).
\end{align*}
We now expand the two coefficients separately.

First, since $a_k=e^{-h_k}$, we have
\[
\frac{1}{\sqrt{a_k}}
=
e^{h_k/2}
=
1+\frac{h_k}{2}+\frac{h_k^2}{8}+O(h_k^3)
=
1+\frac{h_k}{2}+O(h_k^2).
\]
Therefore
\[
\frac{1}{\sqrt{a_k}}\tilde x_k
=
\left(1+\frac{h_k}{2}+O(h_k^2)\right)\tilde x_k.
\tag{A}
\]

Second, since $b_k=1-e^{-h_k}$,
\[
b_k
=
1-\left(1-h_k+\frac{h_k^2}{2}+O(h_k^3)\right)
=
h_k-\frac{h_k^2}{2}+O(h_k^3)
=
h_k+O(h_k^2).
\]
Hence
\[
\frac{1}{\sqrt{a_k}}\,b_k
=
\left(1+\frac{h_k}{2}+O(h_k^2)\right)\left(h_k+O(h_k^2)\right).
\]
Expanding this product term by term gives
\begin{align*}
\frac{1}{\sqrt{a_k}}\,b_k
&=
h_k
+ 1\cdot O(h_k^2)
+ \frac{h_k}{2}\cdot h_k
+ \frac{h_k}{2}\cdot O(h_k^2)
+ O(h_k^2)\cdot h_k
+ O(h_k^2)\cdot O(h_k^2)\\
&=
h_k
+ O(h_k^2)
+ \frac{h_k^2}{2}
+ O(h_k^3)
+ O(h_k^3)
+ O(h_k^4)\\
&= 
h_k+O(h_k^2).
\end{align*}
Therefore
\[
\frac{1}{\sqrt{a_k}}\frac{b_k}{\sqrt{1-\bar a_k}}\epsilon_\theta(\tilde x_k,k)
=
\left(h_k+O(h_k^2)\right)\frac{1}{\sqrt{1-\bar a_k}}\epsilon_\theta(\tilde x_k,k).
\tag{B}
\]
Substituting (A) and (B) back into $\widetilde\mu_\theta(\tilde x_k,k)$ yields
\[
\widetilde\mu_\theta(\tilde x_k,k)
=
\left(1+\frac{h_k}{2}+O(h_k^2)\right)\tilde x_k
-\left(h_k+O(h_k^2)\right)\frac{1}{\sqrt{1-\bar a_k}}\epsilon_\theta(\tilde x_k,k).
\]
Hence the DDPM ancestral mean admits the first-order expansion displayed above.

Therefore the DDPM conditional mean matches the conditional mean of the reverse-SDE step \eqref{eq:reverse-sde-discrete-step} to first order in $h_k$.

\paragraph{Step 3: compare the stochastic terms.}
The backward Euler--Maruyama step \eqref{eq:reverse-sde-discrete-step} uses the Gaussian increment
\[
\eta_k^{\mathrm{EM}}
:=
\sqrt{h_k}\,z_k,
\qquad
\eta_k^{\mathrm{EM}}\sim \N(0,h_k I).
\]
By contrast, the DDPM ancestral step uses
\[
\eta_k^{\mathrm{DDPM}}
:=
\sqrt{\widetilde b_k}\,z_k,
\qquad
\eta_k^{\mathrm{DDPM}}\sim \N(0,\widetilde b_k I),
\]
where
\[
\widetilde b_k
=
\frac{1-\bar a_{k-1}}{1-\bar a_k}\,b_k
\]
is the exact Gaussian posterior variance of the discrete forward diffusion.

In general, $\widetilde b_k$ is \emph{not} equal to $h_k$, so the DDPM stochastic term is not exactly the same as the Euler--Maruyama stochastic term. Indeed, using \eqref{eq:ak-bk-from-hk} and $\bar a_k=a_k\bar a_{k-1}=e^{-h_k}\bar a_{k-1}$, we obtain
\begin{equation}
\widetilde b_k
=
\frac{1-\bar a_{k-1}}{1-e^{-h_k}\bar a_{k-1}}\,(1-e^{-h_k}).
\label{eq:ddpm-variance-vs-hk}
\end{equation}
This formula already shows that exact equality with $h_k$ fails in general. For example, at the final denoising step $k=1$ we have $\bar a_0=1$, hence
\[
\widetilde b_1
=
\frac{1-\bar a_0}{1-\bar a_1}\,b_1
=
0,
\]
whereas
\[
h_1=\int_{t_0}^{t_1}\beta(s)\,\dd s>0
\]
whenever $\beta$ is not identically zero on $[t_0,t_1]$. Therefore the DDPM stochastic term does not coincide exactly with the Euler--Maruyama stochastic term at finite step size.

Nevertheless, away from this degenerate endpoint, the two variances agree to first order. Assume that $1-\bar a_{k-1}>0$ is fixed. Since
\[
e^{-h_k}=1-h_k+O(h_k^2),
\]
we have
\[
b_k=1-e^{-h_k}=h_k+O(h_k^2).
\]
Also,
\begin{align*}
1-\bar a_k
&=
1-\bar a_{k-1}e^{-h_k}\\
&=
1-\bar a_{k-1}\bigl(1-h_k+O(h_k^2)\bigr)\\
&=
(1-\bar a_{k-1})+\bar a_{k-1}h_k+O(h_k^2).
\end{align*}
Hence
\begin{align*}
\frac{1-\bar a_{k-1}}{1-\bar a_k}
&=
\frac{1-\bar a_{k-1}}
{(1-\bar a_{k-1})+\bar a_{k-1}h_k+O(h_k^2)}\\
&=
\frac{1}
{1+\frac{\bar a_{k-1}}{1-\bar a_{k-1}}h_k+O(h_k^2)}\\
&=
1+O(h_k),
\end{align*}
where the last step uses the Taylor expansion $(1+u)^{-1}=1-u+O(u^2)$ as $u\to 0$. Multiplying by $b_k=h_k+O(h_k^2)$ gives
\[
\widetilde b_k
=
\left(1+O(h_k)\right)\left(h_k+O(h_k^2)\right)
=
h_k+O(h_k^2).
\]
Consequently
\[
\sqrt{\widetilde b_k}
=
\sqrt{h_k}\,\sqrt{1+O(h_k)}
=
\sqrt{h_k}\left(1+O(h_k)\right),
\]
so the DDPM stochastic term agrees with the Euler--Maruyama stochastic term to first order on interior steps.

We conclude that DDPM sampling should be interpreted in two layers:
\begin{itemize}
\item exactly, it is the learned reverse Gaussian chain of the discrete forward diffusion;
\item asymptotically, under the VP correspondence and for small step size, it is a first-order discrete approximation to the continuous reverse SDE.
\end{itemize}
\end{proof}

The ancestral DDPM sampling algorithm is
\begin{enumerate}[label=\arabic*.]
\item Sample $\tilde x_N\sim \N(0,I)$.
\item For $k=N,N-1,\dots,1$, sample $z_k\sim \N(0,I)$ and set
\begin{equation}
\tilde x_{k-1}
=
\widetilde\mu_\theta(\tilde x_k,k)+\sqrt{\widetilde b_k}\,z_k.
\label{eq:ddpm-ancestral-step}
\end{equation}
\item Output $\tilde x_0$.
\end{enumerate}

\subsection{DDIM inference and its relation to the reverse ODE}

DDIM \cite{song2021ddim} uses the same trained noise predictor as DDPM, but changes the reverse sampler. Define the predicted clean sample
\begin{equation}
\widehat{\tilde x}_0(\tilde x_k,k)
:=
\frac{\tilde x_k-\sqrt{1-\bar a_k}\,\epsilon_\theta(\tilde x_k,k)}{\sqrt{\bar a_k}}.
\label{eq:predicted-x0}
\end{equation}
The deterministic DDIM update is
\begin{equation}
\tilde x_{k-1}
=
\sqrt{\bar a_{k-1}}\,\widehat{\tilde x}_0(\tilde x_k,k)
+ \sqrt{1-\bar a_{k-1}}\,\epsilon_\theta(\tilde x_k,k).
\label{eq:ddim-deterministic-step}
\end{equation}
More generally, DDIM introduces a stochasticity parameter $\eta\in[0,1]$ and uses
\begin{equation}
\tilde x_{k-1}
=
\sqrt{\bar a_{k-1}}\,\widehat{\tilde x}_0(\tilde x_k,k)
+ \sqrt{1-\bar a_{k-1}-\widehat s_k^2}\,\epsilon_\theta(\tilde x_k,k)
+ \widehat s_k z_k,
\label{eq:ddim-general-step}
\end{equation}
where $z_k\sim\N(0,I)$ and
\begin{equation}
\widehat s_k
:=
\eta\sqrt{
\frac{1-\bar a_{k-1}}{1-\bar a_k}
\left(1-\frac{\bar a_k}{\bar a_{k-1}}\right)
}.
\label{eq:ddim-sigma}
\end{equation}

\begin{theorem}[Deterministic DDIM is a discrete reverse-ODE sampler]
When $\eta=0$, the DDIM update \eqref{eq:ddim-deterministic-step} is exactly the first-order one-step discretization of the learned reverse ODE
\[
\frac{\dd X_t}{\dd t}
=
f_t X_t+\frac{g_t^2}{2\sigma_t}\epsilon_\theta(X_t,t)
\]
obtained by freezing $\epsilon_\theta$ on the interval $[t_{k-1},t_k]$.
\end{theorem}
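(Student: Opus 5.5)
The plan is to use the exact integral form of the learned reverse ODE, \eqref{eq:exact-dpm-form}, rather than an Euler step in $t$. Take $s=t_k$ and $t=t_{k-1}$, and freeze the integrand $\epsilon_\theta(X_{\vartheta(\zeta)},\vartheta(\zeta))$ at its right-endpoint value $\epsilon_\theta(\tilde x_k,k)$. This requires $\lambda_t$ to be monotone on $[t_{k-1},t_k]$, as assumed in the DPM-Solver subsection. In the VP/DDPM setting, $\alpha_t$ decreases and $\sigma_t$ increases, so this holds on the interval.

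With the integrand frozen, the remaining integral is elementary:
\[
\int_{\lambda_{t_k}}^{\lambda_{t_{k-1}}} e^{-\zeta}\dd\zeta = e^{-\lambda_{t_k}}-e^{-\lambda_{t_{k-1}}} = \frac{\sigma_{t_k}}{\alpha_{t_k}}-\frac{\sigma_{t_{k-1}}}{\alpha_{t_{k-1}}}.
\]
The frozen solution is therefore
\[
X_{t_{k-1}} = \frac{\alpha_{t_{k-1}}}{\alpha_{t_k}}X_{t_k} - \alpha_{t_{k-1}}\Bigl(\frac{\sigma_{t_k}}{\alpha_{t_k}}-\frac{\sigma_{t_{k-1}}}{\alpha_{t_{k-1}}}\Bigr)\epsilon_\theta(X_{t_k},t_k),
\]
which is the first-order update \eqref{eq:dpm-solver-1} written out explicitly.

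Next, substitute the grid identification \eqref{eq:discrete-continuous-identification}, namely $\alpha_{t_j}=\sqrt{\bar a_j}$, $\sigma_{t_j}=\sqrt{1-\bar a_j}$, and $X_{t_k}=\tilde x_k$. Expanding \eqref{eq:ddim-deterministic-step} with \eqref{eq:predicted-x0} gives
\[
\tilde x_{k-1}=\frac{\sqrt{\bar a_{k-1}}}{\sqrt{\bar a_k}}\tilde x_k-\Bigl(\frac{\sqrt{\bar a_{k-1}}\sqrt{1-\bar a_k}}{\sqrt{\bar a_k}}-\sqrt{1-\bar a_{k-1}}\Bigr)\epsilon_\theta(\tilde x_k,k).
\]
Matching coefficients then finishes the argument. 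The coefficient of $\tilde x_k$ is $\alpha_{t_{k-1}}/\alpha_{t_k}$. The coefficient of $\epsilon_\theta$ is $\alpha_{t_{k-1}}\sigma_{t_k}/\alpha_{t_k}-\sigma_{t_{k-1}}$. Both agree with the frozen exact solution, so the two updates coincide identically, with no $O(h^2)$ remainder.

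The main obstacle is interpretive rather than computational. The word ``exactly'' in the statement only holds if ``first-order discretization'' means freezing $\epsilon_\theta$ inside the exact variation-of-constants solution (exponential integrator in $\lambda$). A plain Euler step in $t$ as in \eqref{eq:reverse-ode-euler} agrees with DDIM only up to $O(\Delta t^2)$. I will therefore make explicit in the proof that the linear part $f_tX_t$ is integrated exactly via $R_t=X_t/\alpha_t$, and that only the nonlinear term is frozen. I will also note the two hypotheses this relies on: $\alpha_{t_k}>0$ at the grid points used, and monotonicity of $\lambda$ so that the change of variables is valid. I will also remark that if $\epsilon_\theta$ is frozen in $t$ rather than in $\lambda$, the same integral arises after the substitution $\dd\lambda=\dot\lambda_t\dd t$, since $g_t^2/(2\sigma_t\alpha_t)=-e^{-\lambda_t}\dot\lambda_t$. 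So the freezing variable does not matter, and the identity is exact either way.
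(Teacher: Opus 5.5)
Your proposal is correct and follows the paper's proof essentially step for step. You apply the variation-of-constants formula \eqref{eq:exact-dpm-form} with $s=t_k$, $t=t_{k-1}$, freeze $\epsilon_\theta$ at the right endpoint, evaluate the elementary integral, substitute the grid identification, and match coefficients with the expanded DDIM update. The only difference is cosmetic: you use $e^{-\lambda}=\sigma/\alpha$ directly rather than passing through $e^{\Delta\lambda_k}$ as the paper does, which has the small benefit of never dividing by $\sigma_{t_{k-1}}$ (zero at $k=1$).
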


\begin{proof}
Section~5 showed that the learned reverse ODE has the exact variation-of-constants formula
\[
X_t
=
\frac{\alpha_t}{\alpha_s}X_s
- \alpha_t\int_{\lambda_s}^{\lambda_t}e^{-\zeta}\epsilon_\theta(X_{\vartheta(\zeta)},\vartheta(\zeta))\,\dd\zeta.
\]
Apply this with $s=t_k$ and $t=t_{k-1}$. Then
\[
X_{t_{k-1}}
=
\frac{\alpha_{t_{k-1}}}{\alpha_{t_k}}X_{t_k}
- \alpha_{t_{k-1}}
\int_{\lambda_{t_k}}^{\lambda_{t_{k-1}}}
e^{-\zeta}\epsilon_\theta(X_{\vartheta(\zeta)},\vartheta(\zeta))\,\dd\zeta.
\]
Now freeze the integrand on the interval $[t_{k-1},t_k]$:
\[
\epsilon_\theta(X_{\vartheta(\zeta)},\vartheta(\zeta))\approx \epsilon_\theta(\tilde x_k,k).
\]
Then
\begin{align*}
\tilde x_{k-1}
&=
\frac{\alpha_{t_{k-1}}}{\alpha_{t_k}}\tilde x_k
- \alpha_{t_{k-1}}\epsilon_\theta(\tilde x_k,k)
\int_{\lambda_{t_k}}^{\lambda_{t_{k-1}}}e^{-\zeta}\,\dd\zeta\\
&=
\frac{\alpha_{t_{k-1}}}{\alpha_{t_k}}\tilde x_k
- \alpha_{t_{k-1}}\epsilon_\theta(\tilde x_k,k)
\left[-e^{-\zeta}\right]_{\lambda_{t_k}}^{\lambda_{t_{k-1}}}\\
&=
\frac{\alpha_{t_{k-1}}}{\alpha_{t_k}}\tilde x_k
- \alpha_{t_{k-1}}\epsilon_\theta(\tilde x_k,k)
\left(e^{-\lambda_{t_k}}-e^{-\lambda_{t_{k-1}}}\right).
\end{align*}
Because
\[
\Delta\lambda_k:=\lambda_{t_{k-1}}-\lambda_{t_k},
\]
we have
\[
e^{-\lambda_{t_k}}
=
e^{-\lambda_{t_{k-1}}}e^{\Delta\lambda_k},
\]
so
\[
e^{-\lambda_{t_k}}-e^{-\lambda_{t_{k-1}}}
=
e^{-\lambda_{t_{k-1}}}(e^{\Delta\lambda_k}-1).
\]
Since
\[
e^{-\lambda_{t_{k-1}}}=\frac{\sigma_{t_{k-1}}}{\alpha_{t_{k-1}}},
\]
the coefficient of $\epsilon_\theta$ becomes
\[
\alpha_{t_{k-1}}
\left(
e^{-\lambda_{t_k}}-e^{-\lambda_{t_{k-1}}}
\right)
=
\alpha_{t_{k-1}}e^{-\lambda_{t_{k-1}}}(e^{\Delta\lambda_k}-1)
=
\sigma_{t_{k-1}}(e^{\Delta\lambda_k}-1).
\]
Hence
\begin{equation}
\tilde x_{k-1}
=
\frac{\alpha_{t_{k-1}}}{\alpha_{t_k}}\tilde x_k
- \sigma_{t_{k-1}}\bigl(e^{\Delta\lambda_k}-1\bigr)\epsilon_\theta(\tilde x_k,k),
\label{eq:ode-first-order-grid-step}
\end{equation}
Using
\[
e^{\Delta\lambda_k}
=
\frac{\alpha_{t_{k-1}}\sigma_{t_k}}{\alpha_{t_k}\sigma_{t_{k-1}}},
\]
the coefficient of $\epsilon_\theta$ becomes
\[
-\sigma_{t_{k-1}}(e^{\Delta\lambda_k}-1)
=
\sigma_{t_{k-1}}
- \frac{\alpha_{t_{k-1}}}{\alpha_{t_k}}\sigma_{t_k}.
\]
Therefore \eqref{eq:ode-first-order-grid-step} is equivalent to
\begin{equation}
\tilde x_{k-1}
=
\frac{\alpha_{t_{k-1}}}{\alpha_{t_k}}\tilde x_k
+ \left(
\sigma_{t_{k-1}}
- \frac{\alpha_{t_{k-1}}}{\alpha_{t_k}}\sigma_{t_k}
\right)\epsilon_\theta(\tilde x_k,k).
\label{eq:ode-first-order-grid-step-expanded}
\end{equation}
Now substitute the discrete-continuous identification
\[
\alpha_{t_k}=\sqrt{\bar a_k},
\qquad
\sigma_{t_k}=\sqrt{1-\bar a_k},
\]
to get
\[
\tilde x_{k-1}
=
\sqrt{\frac{\bar a_{k-1}}{\bar a_k}}\,\tilde x_k
+ \left(
\sqrt{1-\bar a_{k-1}}
- \sqrt{\frac{\bar a_{k-1}}{\bar a_k}}\sqrt{1-\bar a_k}
\right)\epsilon_\theta(\tilde x_k,k).
\]
On the other hand, by the definition \eqref{eq:predicted-x0},
\begin{align*}
\sqrt{\bar a_{k-1}}\,\widehat{\tilde x}_0(\tilde x_k,k)
+ \sqrt{1-\bar a_{k-1}}\,\epsilon_\theta(\tilde x_k,k)
&=
\sqrt{\bar a_{k-1}}
\frac{\tilde x_k-\sqrt{1-\bar a_k}\,\epsilon_\theta(\tilde x_k,k)}{\sqrt{\bar a_k}}
+ \sqrt{1-\bar a_{k-1}}\,\epsilon_\theta(\tilde x_k,k)\\
&=
\sqrt{\frac{\bar a_{k-1}}{\bar a_k}}\,\tilde x_k
+ \left(
\sqrt{1-\bar a_{k-1}}
- \sqrt{\frac{\bar a_{k-1}}{\bar a_k}}\sqrt{1-\bar a_k}
\right)\epsilon_\theta(\tilde x_k,k).
\end{align*}
The coefficient of $\tilde x_k$ and the coefficient of $\epsilon_\theta(\tilde x_k,k)$ agree term by term, so the two updates are identical. Therefore
\[
\tilde x_{k-1}
=
\sqrt{\bar a_{k-1}}\,\widehat{\tilde x}_0(\tilde x_k,k)
+ \sqrt{1-\bar a_{k-1}}\,\epsilon_\theta(\tilde x_k,k),
\]
which is precisely the deterministic DDIM update \eqref{eq:ddim-deterministic-step}. Thus deterministic DDIM is the discrete reverse-ODE sampler corresponding to the learned probability-flow ODE.
\end{proof}

When $\eta>0$, the additional term $\widehat s_k z_k$ in \eqref{eq:ddim-general-step} reintroduces stochasticity. So DDIM interpolates between deterministic reverse-ODE sampling ($\eta=0$) and a stochastic reverse-diffusion-style sampler ($\eta>0$).

\newpage
\section{Comparison with Flow Matching and Score-Based SDEs}
\label{sec:flow-score-unification}

This section places the reverse ODE/SDE framework developed in this tutorial in the context of two closely related viewpoints: flow matching \cite{lipman2023flowmatching} and score-based generative modeling through SDEs \cite{song2021sde}. The three frameworks are closely connected, but they differ in what is taken as the primary object, which time direction is emphasized, and what quantity is learned by the neural network.

\subsection{Flow models, flow matching, diffusion models, and score matching}

A \emph{flow model} is a deterministic generative model defined by an ODE
\[
\frac{\dd X_t}{\dd t}=v_t(X_t),
\]
where $v_t:\R^d\to\R^d$ is a time-dependent velocity field. If the corresponding density path is $p_t$, then the velocity field must satisfy the continuity equation
\[
\partial_t p_t(x)
=
-\diver\bigl(p_t(x)v_t(x)\bigr).
\]
Given a target density path $p_t$ and a target velocity field $v_t^*$ that generates it, the standard flow-matching objective is
\begin{equation}
\mathcal L_{\mathrm{FM}}(\theta)
:=
\int_0^1
\E_{X_t\sim p_t}
\Bigl[
\|v_\theta(X_t,t)-v_t^*(X_t)\|_2^2
\Bigr]\dd t.
\label{eq:section8-fm}
\end{equation}
Thus flow matching learns a velocity field directly.

A \emph{diffusion model} is a stochastic generative model defined by an SDE
\[
\dd X_t=b_t(X_t)\,\dd t+g_t\,\dd W_t,
\]
where the drift transports mass and the Brownian term injects randomness continuously in time. In score-based diffusion modeling, the key quantity is the score
\[
s_t^*(x):=\grad\log p_t(x).
\]
A score model $s_\theta(x,t)$ can be trained by score matching, for example through the objective
\begin{equation}
\mathcal L_{\mathrm{SM}}(\theta)
:=
\frac{1}{2}\int_0^1 \lambda(t)\,
\E_{X_t\sim p_t}\left[
\|s_\theta(X_t,t)-\grad\log p_t(X_t)\|_2^2
\right]\dd t,
\label{eq:section8-sm}
\end{equation}
or equivalently, in the Gaussian diffusion setting of this tutorial, through the reparameterized noise-prediction loss \eqref{eq:score-loss}--\eqref{eq:noise-loss}. Thus score matching learns the score directly, while the reverse drift is then obtained from that score.

\subsection{Comparison with \texorpdfstring{\emph{Flow Matching for Generative Modeling}}{Flow Matching for Generative Modeling}}

In this subsection and the next, we temporarily switch to the generative-time convention used in \cite{lipman2023flowmatching,song2021sde}. Thus $t:0\to 1$ is the sampling clock, $X_0\sim p_0$ denotes a noise sample, and $X_1\sim p_1=p_{\mathrm{data}}$ denotes a data sample. This is opposite to the convention used in the main body of the tutorial. To avoid notational conflict, we now introduce a new pair of schedules, again denoted by $\alpha_t$ and $\sigma_t$, local to Sections~8.2 and~8.3. We assume
\[
\alpha_0=0,
\quad
\alpha_1=1,
\quad
\sigma_0=1,
\quad
\sigma_1=0,
\qquad
\alpha_t\geq0,\ \sigma_t\geq0\ \text{for all }t\in[0,1],
\]
Typically, $\alpha_t$ increases while $\sigma_t$ decreases, so that the path begins with Gaussian noise and ends at clean data.
Conceptually, this is the key difference from our earlier ODE framework: in the main body we first constructed a data-to-noise forward process and then derived a reverse ODE for sampling, whereas flow matching specifies the sampling ODE directly in the forward generative time direction.

\paragraph{Conditional generative Gaussian path.}
Fix a target data point $x_1$. The conditional generative path is
\begin{equation}
p_t(x\mid x_1)
:=
\N(x\mid \alpha_t x_1,\sigma_t^2I).
\label{eq:section8-conditional-gen-kernel}
\end{equation}
At $t=0$, this distribution is approximately Gaussian noise; at $t=1$, it collapses to the clean data point $x_1$.

\paragraph{Conditional generative ODE.}
\begin{theorem}[Conditional generative ODE]
The conditional path \eqref{eq:section8-conditional-gen-kernel} is generated by the ODE
\begin{equation}
\frac{\dd X_t}{\dd t}
=
u_t(X_t\mid x_1),
\label{eq:section8-conditional-gen-ode}
\end{equation}
where
\begin{equation}
u_t(x\mid x_1)
:=
\left(
\dot{\alpha}_t-\frac{\dot{\sigma}_t}{\sigma_t}\alpha_t
\right)x_1
+ \frac{\dot{\sigma}_t}{\sigma_t}x.
\label{eq:section8-conditional-gen-velocity}
\end{equation}
Equivalently, the conditional density satisfies
\begin{equation}
\partial_t p_t(x\mid x_1)
=
-\diver\bigl(p_t(x\mid x_1)u_t(x\mid x_1)\bigr).
\label{eq:section8-conditional-gen-continuity}
\end{equation}
\end{theorem}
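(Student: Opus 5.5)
This statement is formally identical to the conditional forward ODE theorem of Section~1, with the roles of $(x_0,\alpha_t,\sigma_t)$ taken by $(x_1,\alpha_t,\sigma_t)$ in the generative-time convention. I will therefore repeat that argument verbatim under this renaming. Only algebraic properties of the Gaussian kernel are used, not the boundary values of the schedules. So it suffices to check that nothing in that proof depends on $\alpha_0=1$, $\sigma_0=0$ or on the monotonicity direction.

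The first step is the pathwise generation. Fix $x_1$ and write $X_t=\alpha_t x_1+\sigma_t\varepsilon$ with $\varepsilon\sim\N(0,I)$. This has law $p_t(\cdot\mid x_1)$. Differentiating in $t$ gives $\dot X_t=\dot\alpha_t x_1+\dot\sigma_t\varepsilon$. Eliminating $\varepsilon=(X_t-\alpha_t x_1)/\sigma_t$ yields exactly \eqref{eq:section8-conditional-gen-velocity}. Hence the ODE \eqref{eq:section8-conditional-gen-ode}, started from $X_0=\alpha_0x_1+\sigma_0\varepsilon$, transports $p_0(\cdot\mid x_1)$ along the prescribed path.

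The second step is the continuity equation, verified by direct computation. Set $r_t(x)=x-\alpha_t x_1$. First compute
\[
\partial_t\log p_t(x\mid x_1)=-d\frac{\dot\sigma_t}{\sigma_t}+\frac{\dot\alpha_t}{\sigma_t^2}r_t^\top x_1+\frac{\dot\sigma_t}{\sigma_t^3}\|r_t\|_2^2 .
\]
Then expand $\diver(p\,u)=u^\top\grad p+p\,\diver u$ using three facts: $\grad p=-p\,r_t/\sigma_t^2$, $u=\dot\alpha_t x_1+(\dot\sigma_t/\sigma_t)r_t$, and $\diver u=d\,\dot\sigma_t/\sigma_t$. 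The terms match those in \eqref{eq:conditional-continuity-lhs}--\eqref{eq:conditional-continuity-rhs} one for one.

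The only genuine obstacle is the endpoint $t=1$, where $\sigma_1=0$. There the velocity field $\dot\sigma_t/\sigma_t$ blows up and the density degenerates to a Dirac mass. I would state the identity on $[0,1)$, where $\sigma_t>0$, and interpret $t=1$ as a limit: $X_t\to x_1$ almost surely and $p_t(\cdot\mid x_1)\to\delta_{x_1}$ weakly. The pathwise formula $\dot\alpha_t x_1+\dot\sigma_t\varepsilon$ stays bounded up to $t=1$, so the trajectories extend continuously.
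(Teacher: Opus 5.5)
Your proposal is correct and matches the paper's proof step for step: differentiate $X_t=\alpha_t x_1+\sigma_t\varepsilon$ and eliminate $\varepsilon$ to obtain the velocity, then verify the continuity equation by computing $\partial_t\log p_t(x\mid x_1)$ and expanding $\diver(p\,u)=u^\top\grad p+p\,\diver u$ with $r_t(x)=x-\alpha_t x_1$. Restricting to $t\in[0,1)$, where $\sigma_t>0$, and treating $t=1$ as a limit is a sensible refinement that the paper leaves implicit.
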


\begin{proof}
Fix $x_1$ and write
\[
r_t(x):=x-\alpha_t x_1.
\]
From the reparameterization
\[
X_t=\alpha_t x_1+\sigma_t\varepsilon,
\qquad
\varepsilon\sim\N(0,I),
\]
we obtain
\[
\frac{\dd X_t}{\dd t}
=
\dot{\alpha}_t x_1+\dot{\sigma}_t\varepsilon
=
\dot{\alpha}_t x_1+\dot{\sigma}_t\frac{X_t-\alpha_t x_1}{\sigma_t},
\]
which simplifies to \eqref{eq:section8-conditional-gen-velocity}. Thus \eqref{eq:section8-conditional-gen-ode} indeed generates the conditional Gaussian path.

We now verify the continuity equation \eqref{eq:section8-conditional-gen-continuity} explicitly. Since
\[
p_t(x\mid x_1)
=
\frac{1}{(2\pi \sigma_t^2)^{d/2}}
\exp\!\left(-\frac{\|r_t(x)\|_2^2}{2\sigma_t^2}\right),
\]
we have
\[
\log p_t(x\mid x_1)
=
-\frac{d}{2}\log(2\pi \sigma_t^2)-\frac{\|r_t(x)\|_2^2}{2\sigma_t^2}.
\]
Because
\[
\partial_t r_t(x)=-\dot{\alpha}_t x_1,
\qquad
\partial_t\|r_t(x)\|_2^2=-2\dot{\alpha}_t r_t(x)^\top x_1,
\]
it follows that
\begin{align*}
\partial_t \log p_t(x\mid x_1)
&=
-d\frac{\dot{\sigma}_t}{\sigma_t}
-\partial_t\!\left(\frac{\|r_t(x)\|_2^2}{2\sigma_t^2}\right)\\
&=
-d\frac{\dot{\sigma}_t}{\sigma_t}
+\frac{\dot{\alpha}_t}{\sigma_t^2}r_t(x)^\top x_1
+\frac{\dot{\sigma}_t}{\sigma_t^3}\|r_t(x)\|_2^2.
\end{align*}
Therefore
\begin{equation}
\partial_t p_t(x\mid x_1)
=
p_t(x\mid x_1)\left[
-d\frac{\dot{\sigma}_t}{\sigma_t}
+\frac{\dot{\alpha}_t}{\sigma_t^2}r_t(x)^\top x_1
+\frac{\dot{\sigma}_t}{\sigma_t^3}\|r_t(x)\|_2^2
\right].
\label{eq:section8-conditional-gen-ode-lhs}
\end{equation}

Next, by \eqref{eq:section8-conditional-gen-score},
\[
\grad p_t(x\mid x_1)
=
p_t(x\mid x_1)\grad\log p_t(x\mid x_1)
=
-p_t(x\mid x_1)\frac{r_t(x)}{\sigma_t^2}.
\]
Also,
\[
u_t(x\mid x_1)
=
\dot{\alpha}_t x_1+\frac{\dot{\sigma}_t}{\sigma_t}r_t(x),
\]
so
\begin{align*}
u_t(x\mid x_1)^\top \grad p_t(x\mid x_1)
&=
-p_t(x\mid x_1)\left[
\frac{\dot{\alpha}_t}{\sigma_t^2}r_t(x)^\top x_1
+\frac{\dot{\sigma}_t}{\sigma_t^3}\|r_t(x)\|_2^2
\right].
\end{align*}
Moreover,
\[
\diver u_t(x\mid x_1)
=
\frac{\dot{\sigma}_t}{\sigma_t}\diver r_t(x)
=
d\frac{\dot{\sigma}_t}{\sigma_t},
\]
because $r_t(x)=x-\alpha_t x_1$ and $\diver x=d$. Hence
\begin{align*}
\diver\bigl(p_t(x\mid x_1)u_t(x\mid x_1)\bigr)
&=
u_t(x\mid x_1)^\top \grad p_t(x\mid x_1)
+p_t(x\mid x_1)\diver u_t(x\mid x_1)\\
&=
p_t(x\mid x_1)\left[
-\frac{\dot{\alpha}_t}{\sigma_t^2}r_t(x)^\top x_1
-\frac{\dot{\sigma}_t}{\sigma_t^3}\|r_t(x)\|_2^2
+d\frac{\dot{\sigma}_t}{\sigma_t}
\right].
\end{align*}
Therefore
\[
-\diver\bigl(p_t(x\mid x_1)u_t(x\mid x_1)\bigr)
=
p_t(x\mid x_1)\left[
-d\frac{\dot{\sigma}_t}{\sigma_t}
+\frac{\dot{\alpha}_t}{\sigma_t^2}r_t(x)^\top x_1
+\frac{\dot{\sigma}_t}{\sigma_t^3}\|r_t(x)\|_2^2
\right],
\]
which agrees exactly with \eqref{eq:section8-conditional-gen-ode-lhs}. This proves \eqref{eq:section8-conditional-gen-continuity}.
\end{proof}

\paragraph{Marginal generative ODE.}
Marginalizing \eqref{eq:section8-conditional-gen-kernel} over the data distribution $p_1$ gives the noise-to-data density path
\begin{equation}
p_t(x)
:=
\int_{\R^d} p_t(x\mid x_1)p_1(x_1)\dd x_1.
\label{eq:section8-marginal-gen-density}
\end{equation}
Define the marginal generative velocity by
\begin{equation}
u_t^*(x)
:=
\E[u_t(x\mid X_1)\mid X_t=x].
\label{eq:section8-marginal-gen-velocity}
\end{equation}

\begin{proposition}[Marginal generative ODE]
The marginal density path \eqref{eq:section8-marginal-gen-density} satisfies
\begin{equation}
\partial_t p_t(x)
=
-\diver\bigl(p_t(x)u_t^*(x)\bigr).
\label{eq:section8-marginal-gen-continuity}
\end{equation}
Hence the ODE
\begin{equation}
\frac{\dd X_t}{\dd t}
=
u_t^*(X_t),
\qquad
X_0\sim p_0,
\label{eq:section8-marginal-gen-ode}
\end{equation}
transports noise to data in forward time $t:0\to 1$.
\end{proposition}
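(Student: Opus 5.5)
The plan mirrors the proof of the marginalized forward ODE in Section~2: average the conditional continuity equations against the data distribution. Fix $t\in(0,1)$. Differentiate the mixture representation \eqref{eq:section8-marginal-gen-density} under the integral sign to get
\[
\partial_t p_t(x)=\int \partial_t p_t(x\mid x_1)\,p_1(x_1)\dd x_1 .
\]
Then substitute the conditional continuity equation \eqref{eq:section8-conditional-gen-continuity} for each fixed $x_1$. This gives
\[
\partial_t p_t(x)=-\int \diver\bigl(p_t(x\mid x_1)u_t(x\mid x_1)\bigr)p_1(x_1)\dd x_1 .
\]
Next, interchange the divergence (taken in $x$) with the integral over $x_1$, so that
\[
\partial_t p_t(x)=-\diver\Bigl(\int p_t(x\mid x_1)u_t(x\mid x_1)p_1(x_1)\dd x_1\Bigr).
\]

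The second step identifies the flux $\int p_t(x\mid x_1)u_t(x\mid x_1)p_1(x_1)\dd x_1$ with $p_t(x)u_t^*(x)$. By Bayes' rule, the posterior density of $X_1$ given $X_t=x$ is $p_t(x\mid x_1)p_1(x_1)/p_t(x)$ wherever $p_t(x)>0$. The definition \eqref{eq:section8-marginal-gen-velocity} then reads
\[
u_t^*(x)=\frac{1}{p_t(x)}\int u_t(x\mid x_1)\,p_t(x\mid x_1)p_1(x_1)\dd x_1 .
\]
Multiplying by $p_t(x)$ gives exactly the flux, which proves \eqref{eq:section8-marginal-gen-continuity}. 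Since $\sigma_t>0$ for $t<1$, the kernel is a nondegenerate Gaussian, so $p_t(x)>0$ everywhere and the division is harmless.

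The final claim follows because the continuity equation \eqref{eq:section8-marginal-gen-continuity} is exactly the density evolution of the ODE \eqref{eq:section8-marginal-gen-ode}. This is the same correspondence invoked throughout the paper, with the same implicit well-posedness assumptions. The ODE started from $X_0\sim p_0$ therefore has marginals $p_t$. Since $p_0=\N(0,I)$ (because $\alpha_0=0$, $\sigma_0=1$) and $p_t\to p_1=p_{\mathrm{data}}$ as $t\to1$, the flow transports noise to data.

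The main obstacle is justifying the interchanges of $\partial_t$ and $\diver$ with the $x_1$-integral, together with the endpoint $t=1$ where $\sigma_t\to0$ and $u_t(x\mid x_1)$ blows up. I would handle this at the same level of rigor as the paper. On compact subintervals of $(0,1)$, the Gaussian kernel and its $t$- and $x$-derivatives are dominated by integrable functions, assuming $p_1$ has finite second moments, so dominated convergence applies. The statement at $t=1$ would then be read as a limit.
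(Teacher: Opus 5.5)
Your proposal is correct and follows the paper's proof step for step. You differentiate the mixture under the integral, substitute the conditional continuity equation, move the divergence outside the $x_1$-integral, and identify the flux with $p_t(x)u_t^*(x)$ via Bayes' rule; your extra remarks on positivity of $p_t$, on $p_0=\N(0,I)$, and on dominated convergence over compact subintervals of $(0,1)$ are more careful than the paper but do not change the argument.
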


\begin{proof}
Differentiate under the integral sign:
\[
\partial_t p_t(x)
=
\int \partial_t p_t(x\mid x_1)p_1(x_1)\dd x_1.
\]
Using \eqref{eq:section8-conditional-gen-continuity},
\[
\partial_t p_t(x)
=
-\int
\diver\bigl(p_t(x\mid x_1)u_t(x\mid x_1)\bigr)p_1(x_1)\dd x_1.
\]
Since the divergence acts on $x$, it may be moved outside the integral:
\[
\partial_t p_t(x)
=
-\diver\left(
\int p_t(x\mid x_1)u_t(x\mid x_1)p_1(x_1)\dd x_1
\right).
\]
We now identify the vector field inside the divergence. By Bayes' rule,
\[
\int p_t(x\mid x_1)u_t(x\mid x_1)p_1(x_1)\dd x_1
=
p_t(x)\int u_t(x\mid x_1)p(x_1\mid X_t=x)\dd x_1
=
p_t(x)\,u_t^*(x).
\]
Substituting this identity into the previous display yields
\[
\partial_t p_t(x)
=
-\diver\bigl(p_t(x)u_t^*(x)\bigr),
\]
which is exactly \eqref{eq:section8-marginal-gen-continuity}. This is the continuity equation associated with the ODE \eqref{eq:section8-marginal-gen-ode}.
\end{proof}

At this point there is no reverse ODE: \eqref{eq:section8-marginal-gen-ode} itself already runs from noise to data, so it is the sampling dynamics.

\paragraph{Marginal flow matching.}
If the marginal target velocity $u_t^*(x)$ were directly available, one could train a generative flow model $u_\theta(x,t)$ with
\begin{equation}
\mathcal L_{\mathrm{FM}}(\theta)
:=
\frac{1}{2}\int_0^1
\E_{X_t\sim p_t}
\Bigl[
\|u_\theta(X_t,t)-u_t^*(X_t)\|_2^2
\Bigr]\dd t.
\label{eq:section8-generative-fm}
\end{equation}

\paragraph{Conditional flow matching.}
As emphasized in \cite{lipman2023flowmatching}, the marginal velocity is typically intractable. The conditional-flow-matching surrogate is
\begin{equation}
\begin{aligned}
\mathcal L_{\mathrm{CFM}}(\theta)
& :=
\frac{1}{2}\int_0^1
\E_{X_1,\;X_t\sim p_t(\cdot\mid X_1)}
\Bigl[
\|u_\theta(X_t,t)-u_t(X_t\mid X_1)\|_2^2
\Bigr]\dd t.
\end{aligned}
\label{eq:section8-generative-cfm}
\end{equation}

\begin{theorem}[Conditional flow matching is a proxy for marginal flow matching]
The objectives \eqref{eq:section8-generative-fm} and \eqref{eq:section8-generative-cfm} differ only by a constant independent of $\theta$. Equivalently,
\[
\mathcal L_{\mathrm{CFM}}(\theta)
=
\mathcal L_{\mathrm{FM}}(\theta)+C.
\]
\end{theorem}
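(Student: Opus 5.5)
The plan is to reduce the statement to the same orthogonality (bias--variance) identity from Appendix~\ref{app:orthogonality} that was used for the noise-prediction loss. Fix $t\in[0,1]$ and work with the joint law of $(X_1,X_t)$, where $X_1\sim p_1$ and $X_t\mid X_1\sim p_t(\cdot\mid X_1)$. By \eqref{eq:section8-marginal-gen-density}, the marginal of $X_t$ under this joint law is exactly $p_t$. Therefore the outer expectation in \eqref{eq:section8-generative-cfm} is an expectation over this joint law, while the one in \eqref{eq:section8-generative-fm} is an expectation over the $X_t$-marginal of the same law. It is enough to prove the identity pointwise in $t$, with a $t$-dependent constant $C_t$ independent of $\theta$, and then integrate over $t$ with weight $\tfrac12$.

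\textbf{Decomposition at fixed $t$.} Set $Z=X_t$, $V=u_t(X_t\mid X_1)$ and $a(Z)=u_\theta(X_t,t)$. By definition \eqref{eq:section8-marginal-gen-velocity}, $\E[V\mid Z]=u_t^*(X_t)$. Expand
\[
\|a(Z)-V\|_2^2=\|a(Z)-\E[V\mid Z]\|_2^2+2\bigl(a(Z)-\E[V\mid Z]\bigr)^\top\bigl(\E[V\mid Z]-V\bigr)+\|\E[V\mid Z]-V\|_2^2 .
\]
By the tower property, the cross term has zero expectation: conditionally on $Z$, the first factor is fixed and $\E[\E[V\mid Z]-V\mid Z]=0$. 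This is exactly the orthogonality identity of Appendix~\ref{app:orthogonality}, applied with $\varepsilon$ replaced by $V$. Taking expectations gives
\[
\E\|u_\theta(X_t,t)-u_t(X_t\mid X_1)\|_2^2=\E_{X_t\sim p_t}\|u_\theta(X_t,t)-u_t^*(X_t)\|_2^2+\E\|u_t(X_t\mid X_1)-u_t^*(X_t)\|_2^2 .
\]
The last term involves only the data distribution and the chosen schedule, so it does not depend on $\theta$; call it $C_t$.

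\textbf{Integration and the main obstacle.} Multiplying by $\tfrac12$ and integrating over $t\in[0,1]$ yields $\mathcal L_{\mathrm{CFM}}(\theta)=\mathcal L_{\mathrm{FM}}(\theta)+C$ with $C=\tfrac12\int_0^1 C_t\,\dd t$. The main obstacle is integrability rather than algebra. The cross-term argument requires $u_\theta(X_t,t)$ and $u_t(X_t\mid X_1)$ to be square integrable, and we need $C<\infty$. Since $u_t(x\mid x_1)$ is affine in $(x,x_1)$, a finite second moment of $p_1$ gives square integrability of $V$ for $t\in(0,1)$. However, the coefficient $\dot\sigma_t/\sigma_t$ blows up as $t\to1$, where $\sigma_1=0$. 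I would handle this by proving the identity on $[0,1-\delta]$, or by stating finiteness of $C$ as a standing assumption, which is the tutorial-level convention used elsewhere in the paper. If desired, one can also check that at the endpoint $u_t(X_t\mid X_1)-u_t^*(X_t)$ remains well behaved.
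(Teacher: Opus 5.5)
Your proof is correct and follows essentially the same route as the paper. Both fix $t$, set $Z=X_t$, $V=u_t(X_t\mid X_1)$, $a(Z)=u_\theta(X_t,t)$, use $\E[V\mid Z]=u_t^*(Z)$ with the orthogonality identity of Appendix~\ref{app:orthogonality}, and integrate over $t$. Your endpoint worry is more cautious than needed: along the conditional path $u_t(X_t\mid X_1)=\dot\alpha_t X_1+\dot\sigma_t\varepsilon$, so the $1/\sigma_t$ factor cancels, and $C_t\le \dot\alpha_t^2\E\|X_1\|_2^2+d\,\dot\sigma_t^2$ stays bounded whenever $p_1$ has a finite second moment and $\dot\alpha_t,\dot\sigma_t$ are bounded. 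You therefore do not need to restrict to $[0,1-\delta]$.
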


\begin{proof}
Fix $t$ and define
\[
Z:=X_t,
\qquad
\eta:=u_t(X_t\mid X_1),
\qquad
a(Z):=u_\theta(X_t,t).
\]
By \eqref{eq:section8-marginal-gen-velocity},
\[
\E[\eta\mid Z]=u_t^*(Z).
\]
Applying the orthogonality identity from Appendix~\ref{app:orthogonality} gives
\[
\E\|a(Z)-\eta\|_2^2
=
\E\|a(Z)-\E[\eta\mid Z]\|_2^2
+ \E\|\eta-\E[\eta\mid Z]\|_2^2.
\]
Hence
\[
\E\|u_\theta(X_t,t)-u_t(X_t\mid X_1)\|_2^2
=
\E\|u_\theta(X_t,t)-u_t^*(X_t)\|_2^2
+ C_t,
\]
where $C_t$ does not depend on $\theta$. Integrating over $t$ yields the result.
\end{proof}

\paragraph{Learned generative ODE.}
After training, the generative ODE is simply
\begin{equation}
\frac{\dd X_t}{\dd t}
=
u_\theta(X_t,t),
\qquad
X_0\sim p_0.
\label{eq:section8-learned-generative-ode}
\end{equation}
This is the exact forward generative viewpoint of \cite{lipman2023flowmatching}: one directly learns a noise-to-data ODE, and the forward ODE itself performs sampling.

\subsection{Comparison with \texorpdfstring{\emph{Score-Based Generative Modeling through Stochastic Differential Equations}}{Score-Based Generative Modeling through Stochastic Differential Equations}}

We keep the same generative-time notation and the same local schedules $\alpha_t,\sigma_t$ as in Section~8.2: $X_0\sim p_0$ is noise, $X_1\sim p_1$ is data, and the density path runs in forward time $t:0\to 1$. For the SDE formulation, define the local generative-time coefficients
\begin{equation}
f_t:=\frac{\dot{\alpha}_t}{\alpha_t},
\qquad
g_t^2:=-\frac{\dd}{\dd t}\sigma_t^2 + 2\frac{\dot{\alpha}_t}{\alpha_t}\sigma_t^2,
\label{eq:section8-generative-fg}
\end{equation}
and assume $g_t^2\ge 0$. This is the generative-time analogue of \eqref{eq:def-f-g}. The conceptual difference from the main body is again the same: Sections~2--5 start from a noising SDE and derive reverse-time sampling dynamics, whereas the score-based SDE viewpoint below can be written directly as a forward noise-to-data generative SDE.

\paragraph{Conditional generative Gaussian path.}
Fix a target data point $x_1$. As in \eqref{eq:section8-conditional-gen-kernel}, consider
\begin{equation}
p_t(x\mid x_1)
:=
\N(x\mid \alpha_t x_1,\sigma_t^2I).
\label{eq:section8-conditional-gen-kernel-sde}
\end{equation}
Its conditional score is
\begin{equation}
\grad\log p_t(x\mid x_1)
=
-\frac{x-\alpha_t x_1}{\sigma_t^2}.
\label{eq:section8-conditional-gen-score}
\end{equation}

\paragraph{Conditional generative SDE.}
\begin{theorem}[Conditional generative SDE]
For every fixed $x_1$, the conditional path \eqref{eq:section8-conditional-gen-kernel-sde} is generated by the SDE
\begin{equation}
\dd X_t
=
\Bigl(
f_tX_t+g_t^2\grad\log p_t(X_t\mid x_1)
\Bigr)\dd t
+g_t\dd W_t,
\label{eq:section8-conditional-gen-sde}
\end{equation}
with initial law $X_0\sim p_0(\cdot\mid x_1)$.
\end{theorem}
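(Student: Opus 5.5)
We will verify the Fokker--Planck equation of \eqref{eq:section8-conditional-gen-sde} directly, by reducing it to the conditional continuity equation \eqref{eq:section8-conditional-gen-continuity} already proved in the Conditional generative ODE theorem. Fix $x_1$ and write $p_t=p_t(\cdot\mid x_1)$. The Fokker--Planck equation for \eqref{eq:section8-conditional-gen-sde} reads
\[
\partial_t p_t(x)
=
-\diver\Bigl(\bigl[f_tx+g_t^2\grad\log p_t(x)\bigr]p_t(x)\Bigr)
+\frac12 g_t^2\lap p_t(x).
\]
The plan is to show that its right-hand side equals $-\diver\bigl(p_t\,u_t(\cdot\mid x_1)\bigr)$, and then invoke \eqref{eq:section8-conditional-gen-continuity}.

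\textbf{Step 1: combine the diffusion and score terms.} Use the identity $p_t\grad\log p_t=\grad p_t$, which gives $\lap p_t=\diver(p_t\grad\log p_t)$. This is the same manipulation as in the proof of the Reverse ODE in $\tau$. The right-hand side then becomes
\[
-\diver\Bigl(p_t\bigl[f_tx+\tfrac12 g_t^2\grad\log p_t\bigr]\Bigr).
\]

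\textbf{Step 2: match the velocity.} This is the analogue, in generative time, of the Lemma (score form of the conditional forward velocity). We must show
\[
f_tx+\tfrac12 g_t^2\grad\log p_t(x\mid x_1)=u_t(x\mid x_1).
\]
From \eqref{eq:section8-generative-fg},
\[
g_t^2=-2\sigma_t\dot\sigma_t+2f_t\sigma_t^2,
\qquad\text{hence}\qquad
\frac{g_t^2}{2\sigma_t^2}=f_t-\frac{\dot\sigma_t}{\sigma_t}.
\]
By \eqref{eq:section8-conditional-gen-score},
\[
\tfrac12 g_t^2\grad\log p_t(x\mid x_1)
=-\Bigl(f_t-\frac{\dot\sigma_t}{\sigma_t}\Bigr)(x-\alpha_tx_1).
\]
Adding $f_tx$ and using $f_t\alpha_t=\dot\alpha_t$ gives
\[
\frac{\dot\sigma_t}{\sigma_t}x+\Bigl(\dot\alpha_t-\frac{\dot\sigma_t}{\sigma_t}\alpha_t\Bigr)x_1,
\]
which is exactly \eqref{eq:section8-conditional-gen-velocity}. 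The sign flip in $g_t^2$ relative to \eqref{eq:def-f-g} is precisely what turns the main-body relation $u=f x-\tfrac12 g^2\grad\log p$ into $u=fx+\tfrac12 g^2\grad\log p$. This in turn explains the $+g_t^2$ score term in the generative SDE.

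\textbf{Step 3: conclude.} Steps 1 and 2 show that the Fokker--Planck right-hand side equals $-\diver(p_t u_t(\cdot\mid x_1))$, and this equals $\partial_t p_t$ by \eqref{eq:section8-conditional-gen-continuity}. The initial law is $p_0(\cdot\mid x_1)$ by assumption. Hence the density path of the SDE coincides with \eqref{eq:section8-conditional-gen-kernel-sde}, interpreting ``generated by'' as in Theorem~\ref{thm:conditional-forward-sde}, namely that the path satisfies the Fokker--Planck equation.

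\textbf{Main obstacle.} The algebra is routine; the delicate point is the endpoint $t=1$. There $\sigma_1=0$, so $\dot\sigma_t/\sigma_t$, the score and possibly $g_t^2$ blow up. I would therefore state the verification for $t\in[0,1)$ with $\sigma_t>0$ and treat $t=1$ as a limit, in which the law collapses to $\delta_{x_1}$. Similarly, $\alpha_0=0$ makes $f_t$ singular at $t=0$. However, $f_t\alpha_t=\dot\alpha_t$ and the combination $g_t^2/(2\sigma_t^2)=f_t-\dot\sigma_t/\sigma_t$ are only used for $t\in(0,1)$, so the identity holds on the open interval and extends by continuity.
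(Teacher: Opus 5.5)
Your proof is correct, but it organizes the second half differently from the paper. Both arguments start the same way: use $p_t\grad\log p_t=\grad p_t$ to collapse the Fokker--Planck right-hand side to $-\diver(f_tx\,p_t)-\tfrac12 g_t^2\lap p_t$.

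The paper then proceeds by brute force. It recomputes $\partial_t p_t$, $\lap p_t$ and $\diver(f_tx\,p_t)$ for the explicit Gaussian, then matches terms in $d$, $r_t(x)^\top x_1$ and $\|r_t(x)\|_2^2$ using $g_t^2/(2\sigma_t^2)=f_t-\dot\sigma_t/\sigma_t$. In effect it repeats the computation already done for the conditional generative ODE.

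You instead read the collapsed expression as a continuity operator with velocity $f_tx+\tfrac12 g_t^2\grad\log p_t(x\mid x_1)$. You identify this velocity with $u_t(x\mid x_1)$, which is the generative-time analogue of the score-form lemma for the conditional forward velocity. You then conclude from \eqref{eq:section8-conditional-gen-continuity}.

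Your route is shorter and more structural. It shows that the conditional SDE and the conditional flow-matching ODE share one density path, i.e.\ they form a probability-flow pair. It also explains the $+g_t^2$ score coefficient as a consequence of the sign flip in the generative-time definition of $g_t^2$. The paper's route is self-contained: it does not depend on the ODE theorem, and it mirrors the proof of Theorem~\ref{thm:conditional-forward-sde}.

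Your endpoint caveat is well taken and applies equally to the paper's argument. Since $\alpha_0=0$, $f_t=\dot\alpha_t/\alpha_t$ is not integrable near $t=0$, so $f_t$, and with it $g_t^2$, is unbounded there. Since $\sigma_1=0$, the conditional score blows up at $t=1$. So both verifications genuinely hold only on the open interval $(0,1)$.
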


\begin{proof}
Write
\[
r_t(x):=x-\alpha_t x_1.
\]
Since
\[
p_t(x\mid x_1)
=
\frac{1}{(2\pi \sigma_t^2)^{d/2}}
\exp\!\left(-\frac{\|r_t(x)\|_2^2}{2\sigma_t^2}\right),
\]
\[
\log p_t(x\mid x_1)
=
-\frac{d}{2}\log(2\pi \sigma_t^2)-\frac{\|r_t(x)\|_2^2}{2\sigma_t^2}.
\]
Because
\[
\partial_t r_t(x)=-\dot{\alpha}_t x_1,
\qquad
\partial_t\|r_t(x)\|_2^2=-2\dot{\alpha}_t r_t(x)^\top x_1,
\]
we obtain
\begin{align*}
\partial_t \log p_t(x\mid x_1)
&=
-d\frac{\dot{\sigma}_t}{\sigma_t}
-\partial_t\!\left(\frac{\|r_t(x)\|_2^2}{2\sigma_t^2}\right)\\
&=
-d\frac{\dot{\sigma}_t}{\sigma_t}
+\frac{\dot{\alpha}_t}{\sigma_t^2}r_t(x)^\top x_1
+\frac{\dot{\sigma}_t}{\sigma_t^3}\|r_t(x)\|_2^2.
\end{align*}
Hence
\begin{equation}
\partial_t p_t(x\mid x_1)
=
p_t(x\mid x_1)\left[
-d\frac{\dot{\sigma}_t}{\sigma_t}
+ \frac{\dot{\alpha}_t}{\sigma_t^2}r_t(x)^\top x_1
+ \frac{\dot{\sigma}_t}{\sigma_t^3}\|r_t(x)\|_2^2
\right].
\label{eq:section8-conditional-gen-sde-lhs}
\end{equation}
Next,
\[
\grad \log p_t(x\mid x_1)
=
-\frac{r_t(x)}{\sigma_t^2},
\qquad
\grad p_t(x\mid x_1)
=
-p_t(x\mid x_1)\frac{r_t(x)}{\sigma_t^2}.
\]
Differentiating once more yields
\[
\lap p_t(x\mid x_1)
=
p_t(x\mid x_1)\left(
\frac{\|r_t(x)\|_2^2}{\sigma_t^4}-\frac{d}{\sigma_t^2}
\right),
\]
and
\[
-\diver\bigl(f_t x\,p_t(x\mid x_1)\bigr)
=
p_t(x\mid x_1)\left[
-f_t d + f_t\frac{x^\top r_t(x)}{\sigma_t^2}
\right].
\]
Because $p_t(x\mid x_1)\grad\log p_t(x\mid x_1)=\grad p_t(x\mid x_1)$, the Fokker--Planck equation of \eqref{eq:section8-conditional-gen-sde} becomes
\[
\partial_t p_t(x\mid x_1)
=
-\diver\bigl(f_t x\,p_t(x\mid x_1)\bigr)-\frac{1}{2}g_t^2\lap p_t(x\mid x_1).
\]
Substituting the previous expressions, the right-hand side of Fokker--Planck equation becomes
$$p_t(x\mid x_1)\Biggl[
-f_t d + f_t\frac{x^\top r_t(x)}{\sigma_t^2}
- \frac{g_t^2}{2}\left(
\frac{\|r_t(x)\|_2^2}{\sigma_t^4}-\frac{d}{\sigma_t^2}
\right)
\Biggr].$$

Now use $x=r_t(x)+\alpha_t x_1$ together with
\[
\frac{g_t^2}{2\sigma_t^2}
=
-\frac{\dot{\sigma}_t}{\sigma_t}+f_t,
\]
which follows from \eqref{eq:section8-generative-fg}. Then the right-hand side simplifies to
\[
p_t(x\mid x_1)\left[
-d\frac{\dot{\sigma}_t}{\sigma_t}
+ \frac{\dot{\alpha}_t}{\sigma_t^2}r_t(x)^\top x_1
+ \frac{\dot{\sigma}_t}{\sigma_t^3}\|r_t(x)\|_2^2
\right],
\]
which agrees with \eqref{eq:section8-conditional-gen-sde-lhs}. Therefore the density path of \eqref{eq:section8-conditional-gen-sde} is precisely \eqref{eq:section8-conditional-gen-kernel-sde}.
\end{proof}

\paragraph{Marginal generative SDE.}
Marginalizing over $x_1\sim p_1$ gives
\[
p_t(x)
=
\int_{\R^d} p_t(x\mid x_1)p_1(x_1)\dd x_1.
\]
Define the marginal score by
\begin{equation}
s_t^*(x):=\grad\log p_t(x).
\label{eq:section8-marginal-gen-score}
\end{equation}

\begin{proposition}[Marginal generative SDE]
The marginal density path $p_t$ is generated by
\begin{equation}
\dd X_t
=
\Bigl(
f_tX_t+g_t^2\grad\log p_t(X_t)
\Bigr)\dd t
+g_t\dd W_t,
\qquad
X_0\sim p_0.
\label{eq:section8-marginal-gen-sde}
\end{equation}
\end{proposition}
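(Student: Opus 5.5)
The plan is to reduce the statement to a linear PDE that the marginal inherits from the conditional paths, and then rewrite that PDE as the Fokker--Planck equation of \eqref{eq:section8-marginal-gen-sde}. The key point is that the proof of the conditional generative SDE theorem did more than match a Fokker--Planck equation with a score-dependent drift. Using $p_t(x\mid x_1)\grad\log p_t(x\mid x_1)=\grad p_t(x\mid x_1)$, it reduced that equation to
\[
\partial_t p_t(x\mid x_1)
=
-\diver\bigl(f_t x\,p_t(x\mid x_1)\bigr)-\frac{1}{2}g_t^2\lap p_t(x\mid x_1),
\]
and then verified this identity exactly via \eqref{eq:section8-conditional-gen-sde-lhs}. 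This equation is linear in the density, and its coefficients $f_t x$ and $g_t^2$ do not depend on $x_1$. That makes it suitable for averaging.

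First, I will integrate this identity against $p_1(x_1)\dd x_1$. I will interchange $\partial_t$, $\diver$ and $\lap$ with the $x_1$-integral, exactly as in the proofs of the marginalized forward SDE and of the marginal generative ODE proposition. This gives
\[
\partial_t p_t(x)=-\diver\bigl(f_t x\,p_t(x)\bigr)-\frac12 g_t^2\lap p_t(x).
\]

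Second, I will undo the earlier trick at the marginal level. Since $p_t\grad\log p_t=\grad p_t$, we have $\lap p_t=\diver(p_t\grad\log p_t)$. Hence
\[
-\frac12 g_t^2\lap p_t=-\diver\bigl(g_t^2\grad\log p_t\,p_t\bigr)+\frac12 g_t^2\lap p_t,
\]
and therefore
\[
\partial_t p_t=-\diver\Bigl(\bigl(f_t x+g_t^2\grad\log p_t\bigr)p_t\Bigr)+\frac12 g_t^2\lap p_t .
\]
This is precisely the Fokker--Planck equation of \eqref{eq:section8-marginal-gen-sde}. The drift now involves the marginal score $s_t^*$ of \eqref{eq:section8-marginal-gen-score}, not the conditional one.

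Third, I will check the initial condition. The law of $X_0$ is $p_0(x)=\int p_0(x\mid x_1)p_1(x_1)\dd x_1$, which is consistent with the conditional initial laws $p_0(\cdot\mid x_1)$. So the SDE started from $p_0$ has density path $p_t$, provided the Fokker--Planck equation determines the path uniquely, which the paper assumes throughout.

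The main subtlety is conceptual rather than computational. One should not average the conditional drift $f_tx+g_t^2\grad\log p_t(x\mid x_1)$ directly and treat the result as a drift. The correct route is to pass through the score-free linear PDE above. If one prefers the drift-averaging route instead, it also works: by Fisher's identity (Appendix~\ref{app:fisher}), $\E[\grad\log p_t(x\mid X_1)\mid X_t=x]=\grad\log p_t(x)$, which yields the same drift. Either way, the only analytic step needing care is justifying the differentiation under the integral sign, and I will treat it with the same regularity assumptions used earlier in the paper.
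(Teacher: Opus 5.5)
Your proof is correct and is essentially the paper's argument. The paper integrates the score-form conditional Fokker--Planck equation against $p_1(x_1)\dd x_1$ and then uses $\int \grad p_t(x\mid x_1)p_1(x_1)\dd x_1=\grad p_t(x)=p_t(x)\grad\log p_t(x)$. You instead average the equivalent score-free form and reinsert the marginal score via $\lap p_t=\diver(p_t\grad\log p_t)$, which is the same linearity-plus-$p\grad\log p=\grad p$ argument in a different order.

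Your caveat against averaging the conditional drift is unnecessary. Because $g_t$ does not depend on $x_1$, the posterior-weighted average of the conditional drift is exactly the marginal drift; this is the paper's lemma on averaging Fokker--Planck equations, and that flux-averaging is precisely what the paper's proof does.
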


\begin{proof}
For every fixed $x_1$, the conditional Fokker--Planck equation of \eqref{eq:section8-conditional-gen-sde} is
\[
\partial_t p_t(x\mid x_1)
=
-\diver\Bigl(
\bigl[f_t x+g_t^2\grad\log p_t(x\mid x_1)\bigr]p_t(x\mid x_1)
\Bigr)
+\frac{1}{2}g_t^2\lap p_t(x\mid x_1).
\]
Integrating both sides against $p_1(x_1)\dd x_1$ gives
\begin{align*}
\partial_t p_t(x)
&=
-\int
\diver\Bigl(
\bigl[f_t x+g_t^2\grad\log p_t(x\mid x_1)\bigr]p_t(x\mid x_1)
\Bigr)p_1(x_1)\dd x_1\\
&\qquad
+\frac{1}{2}g_t^2\int \lap p_t(x\mid x_1)p_1(x_1)\dd x_1.
\end{align*}
Since the differential operators act on $x$, we may move them outside the integral:
\begin{align*}
\partial_t p_t(x)
&=
-\diver\left(
\int
\bigl[f_t x+g_t^2\grad\log p_t(x\mid x_1)\bigr]
p_t(x\mid x_1)p_1(x_1)\dd x_1
\right)\\
&\qquad
+\frac{1}{2}g_t^2\lap\left(
\int p_t(x\mid x_1)p_1(x_1)\dd x_1
\right).
\end{align*}
Using \eqref{eq:section8-marginal-gen-density}, this becomes
\begin{align*}
\partial_t p_t(x)
&=
-\diver\left(
f_t x\,p_t(x)
+g_t^2\int
p_t(x\mid x_1)\grad\log p_t(x\mid x_1)p_1(x_1)\dd x_1
\right)\\
&\qquad
+\frac{1}{2}g_t^2\lap p_t(x).
\end{align*}
Now
\[
p_t(x\mid x_1)\grad\log p_t(x\mid x_1)
=
\grad p_t(x\mid x_1),
\]
so
\[
\int
p_t(x\mid x_1)\grad\log p_t(x\mid x_1)p_1(x_1)\dd x_1
=
\int \grad p_t(x\mid x_1)p_1(x_1)\dd x_1
=
\grad p_t(x).
\]
Equivalently, by Fisher's identity,
\[
\grad p_t(x)
=
p_t(x)\grad\log p_t(x).
\]
Substituting this back gives
\[
\partial_t p_t(x)
=
-\diver\Bigl(
\bigl[f_t x+g_t^2\grad\log p_t(x)\bigr]p_t(x)
\Bigr)
+\frac{1}{2}g_t^2\lap p_t(x).
\]
This is exactly the Fokker--Planck equation of \eqref{eq:section8-marginal-gen-sde}, so the SDE \eqref{eq:section8-marginal-gen-sde} generates the marginal density path.
\end{proof}

Again, there is no separate reverse SDE: \eqref{eq:section8-marginal-gen-sde} itself already runs from noise to data.

\paragraph{Marginal score matching.}
If the marginal score $s_t^*(x)=\grad\log p_t(x)$ were directly available, one could train a score model $s_\theta(x,t)$ by
\begin{equation}
\mathcal L_{\mathrm{SM}}(\theta)
:=
\frac{1}{2}\int_0^1 \lambda(t)\,
\E_{X_t\sim p_t}\left[
\|s_\theta(X_t,t)-\grad\log p_t(X_t)\|_2^2
\right]\dd t.
\label{eq:section8-generative-sm}
\end{equation}

\paragraph{Conditional score matching.}
Since the marginal score is typically intractable, one may instead use the conditional objective
\begin{equation}
\mathcal L_{\mathrm{CSM}}(\theta)
:=
\frac{1}{2}\int_0^1 \lambda(t)\,
\E_{X_1,\;X_t\sim p_t(\cdot\mid X_1)}\left[
\|s_\theta(X_t,t)-\grad\log p_t(X_t\mid X_1)\|_2^2
\right]\dd t.
\label{eq:section8-generative-csm}
\end{equation}
This is the conditional score-matching counterpart of conditional flow matching.

\begin{theorem}[Conditional score matching is a proxy for marginal score matching]
The objectives \eqref{eq:section8-generative-sm} and \eqref{eq:section8-generative-csm} differ only by a constant independent of $\theta$. Equivalently,
\[
\mathcal L_{\mathrm{CSM}}(\theta)
=
\mathcal L_{\mathrm{SM}}(\theta)+C.
\]
\end{theorem}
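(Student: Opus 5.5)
The plan mirrors the proof that conditional flow matching is a proxy for marginal flow matching: fix $t$, apply the orthogonality identity of Appendix~\ref{app:orthogonality} to the integrand, then integrate against $\lambda(t)\ge 0$. For fixed $t$, set
\[
Z:=X_t,\qquad \eta:=\grad\log p_t(X_t\mid X_1),\qquad a(Z):=s_\theta(X_t,t),
\]
where $(X_1,X_t)$ has joint law $p_1(x_1)p_t(x\mid x_1)$. By \eqref{eq:section8-marginal-gen-density}, the marginal law of $X_t$ is then exactly $p_t$. So the expectation $\E_{X_t\sim p_t}$ in \eqref{eq:section8-generative-sm} and the joint expectation in \eqref{eq:section8-generative-csm} are taken over the same random variable $Z$.

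The key step is to identify $\E[\eta\mid Z]$ with the marginal score, that is,
\[
\E\bigl[\grad\log p_t(x\mid X_1)\,\big|\,X_t=x\bigr]=\grad\log p_t(x).
\]
This is Fisher's identity (Appendix~\ref{app:fisher}), now with the conditioning variable $X_1$ in place of $X_0$. I would either invoke it directly or re-derive it in one line, as in the proof of the marginal generative SDE. By Bayes, the conditional expectation equals
\[
\frac{1}{p_t(x)}\int p_t(x\mid x_1)\grad\log p_t(x\mid x_1)\,p_1(x_1)\dd x_1 .
\]
Since $p_t(x\mid x_1)\grad\log p_t(x\mid x_1)=\grad p_t(x\mid x_1)$, and the gradient can be moved outside the integral, this becomes $\grad p_t(x)/p_t(x)=\grad\log p_t(x)$.

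With $\E[\eta\mid Z]=s_t^*(Z)$, the orthogonality identity gives
\[
\E\|s_\theta(X_t,t)-\grad\log p_t(X_t\mid X_1)\|_2^2
=\E\|s_\theta(X_t,t)-\grad\log p_t(X_t)\|_2^2+C_t,
\]
where $C_t=\E\|\eta-\E[\eta\mid Z]\|_2^2$ does not depend on $\theta$. Multiplying by $\tfrac12\lambda(t)$ and integrating over $[0,1]$ yields $\mathcal L_{\mathrm{CSM}}(\theta)=\mathcal L_{\mathrm{SM}}(\theta)+C$, with $C=\tfrac12\int_0^1\lambda(t)C_t\dd t$.

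The main obstacle is not algebraic but one of integrability. We need $C_t<\infty$ so that the decomposition is not of the form $\infty=\infty$, and we need $\int\lambda(t)C_t\dd t<\infty$. Near $t=1$, where $\sigma_t\to 0$, the conditional score has variance of order $d/\sigma_t^2$, which blows up. I would state as standing assumptions that $\lambda$ makes $C$ finite (for example $\lambda(t)\propto\sigma_t^2$, which matches the noise-prediction weighting), that $s_\theta$ has finite second moments, and that differentiation under the integral sign is justified. With these in place the argument is identical to the flow-matching case.
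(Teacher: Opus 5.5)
Your proposal is correct and follows the paper's proof essentially verbatim. Both fix $t$, identify $\E[\eta\mid Z]$ with $\grad\log p_t(Z)$ via Fisher's identity, apply the orthogonality identity of Appendix~\ref{app:orthogonality}, and integrate against $\tfrac12\lambda(t)$; your integrability caveat (the conditional-score term can blow up like $d/\sigma_t^2$ as $\sigma_t\to 0$ near $t=1$, so e.g.\ $\lambda(t)\propto\sigma_t^2$ is needed for $C<\infty$) is a valid point that the paper leaves implicit.
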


\begin{proof}
Fix $t$ and define
\[
Z:=X_t,
\qquad
\eta:=\grad\log p_t(X_t\mid X_1),
\qquad
a(Z):=s_\theta(X_t,t).
\]
By Fisher's identity,
\[
\E[\eta\mid Z]
=
\grad\log p_t(Z).
\]
Applying the orthogonality identity from Appendix~\ref{app:orthogonality} gives
\[
\E\|a(Z)-\eta\|_2^2
=
\E\|a(Z)-\E[\eta\mid Z]\|_2^2
+ \E\|\eta-\E[\eta\mid Z]\|_2^2.
\]
Hence
\[
\E\|s_\theta(X_t,t)-\grad\log p_t(X_t\mid X_1)\|_2^2
=
\E\|s_\theta(X_t,t)-\grad\log p_t(X_t)\|_2^2
+ C_t,
\]
where $C_t$ does not depend on $\theta$. Integrating over $t$ yields the result.
\end{proof}

\paragraph{Conditional denoising score matching.}
Using \eqref{eq:section8-conditional-gen-score} and the conditional reparameterization
\[
X_t=\alpha_t X_1+\sigma_t\varepsilon,
\qquad
\varepsilon\sim\N(0,I),
\]
we have
\[
\grad\log p_t(X_t\mid X_1)
=
-\frac{\varepsilon}{\sigma_t}.
\]
Therefore the conditional score-matching loss \eqref{eq:section8-generative-csm} can be reparameterized into the denoising form. Define
\[
\epsilon_\theta(x,t):=-\sigma_t s_\theta(x,t).
\]
Then \eqref{eq:section8-generative-csm} is equivalent, after absorbing the factor $\sigma_t^{-2}$ into the time weight, to
\begin{equation}
\mathcal L_{\mathrm{DSM}}(\theta)
:=
\frac{1}{2}\int_0^1 \omega(t)\,
\E_{X_1,\varepsilon}\left[
\|\epsilon_\theta(\alpha_t X_1+\sigma_t\varepsilon,t)-\varepsilon\|_2^2
\right]\dd t.
\label{eq:section8-generative-dsm}
\end{equation}
This is precisely the denoising-score-matching route that leads to the score-based SDE framework.

\paragraph{Learned generative SDE.}
Once the score has been learned, the forward generative SDE is
\begin{equation}
\dd X_t
=
\Bigl(
f_tX_t+g_t^2 s_\theta(X_t,t)
\Bigr)\dd t
+g_t\dd W_t,
\qquad
X_0\sim p_0,
\label{eq:section8-learned-generative-sde-score}
\end{equation}
or, equivalently, in noise-prediction form,
\begin{equation}
\dd X_t
=
\left(
f_tX_t
-\frac{g_t^2}{\sigma_t}\epsilon_\theta(X_t,t)
\right)\dd t
+g_t\dd W_t,
\qquad
X_0\sim p_0.
\label{eq:section8-learned-generative-sde-noise}
\end{equation}
This is the exact forward generative viewpoint of \cite{song2021sde}: one directly learns a noise-to-data SDE, and the forward SDE itself performs sampling.

\newpage
\section{Diffusion Language Models in Continuous Embedding Space}
\label{sec:dlm}

Diffusion language models face a structural challenge that image diffusion models do not: language is discrete, while the reverse ODE/SDE framework developed in this tutorial is continuous. A common solution is to embed tokens into a continuous space, run diffusion on those embeddings, and only at the end project the denoised embeddings back to vocabulary items. This idea underlies Diffusion-LM \cite{li2022diffusionlm}, self-conditioned embedding diffusion \cite{strudel2022sed}, and conditional generation models such as DiffuSeq \cite{gong2023diffuseq}. In this section we follow the prompt-response formulation summarized in Figures~\ref{fig:dlm-training} and~\ref{fig:dlm-inference}: the prompt embedding remains clean and acts as a condition, while only the response embedding is noised and denoised.

\subsection{Prompt-Response Formulation}

Let the prompt and response tokens be
\[
\{p,w\}=\{p_1,\dots,p_n,w_1,\dots,w_L\},
\qquad
p_j,w_i\in\mathcal V,
\]
where $\mathcal V$ is the vocabulary. Let
\[
E\in\R^{d\times |\mathcal V|}
\]
be a \emph{frozen} embedding table, and let $e(v)\in\R^d$ denote the column of $E$ associated with token $v$. We split the clean sequence embedding into a prompt part and a response part:
\begin{equation}
c_0
:=
E\bigl[\mathrm{onehot}(p_1),\dots,\mathrm{onehot}(p_n)\bigr]
\in\R^{d\times n},
\label{eq:dlm-c0}
\end{equation}
\begin{equation}
x_0
:=
E\bigl[\mathrm{onehot}(w_1),\dots,\mathrm{onehot}(w_L)\bigr]
\in\R^{d\times L}.
\label{eq:dlm-x0}
\end{equation}
The prompt embedding $c_0$ is kept fixed during both training and inference. Diffusion is applied only to the response block:
\begin{equation}
x_t=\sqrt{\bar\alpha_t}\,x_0+\sqrt{1-\bar\alpha_t}\,\varepsilon,
\qquad
\varepsilon\sim\N(0,I),
\label{eq:dlm-forward}
\end{equation}
where $\bar\alpha_t$ decreases from $\bar\alpha_0=1$ to $\bar\alpha_1=0$. A simple schedule often used in practice is
\begin{equation}
\bar\alpha_t=1-\sqrt{t},
\qquad
t\in[0,1].
\label{eq:dlm-bar-alpha}
\end{equation}
Thus the model sees the mixed state $\{c_0,x_t\}$: the prompt side stays clean, while the response side gradually transitions from clean embeddings to Gaussian noise.

\begin{figure}
\centering
\includegraphics[width=\textwidth]{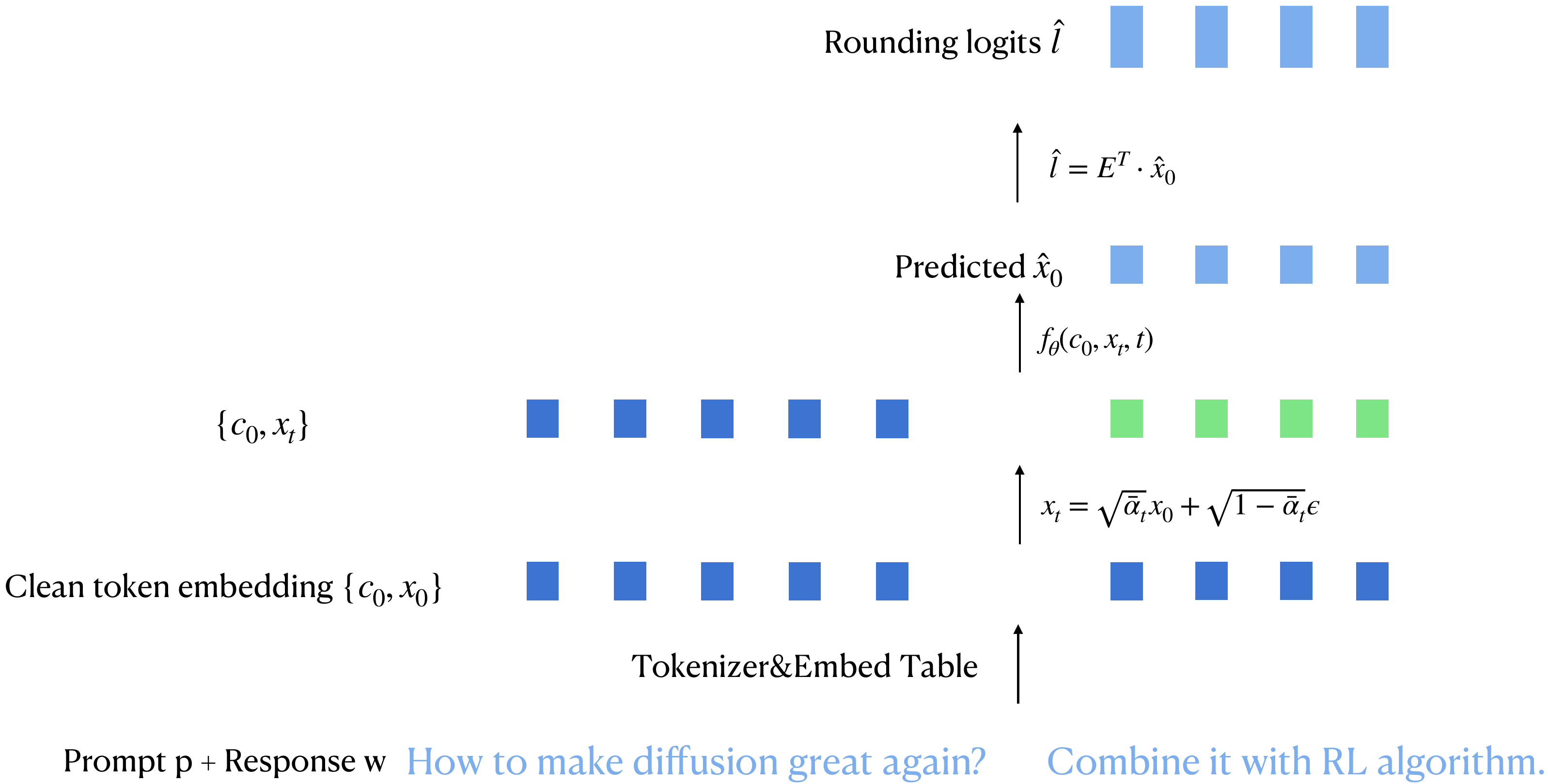}
\caption{Training pipeline for a prompt-conditioned diffusion language model in continuous embedding space. Only the response embedding $x_0$ is noised; the prompt embedding $c_0$ remains fixed.}
\label{fig:dlm-training}
\end{figure}

\subsection{Training Objective}

The denoiser predicts the clean response embedding from the noisy response and the clean prompt:
\begin{equation}
\hat x_0=f_\theta(c_0,x_t,t)\in\R^{d\times L}.
\label{eq:dlm-x0-pred}
\end{equation}
To map this continuous prediction back toward the vocabulary, one forms rounding logits
\begin{equation}
\hat l=E^\top \hat x_0\in\R^{|\mathcal V|\times L}.
\label{eq:dlm-round-logits}
\end{equation}
The training loss in this formulation has two terms. The first is the clean-embedding regression loss
\begin{equation}
\mathcal L_{x_0}(\theta)
:=
\E_{x_0,t,\varepsilon}\left[
\|\hat x_0-x_0\|_2^2
\right],
\label{eq:dlm-x0-loss}
\end{equation}
and the second is the rounding loss
\begin{equation}
\mathcal L_{\mathrm{round}}(\theta)
:=
\E_{w,t,\varepsilon}\left[
\frac{1}{L}\sum_{i=1}^L
\mathrm{CE}\bigl(\hat l_i,\mathrm{onehot}(w_i)\bigr)
\right].
\label{eq:dlm-round-loss}
\end{equation}
The total training objective is
\begin{equation}
\mathcal L_{\mathrm{total}}(\theta)
:=
\lambda_{x_0}\mathcal L_{x_0}(\theta)
+\lambda_{\mathrm{round}}\mathcal L_{\mathrm{round}}(\theta),
\label{eq:dlm-total-loss}
\end{equation}
where $\lambda_{x_0}$ and $\lambda_{\mathrm{round}}$ balance continuous denoising and discrete token recovery.

This choice is natural for language. The $x_0$ loss encourages the denoiser to land on the clean response embedding manifold, while the rounding loss makes the projected logits agree with the target tokens. In contrast to image diffusion, where continuous outputs are already valid samples, language requires this extra discrete-alignment term because the final generation must return vocabulary items rather than arbitrary vectors.

\begin{center}
\fbox{
\begin{minipage}{0.94\linewidth}
\small
\textbf{Algorithm 1: Training a Prompt-Conditioned Diffusion Language Model}
\begin{enumerate}[leftmargin=2.1em,itemsep=0.3em]
\item Sample a prompt-response pair $\{p,w\}$ from the training corpus.
\item Compute the clean prompt embedding $c_0$ by \eqref{eq:dlm-c0} and the clean response embedding $x_0$ by \eqref{eq:dlm-x0}.
\item Sample a diffusion time $t\sim \mathrm{Uniform}(0,1)$ and Gaussian noise $\varepsilon\sim\N(0,I)$.
\item Construct the noisy response embedding
\[
x_t=\sqrt{\bar\alpha_t}\,x_0+\sqrt{1-\bar\alpha_t}\,\varepsilon.
\]
\item Predict the clean response embedding
\[
\hat x_0=f_\theta(c_0,x_t,t).
\]
\item Form rounding logits
\[
\hat l=E^\top \hat x_0.
\]
\item Compute the losses \eqref{eq:dlm-x0-loss} and \eqref{eq:dlm-round-loss}, then update $\theta$ using \eqref{eq:dlm-total-loss}.
\end{enumerate}
\end{minipage}
}
\end{center}

\subsection{Inference by DDIM-Stochastic Rollout}

At inference time, the prompt remains fixed and only the response is generated. Given a prompt
\[
p=\{p_1,\dots,p_n\},
\]
one first computes the prompt embedding $c_0$. If the total sequence length is fixed to $T$, then the response length is set to
\[
L=T-n.
\]
The initial response embedding is sampled from Gaussian noise:
\begin{equation}
x_1\sim\N(0,\lambda_E^2 I),
\label{eq:dlm-init}
\end{equation}
where $\lambda_E$ is the root-mean-square scale of the embedding table. One then chooses a reverse grid
\[
1=t_1>t_2>\cdots>t_K=\mathrm{EPS},
\qquad
s_i:=t_{i+1}.
\]
At each step, the denoiser predicts a clean response embedding and then uses a DDIM-style stochastic update.

The basic per-step quantities are:
\begin{equation}
\hat x_0^{(i)}=f_\theta(c_0,x_{t_i},t_i),
\label{eq:dlm-step-x0}
\end{equation}
\begin{equation}
\epsilon_{\mathrm{pred}}^{(i)}
:=
\frac{x_{t_i}-\sqrt{\bar\alpha_{t_i}}\hat x_0^{(i)}}
{\sqrt{1-\bar\alpha_{t_i}}},
\label{eq:dlm-step-eps}
\end{equation}
\begin{equation}
\sigma_i
:=
\eta
\sqrt{\frac{1-\bar\alpha_{s_i}}{1-\bar\alpha_{t_i}}}
\sqrt{1-\frac{\bar\alpha_{t_i}}{\bar\alpha_{s_i}}},
\label{eq:dlm-step-sigma}
\end{equation}
\begin{equation}
\mu^{(i)}
:=
\sqrt{\bar\alpha_{s_i}}\,\hat x_0^{(i)}
+\sqrt{1-\bar\alpha_{s_i}-\sigma_i^2}\,\epsilon_{\mathrm{pred}}^{(i)},
\label{eq:dlm-step-mu}
\end{equation}
\begin{equation}
x_{s_i}=\mu^{(i)}+\sigma_i\xi_i,
\qquad
\xi_i\sim\N(0,I).
\label{eq:dlm-step-update}
\end{equation}
Here $\eta\ge 0$ controls the amount of randomness: $\eta=0$ gives the deterministic DDIM limit, while $\eta>0$ yields a stochastic rollout.

\begin{figure}
\centering
\includegraphics[width=\textwidth]{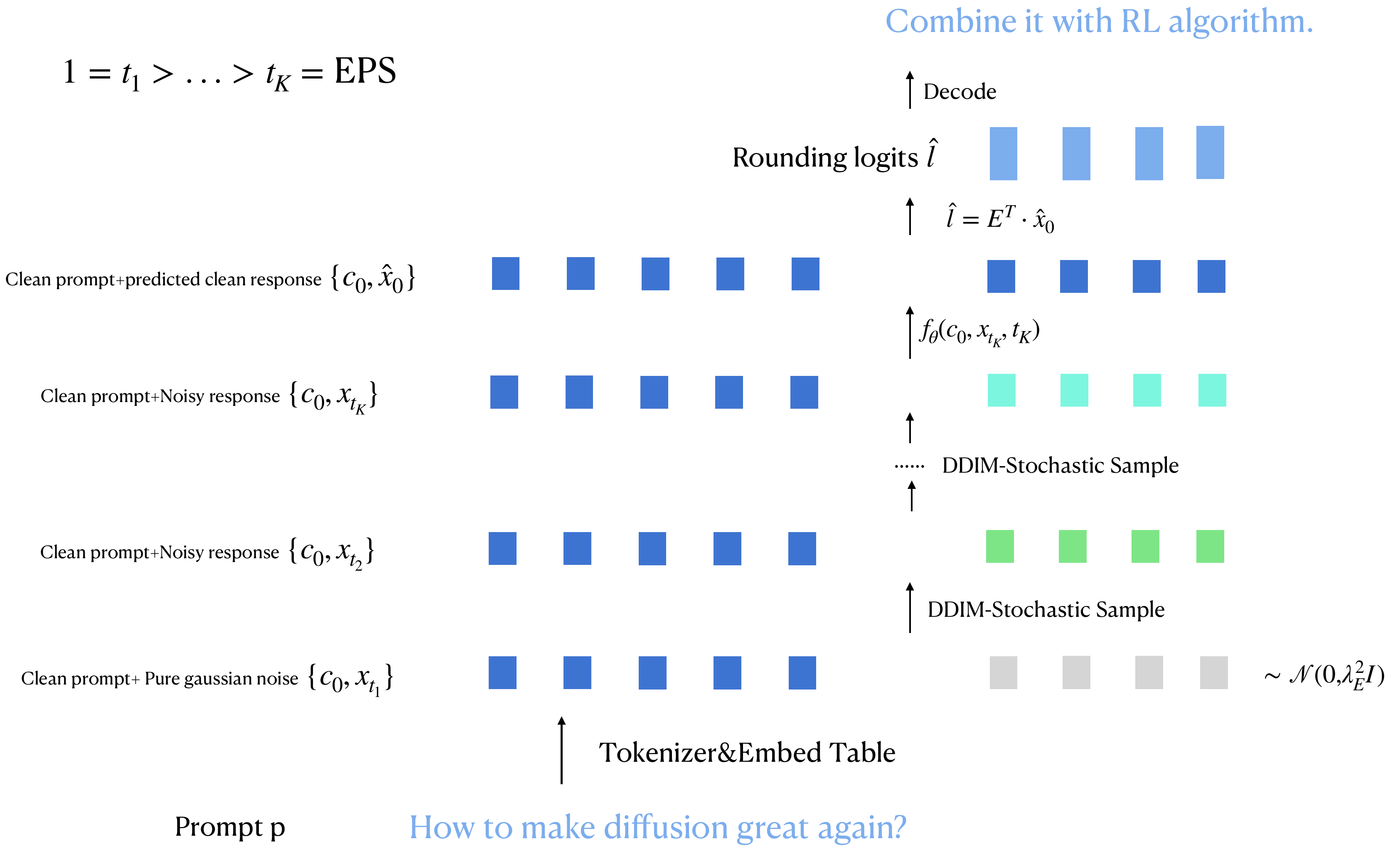}
\caption{Inference pipeline for a diffusion language model. The prompt embedding remains fixed, while the response embedding is iteratively denoised from Gaussian noise to discrete tokens.}
\label{fig:dlm-inference}
\end{figure}

\begin{center}
\fbox{
\begin{minipage}{0.94\linewidth}
\small
\textbf{Algorithm 2: DDIM-Stochastic Inference for Prompt-Conditioned Text Generation}
\begin{enumerate}[leftmargin=2.1em,itemsep=0.3em]
\item Given a prompt $p=\{p_1,\dots,p_n\}$, compute its clean embedding $c_0$ by \eqref{eq:dlm-c0}.
\item Fix a total sequence length $T$ and set the response length to $L=T-n$.
\item Sample the initial noisy response embedding $x_1\sim\N(0,\lambda_E^2I)$.
\item Choose a reverse grid $1=t_1>\cdots>t_K=\mathrm{EPS}$ and set $s_i=t_{i+1}$.
\item For $i=1,\dots,K-1$:
\begin{enumerate}[leftmargin=2em,itemsep=0.2em,label=\alph*.]
\item predict $\hat x_0^{(i)}$ using \eqref{eq:dlm-step-x0};
\item compute $\epsilon_{\mathrm{pred}}^{(i)}$ by \eqref{eq:dlm-step-eps};
\item compute $\sigma_i$ and $\mu^{(i)}$ by \eqref{eq:dlm-step-sigma}--\eqref{eq:dlm-step-mu};
\item sample $x_{s_i}$ by \eqref{eq:dlm-step-update}.
\end{enumerate}
\item Compute the final clean response embedding
\[
\hat x_0=f_\theta(c_0,x_{\mathrm{EPS}},\mathrm{EPS}).
\]
\item Decode the response by rounding logits \eqref{eq:dlm-round-logits}, and return the generated tokens $\hat w$.
\end{enumerate}
\end{minipage}
}
\end{center}

\subsection{Connection to the Reverse ODE/SDE Framework}

The connection with the main body of this tutorial is now transparent. The response embedding $x_t$ is simply a continuous state variable conditioned on the fixed prompt embedding $c_0$. Therefore the reverse-time theory of Sections~3--5 applies verbatim to the conditional density
\[
p_t(x_t\mid c_0).
\]
If we write
\[
\alpha_t:=\sqrt{\bar\alpha_t},
\qquad
\sigma_t:=\sqrt{1-\bar\alpha_t},
\]
and define $f_t$ and $g_t$ from $\alpha_t$ and $\sigma_t$ exactly as in the Setup section, then the forward perturbation \eqref{eq:dlm-forward} has exactly the same Gaussian form as the forward processes studied earlier:
\[
x_t=\alpha_t x_0+\sigma_t\varepsilon.
\]
Consequently, the reverse SDE is
\begin{equation}
\dd x_t
=
\Bigl(
f_t x_t-g_t^2\grad_{x_t}\log p_t(x_t\mid c_0)
\Bigr)\dd t
+g_t\dd \bar W_t,
\qquad
t:1\to 0,
\label{eq:dlm-reverse-sde}
\end{equation}
and the reverse probability-flow ODE is
\begin{equation}
\frac{\dd x_t}{\dd t}
=
f_t x_t-\frac{1}{2}g_t^2\grad_{x_t}\log p_t(x_t\mid c_0).
\label{eq:dlm-reverse-ode}
\end{equation}

In many diffusion language models, the network predicts $x_0$ rather than $\varepsilon$. However, the two parameterizations are equivalent under the Gaussian forward process:
\begin{equation}
\epsilon_\theta(c_0,x_t,t)
:=
\frac{x_t-\sqrt{\bar\alpha_t}\,\hat x_0}{\sqrt{1-\bar\alpha_t}},
\qquad
\hat x_0=f_\theta(c_0,x_t,t).
\label{eq:dlm-induced-noise}
\end{equation}
This induced noise predictor determines an induced score model
\begin{equation}
s_\theta(c_0,x_t,t)
:=
-\frac{1}{\sqrt{1-\bar\alpha_t}}\,
\epsilon_\theta(c_0,x_t,t).
\label{eq:dlm-induced-score}
\end{equation}
Therefore the learned reverse SDE can be written as
\begin{equation}
\dd x_t
=
\Bigl(
f_t x_t-g_t^2 s_\theta(c_0,x_t,t)
\Bigr)\dd t
+g_t\dd \bar W_t,
\label{eq:dlm-learned-reverse-sde-score}
\end{equation}
or equivalently in noise-prediction form,
\begin{equation}
\dd x_t
=
\left(
f_t x_t+\frac{g_t^2}{\sqrt{1-\bar\alpha_t}}\,
\epsilon_\theta(c_0,x_t,t)
\right)\dd t
+g_t\dd \bar W_t.
\label{eq:dlm-learned-reverse-sde-noise}
\end{equation}
The corresponding reverse ODE is
\begin{equation}
\frac{\dd x_t}{\dd t}
=
f_t x_t+\frac{g_t^2}{2\sqrt{1-\bar\alpha_t}}\,
\epsilon_\theta(c_0,x_t,t).
\label{eq:dlm-learned-reverse-ode-noise}
\end{equation}

This perspective explains the inference formulas above. The DDIM-stochastic rollout is simply a discrete sampler built from the same reverse-time objects:
\begin{itemize}
\item when $\eta=0$, the update becomes deterministic and is best viewed as an ODE-like sampler;
\item when $\eta>0$, extra Gaussian noise is injected and the rollout becomes SDE-like.
\end{itemize}

\subsection{A Brief Note on Discrete Diffusion LLMs}

Not all diffusion language models work in a continuous embedding space. A different line of work studies \emph{discrete} diffusion language models, where the state at every time step is still a token sequence rather than a real-valued embedding sequence. A standard construction is to define a categorical forward corruption process that gradually replaces clean tokens by a special absorbing symbol such as $\langle \mathrm{MASK}\rangle$, or more generally applies a discrete transition matrix over the vocabulary \cite{austin2021structured}. The reverse model then predicts the previous-token distribution $p_\theta(x_{t-1}\mid x_t,c)$ and generation proceeds by iterative parallel unmasking and refinement.

The high-level intuition is still diffusion-like: one starts from a heavily corrupted sequence and repeatedly denoises it. However, the mathematical objects are different from those in the present tutorial. In discrete diffusion, the forward and reverse dynamics are Markov chains on a finite state space, or in the continuous-time limit Markov jump processes, rather than ODEs or SDEs on $\R^d$. Accordingly, the central quantities are transition matrices and categorical posterior distributions, not vector fields, Brownian noise, or score functions $\grad \log p_t(x)$.

For this reason, discrete diffusion LLMs are \emph{not} our main concern in this tutorial. Our focus is the continuous reverse ODE/SDE framework, where the state variable lives in a Euclidean space and the learned object is a score, noise predictor, or induced reverse vector field. Discrete diffusion is an important parallel direction for language modeling, especially for $\langle \mathrm{MASK}\rangle$-prediction style generation, but it requires a different probabilistic formalism from the one developed here.

\newpage
\appendix

\section{Differential Operators}
\label{app:differential-operators}

This appendix records the basic differential operators used throughout the tutorial.

Let
\[
x=(x_1,\dots,x_d)\in\R^d.
\]

\begin{definition}[Gradient]
If $f:\R^d\to\R$ is a scalar-valued differentiable function, its gradient is the vector field
\[
\grad f(x)
:=
\left(
\frac{\partial f}{\partial x_1}(x),
\dots,
\frac{\partial f}{\partial x_d}(x)
\right)^\top.
\]
\end{definition}

Thus the gradient maps a scalar function to a vector field.

\begin{definition}[Divergence]
If $v:\R^d\to\R^d$ is a differentiable vector field with components
\[
v(x)=\bigl(v_1(x),\dots,v_d(x)\bigr)^\top,
\]
its divergence is the scalar function
\[
\diver v(x)
:=
\sum_{i=1}^d \frac{\partial v_i}{\partial x_i}(x).
\]
\end{definition}

Thus the divergence maps a vector field to a scalar function.

\begin{definition}[Laplacian]
If $f:\R^d\to\R$ is twice differentiable, its Laplacian is
\[
\lap f(x)
:=
\diver(\grad f(x))
=
\sum_{i=1}^d \frac{\partial^2 f}{\partial x_i^2}(x).
\]
\end{definition}

Thus the Laplacian maps a scalar function to a scalar function.

For completeness, if $v:\R^d\to\R^d$ is a differentiable vector field, then $\grad v(x)$ denotes its Jacobian matrix,
\[
\grad v(x)
:=
\left(\frac{\partial v_i}{\partial x_j}(x)\right)_{i,j=1}^d.
\]
This notation should not be confused with the divergence $\diver v(x)$, which is the trace of this Jacobian matrix.

\section{Brownian Motion in Forward and Reverse Time}
\label{app:reverse-brownian}

This appendix clarifies the three noise symbols that appear in the main text:
\[
W_t,\qquad W^{\mathrm{rev}}_\tau,\qquad \bar W_t.
\]
They are related, but they play different conceptual roles. The main source of confusion is that standard Brownian motion is defined with respect to an increasing time parameter, whereas the reverse SDE is often rewritten in the original label $t$ with the convention $t:1\to 0$.

\subsection*{Forward Brownian motion}

The forward diffusion is written in the increasing time variable $t\in[0,1]$ as
\[
\dd X_t=b_t(X_t)\,\dd t+g_t\,\dd W_t.
\]
For the Brownian motion itself, the most natural filtration is the Brownian filtration
\[
\mathcal F_t^{W}:=\sigma(W_s:0\le s\le t),
\]
or an augmentation of it. The forward process $X_t$ has its own natural filtration
\[
\mathcal F_t^{X}:=\sigma(X_s:0\le s\le t).
\]
In a strong-solution setting one usually works with a filtration $(\mathcal F_t)_{0\le t\le 1}$ large enough that both $X_t$ and $W_t$ are adapted, for example
\[
\mathcal F_t^{X}\subseteq \mathcal F_t \supseteq \mathcal F_t^{W}.
\]
For the definition of Brownian motion, however, it is conceptually cleaner to refer to $\mathcal F_t^{W}$ rather than to $\mathcal F_t^{X}$.

By definition, $W_t$ is a standard Brownian motion if:
\begin{enumerate}
    \item $W_0=0$ almost surely;
    \item for every $0\le s<t\le 1$, the increment $W_t-W_s$ is Gaussian with mean $0$ and covariance $(t-s)I$;
    \item for every $0\le s<t\le 1$, the increment $W_t-W_s$ is independent of $\mathcal F_s^{W}$.
\end{enumerate}

Thus $W_t$ is a forward-time noise process. It is attached to the forward SDE and to the forward filtration.

\subsection*{Reverse process and reverse-time filtration}

The reverse process is defined by
\[
Y_\tau:=X_{1-\tau},\qquad 0\le \tau\le 1.
\]
The reverse process has its own natural filtration
\[
\mathcal G_\tau^{Y}:=\sigma(Y_s:0\le s\le \tau)
=
\sigma(X_{1-s}:0\le s\le \tau).
\]
This filtration grows as $\tau$ increases from $0$ to $1$. Therefore $\tau$ is the natural forward time variable for the reverse process.

The reverse Brownian motion $W_\tau^{\mathrm{rev}}$ also has its own Brownian filtration
\[
\mathcal G_\tau^{W^{\mathrm{rev}}}:=\sigma(W_\rho^{\mathrm{rev}}:0\le \rho\le \tau).
\]
These two filtrations play different roles:
\begin{itemize}
    \item $\mathcal G_\tau^{Y}$ records the reverse-time information carried by the reverse process;
    \item $\mathcal G_\tau^{W^{\mathrm{rev}}}$ is the natural filtration of the reverse Brownian driver itself.
\end{itemize}
For a rigorous reverse SDE, one works on a filtration large enough to support both objects.

The relationship with the forward filtration is as follows. The forward Brownian filtration
\[
\mathcal F_t^{W}=\sigma(W_s:0\le s\le t)
\]
and the reverse Brownian filtration
\[
\mathcal G_\tau^{W^{\mathrm{rev}}}=\sigma(W_\rho^{\mathrm{rev}}:0\le \rho\le \tau)
\]
play analogous roles, but they are attached to different Brownian motions and different time directions. Likewise, the forward process filtration
\[
\mathcal F_t^{X}=\sigma(X_s:0\le s\le t)
\]
and the reverse process filtration
\[
\mathcal G_\tau^{Y}=\sigma(Y_s:0\le s\le \tau)
\]
describe information revealed along the forward and reverse evolutions, respectively. In general these forward and reverse filtrations should not be identified with one another, and there is typically no simple inclusion relation between them. They are different increasing families of sigma-algebras adapted to different time parameterizations.

When the reverse diffusion theorem is stated in the variable $\tau$, the reverse SDE has the form
\[
\dd Y_\tau=b^{\mathrm{rev}}_\tau(Y_\tau)\,\dd \tau+g_{1-\tau}\,\dd W^{\mathrm{rev}}_\tau,
\]
where $W^{\mathrm{rev}}_\tau$ is a standard Brownian motion with respect to its Brownian filtration $\mathcal G_\tau^{W^{\mathrm{rev}}}$, or with respect to an augmented filtration that also makes $Y_\tau$ adapted. Concretely, this means:
\begin{enumerate}
    \item $W^{\mathrm{rev}}_0=0$ almost surely;
    \item for every $0\le \rho<\tau\le 1$,
    \[
    W^{\mathrm{rev}}_\tau-W^{\mathrm{rev}}_\rho\sim \mathcal N(0,(\tau-\rho)I);
    \]
    \item for every $0\le \rho<\tau\le 1$, the increment $W^{\mathrm{rev}}_\tau-W^{\mathrm{rev}}_\rho$ is independent of the past Brownian filtration $\mathcal G_\rho^{W^{\mathrm{rev}}}$.
\end{enumerate}

The crucial point is that $W^{\mathrm{rev}}_\tau$ is the Brownian driver of the reverse diffusion. It is not the same symbol as the forward Brownian motion $W_t$, because it is attached to a different Brownian filtration and a different stochastic evolution. At the same time, the reverse SDE itself is naturally interpreted relative to the reverse-process filtration $\mathcal G_\tau^{Y}$, because that filtration records the information carried by the reversed trajectory.

\subsection*{Backward-$t$ notation and the definition of \texorpdfstring{$\bar W_t$}{Wbar}}

The same reverse diffusion may be rewritten in the original time label $t$ by setting
\[
t=1-\tau.
\]
Because the reverse process is still the same stochastic process viewed under a different clock, it is natural to define
\[
\bar W_t:=W^{\mathrm{rev}}_{1-t},\qquad t:1\to 0.
\]
With this notation, the reverse SDE written in the original label becomes
\[
\dd X_t=
\bigl(b_t(X_t)-g_t^2\nabla\log p_t(X_t)\bigr)\,\dd t
+g_t\,\dd \bar W_t,
\qquad t:1\to 0.
\]

The notation $\bar W_t$ emphasizes that this is the same reverse-time noise as $W^{\mathrm{rev}}_\tau$, but expressed in the backward label $t$. The symbol $\bar W_t$ should therefore be interpreted as a \emph{reverse-time noise driver}. It is not introduced as a new forward-time Brownian motion in the increasing variable $t$.

The corresponding backward-$t$ filtration is obtained by relabeling the reverse-time filtration:
\[
\bar{\mathcal G}_t^{X}:=\mathcal G_{1-t}^{Y}
=
\sigma(X_s:t\le s\le 1).
\]
Likewise, the Brownian filtration of the reverse noise may be rewritten as
\[
\bar{\mathcal G}_t^{\bar W}:=\mathcal G_{1-t}^{W^{\mathrm{rev}}}
=
\sigma(\bar W_s:t\le s\le 1).
\]
The family $\bar{\mathcal G}_t^{X}$ is decreasing when $t$ is read in the ordinary increasing direction, but it is increasing when the reverse SDE is read in its natural orientation $t:1\to 0$. The same is true for $\bar{\mathcal G}_t^{\bar W}$. Thus $\bar W_t$ is the backward-$t$ representation of the reverse Brownian driver, while $\bar{\mathcal G}_t^{X}$ records the backward-time information of the reverse process itself.

\subsection*{Which symbols are standard Brownian motions?}

The correct classification is:
\begin{itemize}
    \item $W_t$ is a standard Brownian motion in the increasing forward time variable $t\in[0,1]$, naturally associated with the filtration $\mathcal F_t^{W}$.
    \item $W^{\mathrm{rev}}_\tau$ is a standard Brownian motion in the increasing reverse-time variable $\tau\in[0,1]$, naturally associated with the filtration $\mathcal G_\tau^{W^{\mathrm{rev}}}$.
    \item $\bar W_t$ is the same reverse-time noise as $W^{\mathrm{rev}}_\tau$, rewritten in the label $t=1-\tau$. It is naturally associated with the relabeled Brownian filtration $\bar{\mathcal G}_t^{\bar W}$ and is not, by itself, introduced as a standard Brownian motion in increasing $t$.
\end{itemize}

Indeed,
\[
\bar W_1=W^{\mathrm{rev}}_0=0,
\qquad
\bar W_0=W^{\mathrm{rev}}_1.
\]
So $\bar W_t$ starts at zero when the parameter is read from $t=1$ down to $t=0$, exactly as the backward formulation of the reverse SDE requires.

\subsection*{Summary}

The three symbols may be summarized as follows:
\begin{align*}
\text{forward SDE in }t:&\qquad \dd X_t=b_t(X_t)\,\dd t+g_t\,\dd W_t,\\
\text{reverse SDE in }\tau:&\qquad \dd Y_\tau=b_\tau^{\mathrm{rev}}(Y_\tau)\,\dd \tau+g_{1-\tau}\,\dd W^{\mathrm{rev}}_\tau,\\
\text{same reverse SDE in }t:&\qquad \dd X_t=\bigl(b_t(X_t)-g_t^2\nabla\log p_t(X_t)\bigr)\,\dd t+g_t\,\dd \bar W_t,\quad t:1\to 0,\\
&\qquad \bar W_t=W^{\mathrm{rev}}_{1-t}.
\end{align*}

This is the precise sense in which $W^{\mathrm{rev}}_\tau$ and $\bar W_t$ represent the same reverse-time noise under two different clocks.

\section{Deterministic It\^o Integrals and It\^o Isometry}
\label{app:ito-isometry}

This appendix explains the stochastic-integral step used in Section~7 when comparing DDPM sampling with the reverse SDE. The key object there is an integral of the form
\[
\eta
:=
\int_a^b \phi(\tau)\,\dd W_\tau,
\]
where $W_\tau$ is a standard Brownian motion and $\phi$ is a deterministic scalar function. The conclusion used in the main text is that $\eta$ is Gaussian with mean zero and variance
\[
\E_W[\eta^2]=\int_a^b \phi(\tau)^2\,\dd\tau.
\]
In the vector-valued case relevant to diffusion models, the same statement holds componentwise, yielding covariance
\[
\E_W[\eta\eta^\top]=\left(\int_a^b \phi(\tau)^2\,\dd\tau\right)I.
\]

\subsection*{What does \texorpdfstring{$\E_W[\eta]$}{E\_W[eta]} mean?}

Some readers may be unfamiliar with notation such as $\E_W[\eta]$. The meaning is simple: it denotes expectation with respect to the randomness of the Brownian motion $W$.

Formally, let $(\Omega,\mathcal F,\mathbb P)$ be the underlying probability space, and let
\[
W_\tau=W_\tau(\omega),\qquad \omega\in\Omega,
\]
be a Brownian motion on that space. Then the stochastic integral
\[
\eta=\int_a^b \phi(\tau)\,\dd W_\tau
\]
is itself a random variable on $\Omega$, that is,
\[
\eta=\eta(\omega).
\]
Therefore
\[
\E_W[\eta]
\]
simply means
\[
\E[\eta]
=
\int_\Omega \eta(\omega)\,\dd\mathbb P(\omega),
\]
with the subscript $W$ added only to remind the reader that the randomness comes from the Brownian path $W$. In the present appendix, there is no additional source of randomness, so
\[
\E_W[\eta]=\E[\eta].
\]
Likewise,
\[
\E_W[\eta^2]=\E[\eta^2],
\qquad
\E_W[\eta\eta^\top]=\E[\eta\eta^\top].
\]
We keep the subscript $W$ only as a bookkeeping device indicating that the expectation is taken over Brownian trajectories.

\subsection*{Step 1: the case of a step-function integrand}

Suppose first that
\[
\phi(\tau)=\sum_{j=0}^{m-1} c_j\,\mathbf 1_{(u_j,u_{j+1}]}(\tau),
\qquad
a=u_0<u_1<\cdots<u_m=b,
\]
where each $c_j\in\R$ is deterministic. By definition of the It\^o integral for step functions,
\[
\int_a^b \phi(\tau)\,\dd W_\tau
:=
\sum_{j=0}^{m-1} c_j\bigl(W_{u_{j+1}}-W_{u_j}\bigr).
\]
Each increment
\[
W_{u_{j+1}}-W_{u_j}
\]
is Gaussian with mean zero and variance $u_{j+1}-u_j$, and the increments over disjoint intervals are independent. Therefore the random variable
\[
\eta
:=
\sum_{j=0}^{m-1} c_j\bigl(W_{u_{j+1}}-W_{u_j}\bigr)
\]
is a linear combination of independent Gaussian random variables, hence itself Gaussian.

Its mean is
\[
\E_W[\eta]
=
\sum_{j=0}^{m-1} c_j\,\E_W[W_{u_{j+1}}-W_{u_j}]
=
0.
\]
Its variance is
\begin{align*}
\E_W[\eta^2]
&=
\E_W\!\left[
\left(\sum_{j=0}^{m-1} c_j(W_{u_{j+1}}-W_{u_j})\right)^2
\right]\\
&=
\sum_{j=0}^{m-1} c_j^2\,\E_W[(W_{u_{j+1}}-W_{u_j})^2]
+2\sum_{0\le i<j\le m-1} c_i c_j\,\E_W[(W_{u_{i+1}}-W_{u_i})(W_{u_{j+1}}-W_{u_j})].
\end{align*}
The cross terms vanish because disjoint Brownian increments are independent and centered:
\[
\E_W[(W_{u_{i+1}}-W_{u_i})(W_{u_{j+1}}-W_{u_j})]
=
\E_W[W_{u_{i+1}}-W_{u_i}]\,\E_W[W_{u_{j+1}}-W_{u_j}]
=
0
\qquad (i\neq j).
\]
Hence
\begin{align*}
\E_W[\eta^2]
&=
\sum_{j=0}^{m-1} c_j^2\,\E_W[(W_{u_{j+1}}-W_{u_j})^2]\\
&=
\sum_{j=0}^{m-1} c_j^2\,(u_{j+1}-u_j).
\end{align*}
But this sum is exactly
\[
\int_a^b \phi(\tau)^2\,\dd\tau.
\]
Therefore, for step-function integrands,
\[
\boxed{
\int_a^b \phi(\tau)\,\dd W_\tau
\sim
\N\!\left(0,\int_a^b \phi(\tau)^2\,\dd\tau\right).
}
\]
This identity is the deterministic step-function case of It\^o isometry.

\subsection*{Step 2: extension to general deterministic integrands}

Now let $\phi\in L^2([a,b])$ be any deterministic square-integrable function. By the standard construction of the It\^o integral, there exists a sequence of step functions $\phi_n$ such that
\[
\int_a^b |\phi_n(\tau)-\phi(\tau)|^2\,\dd\tau \to 0.
\]
For each $n$, define
\[
\eta_n:=\int_a^b \phi_n(\tau)\,\dd W_\tau.
\]
From Step~1,
\[
\E_W[\eta_n]=0,
\qquad
\E_W[\eta_n^2]=\int_a^b \phi_n(\tau)^2\,\dd\tau.
\]
Moreover,
\[
\eta_n-\eta_m
=
\int_a^b (\phi_n(\tau)-\phi_m(\tau))\,\dd W_\tau,
\]
so Step~1 again gives
\[
\E_W[(\eta_n-\eta_m)^2]
=
\int_a^b |\phi_n(\tau)-\phi_m(\tau)|^2\,\dd\tau.
\]
Since $(\phi_n)$ is Cauchy in $L^2([a,b])$, the sequence $(\eta_n)$ is Cauchy in $L^2(\Omega)$ and therefore converges in $L^2$ to a limit, which is by definition
\[
\eta:=\int_a^b \phi(\tau)\,\dd W_\tau.
\]
We now justify the mean and variance formulas carefully.

First, $L^2$ convergence implies $L^1$ convergence by Cauchy--Schwarz:
\[
\E_W[|\eta_n-\eta|]
\le
\bigl(\E_W[(\eta_n-\eta)^2]\bigr)^{1/2}
\to 0.
\]
Therefore
\[
\E_W[\eta]
=
\E_W[\eta_n]+\E_W[\eta-\eta_n].
\]
Taking absolute values and using $\E_W[\eta_n]=0$ for every $n$,
\[
|\E_W[\eta]|
\le
|\E_W[\eta_n]|+\E_W[|\eta-\eta_n|]
=
\E_W[|\eta-\eta_n|].
\]
Letting $n\to\infty$ gives
\[
\E_W[\eta]=0.
\]

Next, to compute the second moment, write
\[
\eta^2-\eta_n^2=(\eta-\eta_n)(\eta+\eta_n).
\]
Hence
\[
\bigl|\E_W[\eta^2]-\E_W[\eta_n^2]\bigr|
\le
\E_W[|\eta-\eta_n|\,|\eta+\eta_n|].
\]
By Cauchy--Schwarz,
\[
\E_W[|\eta-\eta_n|\,|\eta+\eta_n|]
\le
\bigl(\E_W[(\eta-\eta_n)^2]\bigr)^{1/2}
\bigl(\E_W[(\eta+\eta_n)^2]\bigr)^{1/2}.
\]
The first factor tends to zero because $\eta_n\to\eta$ in $L^2$. The second factor remains bounded because
\[
\E_W[(\eta+\eta_n)^2]
\le
2\E_W[\eta^2]+2\E_W[\eta_n^2],
\]
and both terms on the right are finite. Therefore
\[
\E_W[\eta_n^2]\to \E_W[\eta^2].
\]
Since
\[
\E_W[\eta_n^2]=\int_a^b \phi_n(\tau)^2\,\dd\tau
\]
and $\phi_n\to\phi$ in $L^2([a,b])$, we also have
\[
\int_a^b \phi_n(\tau)^2\,\dd\tau \to \int_a^b \phi(\tau)^2\,\dd\tau.
\]
Consequently,
\[
\E_W[\eta^2]
=
\int_a^b \phi(\tau)^2\,\dd\tau.
\]
This identity is the deterministic-integrand form of It\^o isometry:
\[
\boxed{
\E_W\left[\left(\int_a^b \phi(\tau)\,\dd W_\tau\right)^2\right]
=
\int_a^b \phi(\tau)^2\,\dd\tau.
}
\]

Because each $\eta_n$ is Gaussian and the sequence converges in $L^2$, the limit $\eta$ is also Gaussian with the same limiting mean and variance. Therefore
\[
\boxed{
\int_a^b \phi(\tau)\,\dd W_\tau
\sim
\N\!\left(0,\int_a^b \phi(\tau)^2\,\dd\tau\right)
}
\]
for every deterministic $\phi\in L^2([a,b])$.

\subsection*{Step 3: the vector-valued case}

In diffusion models, the Brownian motion is $d$-dimensional:
\[
W_\tau=(W_\tau^{(1)},\dots,W_\tau^{(d)})^\top.
\]
If $\phi$ is a deterministic scalar function, define
\[
\eta:=\int_a^b \phi(\tau)\,\dd W_\tau
=
\left(
\int_a^b \phi(\tau)\,\dd W_\tau^{(1)},
\dots,
\int_a^b \phi(\tau)\,\dd W_\tau^{(d)}
\right)^\top.
\]
Each component is Gaussian with mean zero and variance $\int_a^b \phi(\tau)^2\,\dd\tau$. Since the Brownian components are independent, the components of $\eta$ are independent as well. Consequently,
\[
\eta
\sim
\N\!\left(
0,
\left(\int_a^b \phi(\tau)^2\,\dd\tau\right)I
\right).
\]
Equivalently,
\[
\E_W[\eta]=0,
\]
and
\[
\E_W[\eta\eta^\top]
=
\left(\int_a^b \phi(\tau)^2\,\dd\tau\right)I.
\]

\subsection*{Step 4: application to the DDPM/reverse-SDE comparison}

In Section~7, the stochastic term is
\[
\eta_k
:=
\int_{1-t_k}^{1-t_{k-1}} \sqrt{\beta(1-\tau)}\,\dd W_\tau^{\mathrm{rev}}.
\]
Here the deterministic integrand is
\[
\phi(\tau):=\sqrt{\beta(1-\tau)}.
\]
Applying the vector-valued result above gives
\[
\E_{W^{\mathrm{rev}}}[\eta_k]=0
\]
and
\[
\E_{W^{\mathrm{rev}}}[\eta_k\eta_k^\top]
=
\left(
\int_{1-t_k}^{1-t_{k-1}} \beta(1-\tau)\,\dd\tau
\right)I.
\]
Now make the change of variables
\[
s=1-\tau,
\qquad
\dd s=-\dd\tau.
\]
When $\tau=1-t_k$, we have $s=t_k$, and when $\tau=1-t_{k-1}$, we have $s=t_{k-1}$. Therefore
\begin{align*}
\int_{1-t_k}^{1-t_{k-1}} \beta(1-\tau)\,\dd\tau
&=
\int_{s=t_k}^{s=t_{k-1}} \beta(s)(-\dd s)\\
&=
\int_{t_{k-1}}^{t_k}\beta(s)\,\dd s\\
&=
h_k.
\end{align*}
Hence
\[
\E_{W^{\mathrm{rev}}}[\eta_k\eta_k^\top]=h_k I.
\]
Since $\eta_k$ is Gaussian and centered, this proves
\[
\boxed{
\eta_k\sim \N(0,h_k I).
}
\]
Therefore one may write
\[
\eta_k=\sqrt{h_k}\,z_k,
\qquad
z_k\sim \N(0,I),
\]
which is exactly the stochastic increment used in the backward Euler--Maruyama step of the reverse SDE.

\section{Continuity Equation}
\label{app:continuity}

\begin{theorem}[Continuity equation]
Let $X_t$ satisfy the deterministic ODE
\[
\frac{\dd X_t}{\dd t}=u_t(X_t),
\]
and let $p_t$ denote the density of $X_t$. Then
\begin{equation}
\partial_t p_t(x) = -\diver\bigl(p_t(x)u_t(x)\bigr).
\label{eq:appendix-continuity}
\end{equation}
\end{theorem}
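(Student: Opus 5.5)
The plan is to prove the continuity equation in its weak (distributional) form, by testing against smooth compactly supported functions, and then recover the pointwise identity \eqref{eq:appendix-continuity} by integration by parts. Throughout we assume enough regularity for the computations to make sense: $u_t$ locally Lipschitz in $x$ and continuous in $t$, so the flow $\Phi_t$ of the ODE is well defined; $p_t$ is $C^1$ in $(t,x)$; and boundary terms vanish.

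Fix a test function $\varphi\in C_c^\infty(\R^d)$ and consider $\E[\varphi(X_t)]=\int \varphi(x)p_t(x)\dd x$. Since each trajectory is differentiable, the chain rule gives
\[
\frac{\dd}{\dd t}\varphi(X_t)=\grad\varphi(X_t)^\top u_t(X_t).
\]
Taking expectations, and differentiating under the expectation, yields
\[
\int \varphi(x)\,\partial_t p_t(x)\dd x=\int \grad\varphi(x)^\top u_t(x)\,p_t(x)\dd x .
\]
The interchange of $\dd/\dd t$ with $\E$ is justified by dominated convergence: $\varphi$ and $\grad\varphi$ are bounded, and $u_t$ is bounded on the compact support of $\varphi$. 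Integrating the right-hand side by parts, with no boundary term because $\varphi$ has compact support, turns it into
\[
-\int \varphi(x)\,\diver\bigl(p_t(x)u_t(x)\bigr)\dd x .
\]
Since $\varphi$ is arbitrary and both $\partial_t p_t$ and $\diver(p_tu_t)$ are continuous, the fundamental lemma of the calculus of variations gives \eqref{eq:appendix-continuity} pointwise.

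The main obstacle is regularity rather than algebra. Specifically, one must justify differentiating $\E[\varphi(X_t)]$ in $t$ and assume that $p_t$ is differentiable enough for $\diver(p_tu_t)$ to exist classically. I would handle this by stating the standing assumptions explicitly. If $p_t$ is not smooth, the weak-form identity from the previous paragraph is itself the correct meaning of \eqref{eq:appendix-continuity}.

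As a cross-check, there is an alternative Lagrangian route. Use the change-of-variables formula $p_t(\Phi_t(x_0))\det\grad\Phi_t(x_0)=p_0(x_0)$ together with Liouville's formula $\frac{\dd}{\dd t}\det\grad\Phi_t=(\diver u_t)(\Phi_t)\det\grad\Phi_t$. Differentiating the first identity in $t$ gives $\partial_tp_t+u_t^\top\grad p_t+p_t\diver u_t=0$ along trajectories. This is the same equation, and because $\Phi_t$ is a diffeomorphism it holds at every $x$. I would mention this route only briefly as a remark.
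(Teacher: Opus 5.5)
Your proposal is correct and follows the paper's proof essentially step for step: test against $\varphi\in C_c^\infty(\R^d)$, apply the chain rule $\frac{\dd}{\dd t}\varphi(X_t)=\grad\varphi(X_t)^\top u_t(X_t)$, take expectations, integrate by parts, and conclude because $\varphi$ is arbitrary. Your explicit regularity assumptions (which the paper leaves implicit) and the Lagrangian cross-check via Liouville's formula are additions, but they do not change the core argument.
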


\begin{proof}
Let $\varphi:\R^d\to\R$ be a smooth compactly supported test function. Since $\dd X_t/\dd t=u_t(X_t)$,
\[
\frac{\dd}{\dd t}\varphi(X_t)=\grad\varphi(X_t)^\top u_t(X_t).
\]
Taking expectations,
\[
\frac{\dd}{\dd t}\E_{X_t}[\varphi(X_t)]
= \E_{X_t}[\grad\varphi(X_t)^\top u_t(X_t)].
\]
Writing the expectation in terms of the density $p_t$,
\[
\frac{\dd}{\dd t}\int \varphi(x)p_t(x)\dd x
= \int \grad\varphi(x)^\top u_t(x)p_t(x)\dd x.
\]
Integrating by parts gives
\[
\int \grad\varphi(x)^\top u_t(x)p_t(x)\dd x
= -\int \varphi(x)\diver\bigl(u_t(x)p_t(x)\bigr)\dd x.
\]
Therefore
\[
\int \varphi(x)\partial_t p_t(x)\dd x
= -\int \varphi(x)\diver\bigl(u_t(x)p_t(x)\bigr)\dd x.
\]
Since this holds for every test function $\varphi$, \eqref{eq:appendix-continuity} follows.
\end{proof}

\section{Fokker--Planck Equation}
\label{app:fokker-planck}

\begin{theorem}[Fokker--Planck equation]
Let $X_t$ satisfy the It\^o SDE
\begin{equation}
\dd X_t = b_t(X_t)\dd t + g_t \dd W_t,
\label{eq:appendix-generic-sde}
\end{equation}
where $b_t:\R^d\to\R^d$ and $g_t$ is scalar. Let $p_t$ be the density of $X_t$. Then
\begin{equation}
\partial_t p_t(x)
= -\diver\bigl(b_t(x)p_t(x)\bigr)
+ \frac{1}{2}g_t^2\lap p_t(x).
\label{eq:appendix-fp}
\end{equation}
\end{theorem}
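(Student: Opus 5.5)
The plan is to mirror the weak-formulation argument used for the continuity equation in Appendix~\ref{app:continuity}, with It\^o's formula replacing the ordinary chain rule. Fix a smooth compactly supported test function $\varphi:\R^d\to\R$. Applying It\^o's formula to $\varphi(X_t)$ along \eqref{eq:appendix-generic-sde} gives
\[
\dd\varphi(X_t)=\Bigl(\grad\varphi(X_t)^\top b_t(X_t)+\tfrac12 g_t^2\lap\varphi(X_t)\Bigr)\dd t+g_t\grad\varphi(X_t)^\top\dd W_t .
\]
The quadratic-variation term reduces to $\tfrac12 g_t^2\lap\varphi$ because the noise is isotropic: $\dd W^{(i)}_t\dd W^{(j)}_t=\delta_{ij}\dd t$. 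This is the only genuinely new ingredient compared with the ODE case.

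Next I take expectations. The stochastic integral $\int_0^t g_s\grad\varphi(X_s)^\top\dd W_s$ has zero mean, since its integrand is adapted and bounded ($\grad\varphi$ is bounded and $g$ is a deterministic, locally bounded coefficient). Hence
\[
\frac{\dd}{\dd t}\int\varphi(x)p_t(x)\dd x=\int\Bigl(\grad\varphi^\top b_t+\tfrac12 g_t^2\lap\varphi\Bigr)p_t\dd x .
\]
Passing the derivative inside the integral is justified by writing the identity in integrated form from $0$ to $t$ and then differentiating.

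Then I integrate by parts, using compact support so that no boundary terms appear. Once on the drift term,
\[
\int\grad\varphi^\top b_tp_t\dd x=-\int\varphi\diver(b_tp_t)\dd x,
\]
and twice on the diffusion term,
\[
\int\lap\varphi\, p_t\dd x=\int\varphi\lap p_t\dd x .
\]
This yields
\[
\int\varphi\,\partial_tp_t\dd x=\int\varphi\Bigl(-\diver(b_tp_t)+\tfrac12 g_t^2\lap p_t\Bigr)\dd x
\]
for every test function $\varphi$. By the fundamental lemma of the calculus of variations, \eqref{eq:appendix-fp} then holds pointwise wherever both sides are continuous.

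The main obstacle is regularity rather than algebra. The steps require $p_t$ to be $C^1$ in $t$ and $C^2$ in $x$, $b_tp_t$ to be differentiable, and the martingale term to have zero mean. I would state these as standing assumptions, as the tutorial does implicitly elsewhere: smooth $b$ with at most linear growth and smooth positive $p_t$, which holds in the Gaussian-mixture setting of the main text. Under these assumptions the weak-form identity upgrades to the classical PDE.
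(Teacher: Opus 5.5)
Your proposal is correct and follows essentially the same route as the paper: It\^o's formula for a compactly supported test function, taking expectations to remove the martingale term, integrating by parts once on the drift term and twice on the diffusion term, and concluding because $\varphi$ is arbitrary. Your explicit regularity assumptions (zero-mean stochastic integral, differentiating under the integral, $C^1$ in $t$ and $C^2$ in $x$) are conditions the paper's proof uses without stating them.
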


\begin{proof}
Let $\varphi:\R^d\to\R$ be a smooth compactly supported test function. By It\^o's formula,
\[
\dd \varphi(X_t)
= \grad\varphi(X_t)^\top \dd X_t
+ \frac{1}{2}\operatorname{tr}\bigl(g_t^2 I \nabla^2\varphi(X_t)\bigr)\dd t.
\]
Substituting \eqref{eq:appendix-generic-sde},
\[
\dd \varphi(X_t)
= \grad\varphi(X_t)^\top b_t(X_t)\dd t
+ g_t \grad\varphi(X_t)^\top \dd W_t
+ \frac{1}{2}g_t^2 \lap \varphi(X_t)\dd t.
\]
Taking expectation removes the martingale term:
\[
\frac{\dd}{\dd t}\E_{X_t}[\varphi(X_t)]
= \E_{X_t}[\grad\varphi(X_t)^\top b_t(X_t)]
+ \frac{1}{2}g_t^2 \E_{X_t}[\lap\varphi(X_t)].
\]
Writing the expectations using the density,
\[
\frac{\dd}{\dd t}\int \varphi(x)p_t(x)\dd x
= \int \grad\varphi(x)^\top b_t(x)p_t(x)\dd x
+ \frac{1}{2}g_t^2 \int \lap\varphi(x)p_t(x)\dd x.
\]
Integrating by parts,
\[
\int \grad\varphi(x)^\top b_t(x)p_t(x)\dd x
= -\int \varphi(x)\diver\bigl(b_t(x)p_t(x)\bigr)\dd x,
\]
and
\[
\int \lap\varphi(x)p_t(x)\dd x
= \int \varphi(x)\lap p_t(x)\dd x.
\]
Therefore
\[
\int \varphi(x)\partial_t p_t(x)\dd x
= \int \varphi(x)\left[
-\diver\bigl(b_t(x)p_t(x)\bigr)
+ \frac{1}{2}g_t^2 \lap p_t(x)
\right]\dd x.
\]
Since this holds for every test function $\varphi$, \eqref{eq:appendix-fp} follows.
\end{proof}

\section{Conditional-to-Marginal Averaging Lemmas}
\label{app:conditional-to-marginal}

\begin{lemma}[Averaging continuity equations]
Suppose that for each fixed $x_0$, a conditional density $p_t(x \mid x_0)$ satisfies
\[
\partial_t p_t(x \mid x_0)
= -\diver\bigl(p_t(x \mid x_0)v_t(x \mid x_0)\bigr).
\]
Let
\[
p_t(x)=\int p_t(x \mid x_0)p_0(x_0)\dd x_0.
\]
Then the marginal density satisfies
\[
\partial_t p_t(x)
= -\diver\bigl(p_t(x)\bar v_t(x)\bigr),
\]
where
\begin{equation}
\bar v_t(x)
:=
\E_{X_0\mid X_t=x}[v_t(x \mid X_0)].
\label{eq:appendix-marginal-velocity}
\end{equation}
\end{lemma}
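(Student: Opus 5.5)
The plan is to differentiate the marginal density under the integral sign, insert the conditional continuity equation, and move the divergence outside the $x_0$-integral. The remaining work is to recognise the resulting vector field as $p_t(x)\bar v_t(x)$ by Bayes' rule. This is the same computation already carried out in the proof of the marginalized forward ODE; here it is done for a general conditional velocity $v_t(x\mid x_0)$.

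Concretely, I will first write
\[
\partial_t p_t(x)=\int \partial_t p_t(x\mid x_0)\,p_0(x_0)\dd x_0 .
\]
Substituting the hypothesis gives
\[
-\int \diver\bigl(p_t(x\mid x_0)v_t(x\mid x_0)\bigr)p_0(x_0)\dd x_0 .
\]
Because $\diver$ acts only on $x$ and $p_0(x_0)$ does not depend on $x$, I will interchange the divergence with the $x_0$-integral. This yields
\[
-\diver\Bigl(\int p_t(x\mid x_0)v_t(x\mid x_0)p_0(x_0)\dd x_0\Bigr).
\]

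Next, on the set where $p_t(x)>0$, Bayes' rule gives the posterior density $p(x_0\mid X_t=x)=p_t(x\mid x_0)p_0(x_0)/p_t(x)$. Hence
\[
\int p_t(x\mid x_0)v_t(x\mid x_0)p_0(x_0)\dd x_0
=p_t(x)\int v_t(x\mid x_0)\,p(x_0\mid X_t=x)\dd x_0
=p_t(x)\bar v_t(x),
\]
which is exactly the definition \eqref{eq:appendix-marginal-velocity}. Where $p_t(x)=0$, the integrand $p_t(x\mid x_0)p_0(x_0)$ vanishes for $p_0$-a.e.\ $x_0$, so both sides are zero whatever value is assigned to $\bar v_t(x)$. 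Substituting this identity gives the marginal continuity equation.

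The only real obstacle is justifying the two interchanges: $\partial_t$ with the integral, and $\diver$ with the integral. I will handle both by dominated convergence, requiring local-in-$(t,x)$ integrable bounds on $\partial_t p_t(x\mid x_0)$ and on $\grad_x\bigl(p_t(x\mid x_0)v_t(x\mid x_0)\bigr)$ with respect to $p_0(\dd x_0)$. In the Gaussian setting of the paper, the kernel and its derivatives are bounded by Gaussian tails times polynomials in $x,x_0$, and the affine velocity \eqref{eq:conditional-forward-velocity} grows linearly in $x_0$. It therefore suffices to assume that $p_0$ has a finite first moment and that $t$ stays away from the endpoints where $\sigma_t=0$. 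If these pointwise bounds are awkward to state in general, the fallback is to prove the identity in weak form: pair it with a test function $\varphi$, use Fubini, and integrate by parts in $x$ for each fixed $x_0$, exactly as in the proof of the continuity equation above. This only needs the integrability of $p_t(x\mid x_0)\|v_t(x\mid x_0)\|$ in $(x,x_0)$.
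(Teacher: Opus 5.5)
Your proposal is correct and follows the paper's own proof essentially step for step. Like the paper, you differentiate under the integral sign, substitute the conditional continuity equation, pull the divergence outside the $x_0$-integral, and identify the resulting flux as $p_t(x)\bar v_t(x)$ via Bayes' rule. Your additional points on the justification of the interchanges, the weak-form fallback, and the set where $p_t(x)=0$ are sound; the paper simply leaves this regularity implicit.
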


\begin{proof}
Differentiate under the integral sign:
\[
\partial_t p_t(x)
= \int \partial_t p_t(x \mid x_0)p_0(x_0)\dd x_0.
\]
Substitute the conditional continuity equation:
\[
\partial_t p_t(x)
= -\int \diver\bigl(p_t(x \mid x_0)v_t(x \mid x_0)\bigr)p_0(x_0)\dd x_0.
\]
Since the divergence acts only on $x$, it can be moved outside the integral:
\[
\partial_t p_t(x)
= -\diver\left(
\int p_t(x \mid x_0)v_t(x \mid x_0)p_0(x_0)\dd x_0
\right).
\]
Define
\[
\bar v_t(x)
:=
\frac{\int p_t(x \mid x_0)v_t(x \mid x_0)p_0(x_0)\dd x_0}{p_t(x)}.
\]
By Bayes' rule, this is exactly \eqref{eq:appendix-marginal-velocity}. Hence
\[
\partial_t p_t(x)
= -\diver\bigl(p_t(x)\bar v_t(x)\bigr).
\qedhere
\]
\end{proof}

\begin{lemma}[Averaging Fokker--Planck equations]
Suppose that for each fixed $x_0$, a conditional density $p_t(x \mid x_0)$ satisfies
\[
\partial_t p_t(x \mid x_0)
= -\diver\bigl(p_t(x \mid x_0)b_t(x \mid x_0)\bigr)
+ \frac{1}{2}g_t^2 \lap p_t(x \mid x_0).
\]
Then the marginal density
\[
p_t(x)=\int p_t(x \mid x_0)p_0(x_0)\dd x_0
\]
satisfies
\[
\partial_t p_t(x)
= -\diver\bigl(p_t(x)\bar b_t(x)\bigr)
+ \frac{1}{2}g_t^2 \lap p_t(x),
\]
where
\begin{equation}
\bar b_t(x)
:=
\E_{X_0\mid X_t=x}[b_t(x \mid X_0)].
\label{eq:appendix-marginal-drift}
\end{equation}
\end{lemma}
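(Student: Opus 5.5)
The plan mirrors the proof of the preceding lemma (averaging continuity equations) and treats the drift and diffusion parts separately. First, I differentiate the marginal $p_t(x)=\int p_t(x\mid x_0)p_0(x_0)\dd x_0$ under the integral sign to get $\partial_t p_t(x)=\int \partial_t p_t(x\mid x_0)p_0(x_0)\dd x_0$. I then substitute the assumed conditional Fokker--Planck equation for each fixed $x_0$. This splits the right-hand side into two integrals:
\[
-\int \diver\bigl(p_t(x\mid x_0)b_t(x\mid x_0)\bigr)p_0(x_0)\dd x_0
\quad\text{and}\quad
\frac12 g_t^2\int \lap p_t(x\mid x_0)p_0(x_0)\dd x_0 .
\]

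For the diffusion term, $g_t$ does not depend on $x_0$ and $\lap$ acts only on $x$. I can therefore pull the Laplacian outside the integral, which gives $\frac12 g_t^2\lap p_t(x)$ directly, exactly as in the proof of the marginalized forward SDE theorem.

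For the drift term, I likewise move the divergence outside to obtain $-\diver\bigl(\int p_t(x\mid x_0)b_t(x\mid x_0)p_0(x_0)\dd x_0\bigr)$. I then identify the inner vector field using Bayes' rule, $p(x_0\mid X_t=x)=p_t(x\mid x_0)p_0(x_0)/p_t(x)$. This rewrites the integral as
\[
p_t(x)\int b_t(x\mid x_0)\,p(x_0\mid X_t=x)\dd x_0 = p_t(x)\,\bar b_t(x),
\]
with $\bar b_t$ as in \eqref{eq:appendix-marginal-drift}. Combining the two pieces yields the claimed marginal Fokker--Planck equation.

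The only delicate step is justifying the interchange of $\partial_t$, $\diver$ and $\lap$ with the $x_0$-integral, together with the Bayes identification wherever $p_t(x)>0$. I will handle this the same way the paper does in the continuity lemma. I assume enough regularity and integrability (for instance dominated convergence for the derivatives of the Gaussian-type conditionals against $p_0$) so that differentiation under the integral is allowed. On the set where $p_t(x)=0$ the flux $p_t\bar b_t$ is taken to be zero, so the identity holds trivially there. No other obstacle is expected, since the argument is a direct analogue of the continuity-equation lemma with an additional linear diffusion term.
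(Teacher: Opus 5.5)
Your proposal is correct and follows the paper's proof step for step: differentiate under the integral sign, substitute the conditional Fokker--Planck equation, move $\diver$ and $\lap$ outside the $x_0$-integrals, and use Bayes' rule to identify the averaged flux as $p_t(x)\bar b_t(x)$. Your remarks on the regularity needed for these interchanges, and on the convention where $p_t(x)=0$, go slightly beyond the paper, which performs the interchanges formally.
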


\begin{proof}
Differentiate under the integral sign:
\[
\partial_t p_t(x)
= \int \partial_t p_t(x \mid x_0)p_0(x_0)\dd x_0.
\]
Substitute the conditional Fokker--Planck equation:
\begin{align*}
\partial_t p_t(x)
&= -\int \diver\bigl(p_t(x \mid x_0)b_t(x \mid x_0)\bigr)p_0(x_0)\dd x_0\\
&\quad + \frac{1}{2}g_t^2\int \lap p_t(x \mid x_0)p_0(x_0)\dd x_0.
\end{align*}
Move derivatives outside the integrals:
\[
\partial_t p_t(x)
= -\diver\left(
\int p_t(x \mid x_0)b_t(x \mid x_0)p_0(x_0)\dd x_0
\right)
+ \frac{1}{2}g_t^2 \lap p_t(x).
\]
Define
\[
\bar b_t(x)
:=
\frac{\int p_t(x \mid x_0)b_t(x \mid x_0)p_0(x_0)\dd x_0}{p_t(x)}.
\]
By Bayes' rule this is \eqref{eq:appendix-marginal-drift}, so
\[
\partial_t p_t(x)
= -\diver\bigl(p_t(x)\bar b_t(x)\bigr)
+ \frac{1}{2}g_t^2 \lap p_t(x).
\qedhere
\]
\end{proof}

\section{Fisher's Identity}
\label{app:fisher}

\begin{theorem}[Fisher's identity]
If
\[
p_t(x)=\int p_t(x \mid x_0)p_0(x_0)\dd x_0,
\]
then
\begin{equation}
\grad\log p_t(x)
= \E[\grad\log p_t(x \mid X_0)\mid X_t=x].
\label{eq:appendix-fisher}
\end{equation}
\end{theorem}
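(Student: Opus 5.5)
The plan is to differentiate the mixture representation of $p_t$ under the integral sign, and then read the resulting ratio through Bayes' rule. Throughout, $t$ is fixed and $x$ is restricted to the set where $p_t(x)>0$, so that $\log p_t(x)$ and the posterior law of $X_0$ given $X_t=x$ are both well defined. We also assume that $\sigma_t>0$, so the Gaussian kernel is a genuine density.

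The first step computes $\grad p_t(x)$. Differentiating under the integral gives
\[
\grad p_t(x)=\int \grad p_t(x\mid x_0)\,p_0(x_0)\dd x_0 .
\]
This interchange of gradient and integral is the only step that needs real justification, and I expect it to be the main obstacle. The conditional kernel is $\N(x;\alpha_t x_0,\sigma_t^2 I)$, and by \eqref{eq:conditional-score} its gradient is
\[
-p_t(x\mid x_0)\,\frac{x-\alpha_t x_0}{\sigma_t^2}.
\]
For $x$ ranging over a bounded neighbourhood, this is bounded uniformly in $x_0$, because a Gaussian factor dominates any linear factor $\|x-\alpha_t x_0\|$. The bound is therefore integrable against $p_0$, and dominated convergence allows the interchange. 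In this tutorial's style it is enough to state this domination argument briefly and not belabour it.

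The second step rewrites the integrand using
\[
\grad p_t(x\mid x_0)=p_t(x\mid x_0)\,\grad\log p_t(x\mid x_0),
\]
which is valid because the Gaussian kernel is strictly positive. Dividing by $p_t(x)$ then gives
\[
\grad\log p_t(x)=\frac{\grad p_t(x)}{p_t(x)}=\int \grad\log p_t(x\mid x_0)\,\frac{p_t(x\mid x_0)p_0(x_0)}{p_t(x)}\dd x_0 .
\]

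The final step identifies the weight $p_t(x\mid x_0)p_0(x_0)/p_t(x)$, via Bayes' rule, as the posterior density of $X_0$ given $X_t=x$. The joint density of $(X_0,X_t)$ is $p_0(x_0)p_t(x\mid x_0)$, and its $x$-marginal is $p_t$ by \eqref{eq:marginal-forward-density}. The integral is therefore exactly $\E[\grad\log p_t(x\mid X_0)\mid X_t=x]$, which proves \eqref{eq:appendix-fisher}.
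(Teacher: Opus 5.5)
Your proof is correct and is the paper's argument step for step: you differentiate the mixture under the integral, write $\grad p_t(x\mid x_0)=p_t(x\mid x_0)\grad\log p_t(x\mid x_0)$, and use Bayes' rule to identify $p_t(x\mid x_0)p_0(x_0)/p_t(x)$ as the posterior density of $X_0$ given $X_t=x$. Your domination argument for the interchange, which the paper omits, is also sound; in fact $\|r\|e^{-\|r\|^2/(2\sigma_t^2)}$ is bounded on all of $\R^d$, so the bound holds globally in $x$, not just on bounded neighbourhoods.
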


\begin{proof}
Differentiate the marginal density:
\[
\grad p_t(x)
= \int \grad p_t(x \mid x_0)p_0(x_0)\dd x_0.
\]
Divide by $p_t(x)$:
\[
\grad\log p_t(x)
= \int \frac{\grad p_t(x \mid x_0)}{p_t(x)}p_0(x_0)\dd x_0.
\]
Multiply and divide by $p_t(x \mid x_0)$ inside the integral:
\[
\grad\log p_t(x)
= \int
\frac{\grad p_t(x \mid x_0)}{p_t(x \mid x_0)}
\frac{p_t(x \mid x_0)p_0(x_0)}{p_t(x)}
\dd x_0.
\]
The first factor is $\grad\log p_t(x \mid x_0)$. The second factor is $p(x_0 \mid X_t=x)$ by Bayes' rule. Hence
\[
\grad\log p_t(x)
= \int \grad\log p_t(x \mid x_0)\,p(x_0 \mid X_t=x)\dd x_0,
\]
which is \eqref{eq:appendix-fisher}.
\end{proof}

\section{Orthogonality Identity Used in the Denoising Loss}
\label{app:orthogonality}

\begin{theorem}[Orthogonality identity]
Let $Z$ and $\varepsilon$ be square-integrable random variables, and let $a(Z)$ be any square-integrable function of $Z$. Then
\begin{equation}
\E_{Z,\varepsilon}\|a(Z)-\varepsilon\|_2^2
= \E_Z\|a(Z)-\E_{\varepsilon\mid Z}[\varepsilon]\|_2^2
+ \E_{Z,\varepsilon}\|\varepsilon-\E_{\varepsilon\mid Z}[\varepsilon]\|_2^2.
\label{eq:appendix-orthogonality}
\end{equation}
\end{theorem}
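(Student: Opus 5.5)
The plan is to split the error around the conditional mean and show that the cross term vanishes. Write $m(Z):=\E_{\varepsilon\mid Z}[\varepsilon]$; it is a measurable function of $Z$ and is square-integrable, since conditional Jensen gives $\E\|m(Z)\|_2^2\le\E\|\varepsilon\|_2^2<\infty$. Decompose
\[
a(Z)-\varepsilon=\bigl(a(Z)-m(Z)\bigr)+\bigl(m(Z)-\varepsilon\bigr),
\]
and expand the squared norm:
\[
\|a(Z)-\varepsilon\|_2^2=\|a(Z)-m(Z)\|_2^2+\|\varepsilon-m(Z)\|_2^2+2\bigl(a(Z)-m(Z)\bigr)^\top\bigl(m(Z)-\varepsilon\bigr).
\]
Taking expectations of the first two terms gives exactly the two terms on the right-hand side of \eqref{eq:appendix-orthogonality}. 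All three terms are integrable: the squares by the square-integrability assumptions and the triangle inequality, and the cross term by Cauchy--Schwarz.

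It remains to show that the cross term has zero expectation. Use the tower property, conditioning on $Z$:
\[
\E\bigl[(a(Z)-m(Z))^\top(m(Z)-\varepsilon)\bigr]=\E\Bigl[(a(Z)-m(Z))^\top\,\E_{\varepsilon\mid Z}\bigl[m(Z)-\varepsilon\bigr]\Bigr].
\]
The factor $a(Z)-m(Z)$ is $Z$-measurable, so it can be pulled out of the inner conditional expectation. The inner conditional expectation equals $m(Z)-m(Z)=0$, so the cross term vanishes.

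The only delicate point is justifying the pull-out step. Pulling a $Z$-measurable factor out of a conditional expectation requires the product to be integrable, and this holds because both factors are in $L^2$ and Cauchy--Schwarz applies. Alternatively, one can say that $\varepsilon-m(Z)$ is orthogonal in $L^2$ to every square-integrable function of $Z$, which is the defining projection property of conditional expectation. Once this is checked, summing the three expectations yields the identity.
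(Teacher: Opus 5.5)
Your proof is correct and follows the paper's argument: you decompose $a(Z)-\varepsilon$ around $\E_{\varepsilon\mid Z}[\varepsilon]$, expand the square, and make the cross term vanish by conditioning on $Z$ and pulling out the $Z$-measurable factor. Your integrability checks (conditional Jensen for $m(Z)\in L^2$, and Cauchy--Schwarz for the cross term and the pull-out step) fill in details the paper leaves implicit.
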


\begin{proof}
Write
\[
a(Z)-\varepsilon
= \bigl(a(Z)-\E_{\varepsilon\mid Z}[\varepsilon]\bigr)
+ \bigl(\E_{\varepsilon\mid Z}[\varepsilon]-\varepsilon\bigr).
\]
Let
\[
A:=a(Z)-\E_{\varepsilon\mid Z}[\varepsilon],
\qquad
B:=\E_{\varepsilon\mid Z}[\varepsilon]-\varepsilon.
\]
Then
\[
\|A+B\|_2^2=\|A\|_2^2+2A^\top B+\|B\|_2^2.
\]
Taking expectation,
\[
\E_{Z,\varepsilon}\|A+B\|_2^2
= \E_{Z,\varepsilon}\|A\|_2^2 + 2\E_{Z,\varepsilon}[A^\top B] + \E_{Z,\varepsilon}\|B\|_2^2.
\]
Now $A$ is measurable with respect to $Z$, and
\[
\E_{\varepsilon\mid Z}[B]
= \E_{\varepsilon\mid Z}[\E_{\varepsilon\mid Z}[\varepsilon]-\varepsilon]
= \E_{\varepsilon\mid Z}[\varepsilon]-\E_{\varepsilon\mid Z}[\varepsilon]
= 0.
\]
Therefore
\[
\E_{Z,\varepsilon}[A^\top B]
= \E_Z\!\left[\E_{\varepsilon\mid Z}[A^\top B]\right]
= \E_Z\!\left[A^\top \E_{\varepsilon\mid Z}[B]\right]
= 0.
\]
Hence
\[
\E_{Z,\varepsilon}\|a(Z)-\varepsilon\|_2^2
= \E_Z\|a(Z)-\E_{\varepsilon\mid Z}[\varepsilon]\|_2^2
+ \E_{Z,\varepsilon}\|\varepsilon-\E_{\varepsilon\mid Z}[\varepsilon]\|_2^2.
\qedhere
\]
\end{proof}

\newpage
\addcontentsline{toc}{section}{References}
\nocite{chen2018neuralode,holderrieth2025introductionflowmatchingdiffusion}
\bibliographystyle{plain}
\bibliography{references}

\end{document}